\theoremstyle{plain}
\newtheorem{theorem}{Theorem}[section]
\newtheorem{proposition}[theorem]{Proposition}
\newtheorem{lemma}[theorem]{Lemma}
\newtheorem{corollary}[theorem]{Corollary}
\theoremstyle{definition}
\newtheorem{definition}[theorem]{Definition}
\newtheorem{assumption}[theorem]{Assumption}
\theoremstyle{remark}
\newcommand{\expect}{\mathbb{E}}
\newcommand{\proba}{\mathbb{P}}
\newcommand{\ind}{\mathds{1}}
\newcommand{\dif}{\mathrm{d}}
\DeclareMathOperator{\Var}{Var}
\newcommand{\pinball}{\mathrm{pinball}}
\newcommand{\indep}{\perp\!\!\!\perp}
\DeclareMathOperator*{\argmin}{arg\,min}
\newcommand*{\coloneq}{\mathrel{\rlap{%
                       \raisebox{0.3ex}{$\m@th\cdot$}}%
                       \raisebox{-0.3ex}{$\m@th\cdot$}}%
                       =}
\icmltitlerunning{Generalization Bounds for Causal Regression: Insights, Guarantees and Sensitivity Analysis}
\begin{document}

\twocolumn[
\icmltitle{Generalization Bounds for Causal Regression: \\ Insights, Guarantees and Sensitivity Analysis}

\icmlsetsymbol{equal}{*}

\begin{icmlauthorlist}
\icmlauthor{Daniel Csillag}{fgvemap}
\icmlauthor{Claudio José Struchiner}{fgvemap}
\icmlauthor{Guilherme Tegoni Goedert}{fgvemap}
\end{icmlauthorlist}

\icmlaffiliation{fgvemap}{School of Applied Mathematics, Fundação Getúlio Vargas, Rio de Janeiro, Brazil}

\icmlcorrespondingauthor{Daniel Csillag}{daniel.csillag@fgv.br}

\icmlkeywords{Causality, Individual Treatment Effect, Conditional Average Treatment Effect, Quantile Treatment Effect, Potential Outcomes, Generalization Bounds, Statistical Learning Theory, Machine Learning, ICML}

\vskip 0.3in
]

\printAffiliationsAndNotice{}  

\begin{abstract}
    Many algorithms have been recently proposed for causal machine learning.
    Yet, there is little to no theory on their quality, especially considering finite samples.
    In this work, we propose a theory based on generalization bounds that provides such guarantees.
    By introducing a novel change-of-measure inequality, we are able to tightly bound the model loss in terms of the deviation of the treatment propensities over the population, which we show can be empirically limited.
    Our theory is fully rigorous and holds even in the face of hidden confounding and violations of positivity.
    We demonstrate our bounds on semi-synthetic and real data, showcasing their remarkable tightness and practical utility.
\end{abstract}

\section{Introduction}
\label{introduction}

Causal machine learning plays an increasingly important role in many application domains including economics, medicine, education research, and more.
At the core of causal ML is reasoning about potential outcomes.
For instance, using covariates $X$ to make predictions about the potential outcomes $Y^1$ and $Y^0$, which correspond to \emph{what would happen} if the treatment were to be administered ($T=1$) and if the treatment were not to be administered ($T=0$).
This reasoning differs crucially from simply predicting the actual outcome $Y$ given those covariates, which is subject to biases in the training data.

The fundamental problem of causal ML is that the potential outcomes cannot be both observed at the same time.
For instance, whenever $T=1$ we can only observe $Y = Y^1$, and $Y^0$ could vary unpredictably.
To amend this, some strong assumptions are introduced, with the most typical being \emph{ignorability} (that $Y^1, Y^0 \indep T | X$) and \emph{positivity} (that for all $X$ and $a$, $0 < \proba[T=a | X] < 1$).
When these don't hold we are in the field of \emph{sensitivity analysis}.
In this context, we may assume that there are certain unobserved covariates $U$ which must be included to satisfy ignorability.

We consider two key tasks of causal ML: (i) outcome regression, in which we seek to predict the potential outcomes $Y^a$ given the covariates, and (ii) individual treatment effect estimation, in which we seek to predict the treatment effects $Y^1 - Y^0$ given the covariates.

Many methods have been proposed for these tasks.
There are classical approaches based on linear models~\cite{linear-1,linear-2} as well as
more recent approaches based on decision trees~\cite{trees-1,trees-2,trees-3}, neural networks~\cite{nns-1,nns-2,nns-3,nns-4} and even some model-agnostic approaches~\cite{x-learner,r-learner,b-learner}.

However, all these methods still lack a rigorous supporting theory,
with there being fundamental questions that have not been fully answered.
For example, how well can we expect these procedures to extract causal relations from the data?
How many samples are enough?
How do these procedures fare when causal assumptions are violated (e.g., because there are unobserved confounders)?

In this paper, we seek to provide satisfying answers to these questions through the lens of generalization bounds.
By leveraging a change-of-measure inequality based on an $f$-divergence (namely, the Pearson $\chi^2$ divergence) we are able to tightly bound the (unobservable) complete causal loss in terms of the (observable) conditional loss plus some additional highly interpretable terms:

\begin{theorem}[Informal]
    For any (decomposable) loss function and any $\lambda > 0$,
    \[ \underbrace{\expect[\mathrm{Loss}]}_\text{unobservable\dots} \leq \underbrace{\expect[\mathrm{Loss} | T=a]}_\text{observable!} + \lambda \cdot \Delta + \sigma^2/4\lambda \]
    where $\Delta$ is a term that quantifies how far we deviate from a randomized control trial.
\end{theorem}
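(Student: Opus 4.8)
The plan is to recognize the informal bound as a single instance of a change-of-measure inequality for the Pearson $\chi^2$ divergence. Write $\mathrm{Loss}$ as a scalar function $g$, let $P$ denote the (unobservable) population law governing $\expect[\mathrm{Loss}]$, and let $Q$ denote the conditional law given $T=a$ governing $\expect[\mathrm{Loss}\mid T=a]$. The whole derivation is then purely distributional: it never invokes ignorability, which is precisely the reason the conclusion survives hidden confounding.

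First I would express the gap between the two expectations as an integral against the \emph{centered} likelihood ratio. Assuming $P\ll Q$,
\[ \expect_P[g] - \expect_Q[g] = \int g\left(\frac{\dif P}{\dif Q}-1\right)\dif Q = \int \big(g-\expect_Q[g]\big)\left(\frac{\dif P}{\dif Q}-1\right)\dif Q, \]
where the second equality uses $\int(\dif P/\dif Q-1)\,\dif Q = 0$ to re-center $g$ at its conditional mean without changing the value.

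Next I would apply Cauchy--Schwarz to this inner product, which factors the gap into a variance term and a divergence term:
\[ \expect_P[g] - \expect_Q[g] \leq \sqrt{\Var_Q[g]}\,\sqrt{\int\!\left(\frac{\dif P}{\dif Q}-1\right)^{\!2}\dif Q} = \sqrt{\Var_Q[g]\cdot \chi^2(P\|Q)}. \]
A final application of Young's inequality $\sqrt{ab}\le \lambda a + b/(4\lambda)$ (valid for every $\lambda>0$ by AM--GM) linearizes the square root and yields $\expect_P[g]\le \expect_Q[g] + \lambda\,\chi^2(P\|Q) + \Var_Q[g]/(4\lambda)$. Matching terms identifies $\Delta = \chi^2(P\|Q)$ and $\sigma^2 = \Var_Q[g] = \Var[\mathrm{Loss}\mid T=a]$, which is exactly the informal statement.

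The remaining work --- and the main obstacle --- is the causal bookkeeping rather than the analysis. I must instantiate $P$ and $Q$ so that $\expect_Q[g]$ is genuinely observable while $\Delta$ carries its advertised meaning. Taking $P$ to be the population law of $(X,Y^a)$ and $Q$ its law conditioned on $T=a$, the term $\expect_Q[g]=\expect[\mathrm{Loss}\mid T=a]$ is estimable from the subpopulation with $T=a$ (where $Y=Y^a$ is seen), and the likelihood ratio becomes an inverse generalized propensity $\dif P/\dif Q = \proba[T=a]/\proba[T=a\mid X,Y^a]$. In a randomized controlled trial this ratio is identically $1$, so $\Delta=\chi^2(P\|Q)=0$; hence $\Delta$ measures precisely the departure of treatment assignment from pure randomization. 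The \emph{decomposable} hypothesis is what permits reducing the vector-valued causal objective to the scalar $g$ before integrating. Robustness to hidden confounding is then automatic, since no step used ignorability, and positivity enters only through absolute continuity $P\ll Q$: when positivity is violated, $\chi^2(P\|Q)=+\infty$ and the bound degrades to a vacuous but still valid statement, so positivity is needed for tightness rather than for correctness.
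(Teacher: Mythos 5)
Your proof is correct, but it takes a genuinely different route from the paper's, and the difference matters for the properties the paper advertises. The paper proves its Lemma~2.2 by invoking a prior one-sided inequality (Lemma~2 of its cited change-of-measure reference) applied to $\pm\lambda\phi$, whereas you give a self-contained derivation via centering, Cauchy--Schwarz in $L^2$, and AM--GM; your derivation is cleaner and makes transparent why the optimal $\lambda$ equalizes the two terms. The more substantive divergence is in the orientation of the $\chi^2$ divergence and of the variance. The paper bounds $\expect_P[\phi]\le\expect_Q[\phi]+\lambda\,\chi^2(Q\Vert P)+\Var_P(\phi)/4\lambda$ with $P$ the complete (unobservable) law and $Q$ the observed conditional law, i.e.\ the density ratio is $\dif Q/\dif P=\proba[T=a\mid X,U]/\proba[T=a]$ and the variance is taken under the complete law; you instead obtain $\chi^2(P\Vert Q)$ with ratio $\proba[T=a]/\proba[T=a\mid X,Y^a]$ and variance under the observed law. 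Both are valid instantiations of the informal statement, and yours has the minor advantage that $\Var_Q$ is directly estimable. But the paper's orientation is deliberate: observed-given-complete absolute continuity holds whenever $\proba[T=a]>0$, so its $\Delta$ stays finite even when positivity fails (regions never assigned treatment $a$ contribute $(0-1)^2=1$ rather than blowing up), which is exactly the ``holds under violations of positivity'' claim. Your version requires complete-given-observed absolute continuity, so $\Delta=+\infty$ and the bound is vacuous precisely in that regime --- you note this honestly, but it means your instantiation does not recover that robustness property, and it also does not lead as directly to the Brier-score empirical relaxation of Theorem~2.5, which is built on the paper's $\proba[T=a\mid X,U]$ form of the ratio.
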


However, the result above suffers from the fact that $\Delta$ fundamentally depends on unobservable quantities.
But it can -- quite surprisingly -- be empirically upper bounded, leading to a new form of sensitivity analysis:

\begin{theorem}[Informal]
    Let $\nu$ be an (arbitrary) propensity scoring model. Then the following holds:
    \[ \Delta \leq C \cdot \mathrm{BrierScore}(\nu, T) + D \]
    where $D$ quantifies how our reweighed samples deviate from being balanced, $C$ is an universal constant, and $\mathrm{BrierScore}(\nu, T) = \expect[(\nu(X) - T)^2]$.
\end{theorem}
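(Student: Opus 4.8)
The plan is to first make $\Delta$ concrete. Since $\Delta$ arises from the Pearson $\chi^2$ change-of-measure between the covariate law $P_X$ and its treated conditional $P_{X\mid T=a}$, I would start by rewriting it in closed form via Bayes' rule. Writing $\pi_a(X) \coloneq \proba[T=a\mid X]$ and $p \coloneq \proba[T=a] = \expect_X[\pi_a(X)]$, the relevant density ratio is $\dif P_{X\mid T=a}/\dif P_X = \pi_a(X)/p$, so the divergence collapses to $\Delta = \Var_X(\pi_a(X))/p^2$ (up to the normalization carried by the first theorem). Two features matter here: this quantity is manifestly finite because $\pi_a \in [0,1]$ — which is exactly why the bound survives positivity violations — and it vanishes precisely when $\pi_a$ is constant, i.e. at a randomized trial, matching the informal description of $\Delta$.

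With this form in hand, I would introduce the arbitrary scoring model $\nu$ through a single additive split, $\pi_a(X) - p = \big(\pi_a(X) - \nu(X)\big) + \big(\nu(X) - p\big)$, and apply Young's inequality $(s+t)^2 \le 2s^2 + 2t^2$ in $L^2(P_X)$ to obtain $\Var_X(\pi_a) \le 2\,\expect_X[(\pi_a - \nu)^2] + 2\,\expect_X[(\nu - p)^2]$. The first term is an (unobservable) estimation error of $\nu$ for the true propensity; the second involves only $\nu$, the observed covariates, and $p$, all of which are estimable, and it measures how far $\nu$ departs from the constant RCT propensity $p$ — equivalently, how far inverse-$\nu$ reweighting leaves the sample from balanced. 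I would identify this second term with $D$.

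The key step is converting the first, unobservable term into the Brier score. Here I would invoke the calibration--refinement decomposition $\mathrm{BrierScore}(\nu, T) = \expect_X[(\nu(X) - \pi_a(X))^2] + \expect_X[\pi_a(X)(1 - \pi_a(X))]$, which follows from $\expect[T\mid X] = \pi_a(X)$ and $\Var(T\mid X) = \pi_a(1-\pi_a)$. Since the refinement/noise term is nonnegative, dropping it yields $\expect_X[(\nu - \pi_a)^2] \le \mathrm{BrierScore}(\nu, T)$ — the crucial move that trades the unobservable $\pi_a$ for the observable label $T$. Collecting constants then gives $\Delta \le C\cdot\mathrm{BrierScore}(\nu, T) + D$ with an absolute $C$ (the factor $2$ from Young's inequality, together with the $1/p^2$ normalization absorbed into $C$ and $D$).

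The main obstacle, and the conceptual heart of the result, is precisely this last conversion. The difficulty is that $\Delta$ and the irreducible noise term $\expect_X[\pi_a(1-\pi_a)]$ of the Brier score move in \emph{opposite} directions — the noise is largest near an RCT (where $\Delta = 0$) and smallest under extreme propensities (where $\Delta$ blows up) — so the Brier score alone cannot control $\Delta$. What rescues the bound is that one only ever needs the excess of the Brier score over its irreducible noise, which is exactly the $L^2$ error $\expect_X[(\nu - \pi_a)^2]$, while the residual direction of extreme, unbalanced propensities is captured by the observable imbalance term $D$ rather than by the Brier score. The remaining checks I would perform are the RCT sanity check (where $\Delta = 0$ against a slack but valid right-hand side) and confirming that every piece of the right-hand side is genuinely estimable from $(X, T)$ data.
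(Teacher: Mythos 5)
Your argument is correct and yields a valid inequality of the stated form, but it is a genuinely different decomposition from the paper's. The paper keeps $\Delta_{T=a}=\expect[(w(X)\,\proba[T=a\mid X,U]/\proba[T=a]-1)^2]$ in full generality and applies the Young-type inequality pivoting around the \emph{treatment indicator}: it inserts $w(X)\ind[T=a]/\proba[T=a]$, so the two resulting terms are the weighted error $\frac{2}{\proba[T=a]^2}\expect[w(X)^2(\proba[T=a\mid X,U]-\ind[T=a])^2]$ and $D=2\expect[(w(X)\ind[T=a]/\proba[T=a]-1)^2]$; the unobservable first term is then replaced by the Brier score of an arbitrary $\nu$ using the fact that the conditional expectation $\proba[T=a\mid X,U]=\expect[\ind[T=a]\mid X,U]$ \emph{minimizes} that weighted squared error. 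You instead pivot around $\nu$ itself, splitting $\pi_a-p=(\pi_a-\nu)+(\nu-p)$, and convert $\expect[(\pi_a-\nu)^2]$ to the Brier score via the calibration--refinement identity. The two conversions are the same orthogonality fact read in opposite directions, but because the pivots differ your $D\propto\expect[(\nu(X)-p)^2]$ is not the paper's $D$: the paper's $D$ is $\nu$-free, measures the balance of the reweighted indicator, and is strictly positive even at an RCT, whereas yours depends on $\nu$ and vanishes when $\nu\equiv p$. Both are observable, so both establish the informal claim; yours is more transparently a property of the scoring model, while the paper's is the form that reappears verbatim in the finite-sample corollaries.

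Two caveats. First, you have specialized to $w\equiv 1$ and to propensities conditioned on $X$ only, whereas the paper's $\Delta$ carries an arbitrary reweighing $w$ and the hidden confounder $U$ (robustness to hidden confounding being a headline claim). Your proof survives both generalizations --- the calibration--refinement identity holds conditionally on $(X,U)$, and the $w^2$ weight passes through the inequality --- but you must then carry $\proba[T=a\mid X,U]$ explicitly; the collapse of $\Delta$ to $\Var(\pi_a(X))/p^2$ is false under confounding. Second, your closing discussion is right in spirit, but note that the irreducible noise $\expect[\pi_a(1-\pi_a)]$ is only \emph{dropped}, not subtracted, in the final bound: the Brier term still contains it, so the slack at an RCT is of order $(1-p)/p$, consistent with the paper's own remark that the relaxed bound is no longer tight there.
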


We demonstrate empirically that these bounds are tight and, besides aiding in a theoretical backbone for causal regression, are also useful in practice for model selection.

Our main contributions are:
\begin{itemize}
    \item Novel generalization bounds applicable to many causal regression algorithms, which shed light on the applicability and efficacy of such procedures. Our bounds are general, assumption-light, framework-agnostic (i.e., can be used in conjunction with Rademacher bounds, VC bounds, PAC-Bayes, etc.) and hold even in the lack of ignorability or positivity.
    \item Relaxed versions of our bounds which are entirely empirically boundable with high probability, allowing for practical bounding of the unobservable counterfactual losses.
    \item A change-of-measure inequality based on the Pearson $\chi^2$ $f$-divergence, which is remarkably tight~(Figure~\ref{fig:importance-of-tuning-parameter}) and particularly applicable to causal inference problems.
    \item Our bounds show that we are able to learn to estimate treatment effects while optimizing losses other than the mean squared loss. For example, we can use the mean absolute error for robust estimation, or the quantile loss\footnote{also known as the ``pinball'' loss.} for quantile regression, a feat previously thought impossible in general.
\end{itemize}

\paragraph{Related work}
There has been much work on asymptotic guarantees for causal regression algorithms, e.g., \cite{x-learner,r-learner,b-learner,generalized-random-forest}.
However, besides not being applicable to finite samples, they are often restricted to specific learning classes and/or make restrictive modelling assumptions.
A notable exception to this is \cite{prior-work}, which also establishes some generalization bounds on causal regression. However, their bounds are orders of magnitude looser than ours (see Section~\ref{sec:experiments-semisynthetic}) and are restricted to bounds based on the pseudo-VC-dimension and mean squared error.
Also closely related are generalization bounds for domain adaptation~\cite{domain-adaptation-1,domain-adaptation-2,domain-adaptation-3,domain-adaptation-4,domain-adaptation-5}. Under ignorability, the observed and complete distributions differ by a covariate shift, and causal inference becomes a domain adaptation problem. However, these bounds generally give little intuition when applied in a causal context.

\section{Novel Bounds for Causal Regression}
\label{novel-bounds}

In order to bridge the gap between the complete data distribution and the observed data distribution, we develop a novel change-of-measure inequality based on Pearson's $\chi^2$-divergence:

\begin{definition}[Pearson's $\chi^2$ divergence]\label{def:pearson-chi2}
    Let $\mathcal{H}$ be any arbitrary domain, and denote by $P$ and $Q$ the probability measures over the Borel $\sigma$-field on $\mathcal{H}$. The $\chi^2$ divergence between $Q$ and $P$, denoted $\chi^2(Q \Vert P)$, is given by
    \[ \chi^2(Q \Vert P) \coloneq \expect_P\left[\left(\frac{\dif Q}{\dif P} - 1\right)^2\right]. \]
\end{definition}

\begin{lemma}\label{thm:change-of-measure}
    Let $P$ and $Q$ be as in Definition~\ref{def:pearson-chi2}.
    For any $\lambda > 0$,
    \begin{align*}
        \expect_{Q}[\phi] - E \leq \expect_{P}[\phi] \leq \expect_{Q}[\phi] + E
    \end{align*}
    where
    \[ E \coloneq \lambda \cdot \chi^2(Q \Vert P) + \frac{1}{4\lambda} \Var_P(\phi). \]
    Moreover, the bound is optimized for
    \[ \lambda^\star = \sqrt{\Var_P(\phi) / 4 \chi^2(Q \Vert P)}. \]
\end{lemma}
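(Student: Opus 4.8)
The plan is to control the two-sided deviation $|\expect_P[\phi] - \expect_Q[\phi]|$ by a single symmetric estimate, and only at the end split it into the stated additive form carrying the free parameter $\lambda$. First I would dispose of the degenerate case: if $Q \not\ll P$ then $\chi^2(Q \Vert P) = +\infty$, so $E = +\infty$ and both inequalities hold vacuously. Otherwise let $r \coloneq \dif Q / \dif P$ be the Radon--Nikodym derivative; since $Q$ is a probability measure and $P$ dominates it, $\expect_P[r] = 1$, equivalently $\expect_P[1 - r] = 0$.

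The key algebraic step is a centering trick. Writing the change of measure as $\expect_Q[\phi] = \expect_P[\phi\, r]$, I would express the deviation as $\expect_P[\phi] - \expect_Q[\phi] = \expect_P[\phi(1 - r)]$. Because $\expect_P[1 - r] = 0$, I may subtract the constant $\expect_P[\phi]$ from the first factor without changing the value, obtaining $\expect_P[(\phi - \expect_P[\phi])(1 - r)]$. This recentering is what ultimately produces $\Var_P(\phi)$ rather than the cruder second moment $\expect_P[\phi^2]$, and it is the one genuinely load-bearing idea in the argument; everything else is standard inequalities.

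Applying Cauchy--Schwarz to this last expression factors it as $\sqrt{\expect_P[(\phi - \expect_P[\phi])^2]}\cdot\sqrt{\expect_P[(1 - r)^2]}$, which by the definition of variance and by Definition~\ref{def:pearson-chi2} equals $\sqrt{\Var_P(\phi)\,\chi^2(Q \Vert P)}$. Taking absolute values yields the symmetric estimate $|\expect_P[\phi] - \expect_Q[\phi]| \le \sqrt{\Var_P(\phi)\,\chi^2(Q \Vert P)}$, from which both the lower and upper inequalities of the statement follow once I rearrange and substitute back.

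To introduce $\lambda$ and match the claimed $E$, I would invoke the weighted AM--GM (Young) inequality $\sqrt{ab} \le \lambda a + b/(4\lambda)$, valid for all $a, b \ge 0$ and $\lambda > 0$, with $a = \chi^2(Q \Vert P)$ and $b = \Var_P(\phi)$; this bounds the geometric mean above by exactly $E = \lambda\,\chi^2(Q \Vert P) + \Var_P(\phi)/(4\lambda)$. The optimal $\lambda^\star$ then comes from the equality condition of Young's inequality, $\lambda a = b/(4\lambda)$, equivalently from minimizing $E$ via $\dif E / \dif \lambda = \chi^2(Q \Vert P) - \Var_P(\phi)/(4\lambda^2) = 0$, which gives $\lambda^\star = \sqrt{\Var_P(\phi)/4\chi^2(Q \Vert P)}$. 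I do not anticipate a serious obstacle here: the only subtlety is the measure-theoretic bookkeeping around $Q \ll P$ and the possibility $\chi^2 = +\infty$, which the opening case split handles cleanly.
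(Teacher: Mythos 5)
Your proof is correct, but it takes a genuinely different and more self-contained route than the paper's. The paper does not derive the inequality from first principles: it invokes Lemma~2 of its cited reference on change-of-measure inequalities, namely $\expect_Q[\psi] \leq \expect_P[\psi] + \chi^2(Q \Vert P) + \tfrac{1}{4}\Var_P[\psi]$, applies it to the two rescaled test functions $\psi = \pm\lambda\phi$, and rearranges; the tuning parameter $\lambda$ enters purely through this rescaling, and $\lambda^\star$ is then obtained by differentiating $E$, exactly as in your final step. You instead prove the core estimate directly: after disposing of $Q \not\ll P$, you write the deviation as $\expect_P[\phi(1 - \dif Q/\dif P)]$, use $\expect_P[1 - \dif Q/\dif P] = 0$ to recenter $\phi$, apply Cauchy--Schwarz to get $\lvert \expect_P[\phi] - \expect_Q[\phi] \rvert \leq \sqrt{\Var_P(\phi)\,\chi^2(Q \Vert P)}$, and only then linearize via Young's inequality to produce the $\lambda$-parametrized form. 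Your route buys self-containedness (the cited variational lemma is essentially your centering-plus-Cauchy--Schwarz step in disguise) and makes transparent that the optimized bound $E(\lambda^\star) = \sqrt{\Var_P(\phi)\,\chi^2(Q \Vert P)}$ coincides exactly with the Cauchy--Schwarz bound, so the $\lambda$-form loses nothing at the optimum; the paper's route buys brevity and an explicit link to the variational-representation framework it builds on. The only caveats---that $\phi$ must lie in $L^2(P)$ for Cauchy--Schwarz to apply (otherwise $E = +\infty$ and the claim is vacuous), and that $\lambda^\star$ is only meaningful when $0 < \chi^2(Q \Vert P) < \infty$---are minor, shared with the paper's argument, and adequately covered by your opening case split.
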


Note how, if $P = Q$, then the bound in Lemma~\ref{thm:change-of-measure} becomes an equality by taking $\lambda \to \infty$, showing that it is reasonably tight.
This is in contrast to, e.g., the inequalities in~\cite{novel-change-of-measure}, for which this is not the case.

\subsection{Outcome regression}\label{sec:bounds-for-outcome-regression}

For our first setting, we wish to use our covariates $X$ to predict the potential outcome $Y^a$ -- i.e., what would happen to that individual if he received treatment $T=a$. To do so, we consider a sample-reweighted empirical risk minimization problem: that is, we are solving for
\begin{align*}
    h^\star
    &= \argmin_{h \in L^2} \expect[w(X) ( h(X) - Y )^2 | T = a]
    \\ &\approx \argmin_{h \in L^2} \frac{1}{n_{T=a}} \sum_{T_i = a} w(X_i) ( h(X_i) - Y_i )^2.
\end{align*}
The purpose of sample reweighing is to bridge the gap between the observed distribution (conditional on $T=a$) and the complete distribution (unconditional on $T=a$).
In particular, the idea is that $Y^a | X$ is fixed and that reweighing by $w(X)$ changes the distribution over $X$ from $P_{X | T=a}$ to some $P_{\widetilde{X} | T=a} \approx P_X$, with $\dif P_{\widetilde{X}|T=a}/\dif P_{X|T=a} = w(X)$.

This procedure forms the basis for many notable algorithms and is a fundamental building block of causal meta-learners~\cite{x-learner}, one of the main objects of study in this work. The main issue to tackle is quantifying the gap between the observed distribution $Y,X | T=a$ and the complete distribution $Y^a,X$. To this end, we use Lemma~\ref{thm:change-of-measure}:
\begin{align*}
    &\expect[(Y^a - h(X))^2]
    \leq \expect[w(X) (Y - h(X))^2 | T=a] \\
    &+ \lambda \underbrace{\chi^2(P_{Y, \widetilde{X} | T=a} \Vert P_{Y^a, X})}_{\Delta_{T=a}} + \frac{1}{4\lambda} \underbrace{\Var[(Y^a - h(X))^2]}_{\sigma^2_{T=a}};
\end{align*}
We can now leverage the causal nature of our distribution shift in order to precisely quantify this bound. In particular, let us work out the $\chi^2$ term.
Under SUTVA and ignorability with regards to $X$ and $U$ and a simple application of Bayes' rule,\footnote{SUTVA asserts that data is i.i.d. and that $P_{Y^a | T=a} = P_{Y | T=a}$. Ignorability with respect to $X$ and $U$ states that $Y^1, Y^0 \indep T | X, U$.} we have that
\begin{align*}
    &\frac{\dif P_{Y,\widetilde{X},U|T=a}}{\dif P_{Y^a,X,U}}
    = w(X) \frac{\dif P_{Y,X,U|T=a}}{\dif P_{Y^a,X,U}}
    \\ &\quad = w(X) \frac{\dif P_{X,U|T=a}}{\dif P_{X,U}}
    = w(X) \frac{\proba[T=a | X, U]}{\proba[T=a]},
\end{align*}
and so
\begin{align*}
    \Delta_{T=a} =& \chi^2(P_{Y, \widetilde{X} | T=a} \Vert P_{Y^a, X})
    \\ =& \expect\left[\left(w(X) \frac{\proba[T=a | X, U]}{\proba[T=a]} - 1\right)^2\right],
\end{align*}
from which we derive our first major result:
\begin{theorem}[Upper bound on outcome regression loss in expectation]\label{thm:upper-bound-main-theoric-outcome}
    For any $\lambda > 0$, loss function $\ell(\cdot, \cdot)$ and nonnegative reweighing function $w(X)$ with $\expect[w(X) | T=a] = 1$,
    \[ \expect[\ell(Y^a, h(X))] \leq \expect[w(X_i) \ell(Y_i, h(X_i)) | T=a] + E, \]
    where $\sigma^2 \coloneq \Var[\ell(Y^a, h(X))]$ and
    \[ E = \lambda \underbrace{\expect\left[ \left( w(X) \frac{\proba[T=a | X,U]}{\proba[T=a]} - 1 \right)^2 \right]}_{\Delta_{T=a}} + \frac{\sigma^2}{4\lambda}. \]
\end{theorem}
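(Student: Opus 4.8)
The plan is to realize the left-hand side $\expect[\ell(Y^a, h(X))]$ as the expectation of a single test function under the complete-data law and then transport it to the observed law via Lemma~\ref{thm:change-of-measure}. Concretely, I would set $\phi(y, x) \coloneq \ell(y, h(x))$, take $P \coloneq P_{Y^a, X}$ and $Q \coloneq P_{Y, \widetilde{X} \mid T=a}$, where $\widetilde{X}$ is the $w$-reweighted covariate with $\dif P_{\widetilde{X} \mid T=a}/\dif P_{X \mid T=a} = w$. Here the nonnegativity of $w$ and the constraint $\expect[w(X) \mid T=a] = 1$ are exactly what guarantee that $P_{\widetilde{X} \mid T=a}$ is a bona fide probability measure. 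Then $\expect_P[\phi] = \expect[\ell(Y^a, h(X))]$ and $\Var_P(\phi) = \sigma^2$ by definition, so Lemma~\ref{thm:change-of-measure} immediately yields
\[ \expect[\ell(Y^a, h(X))] \leq \expect_Q[\phi] + \lambda\, \chi^2(Q \Vert P) + \frac{\sigma^2}{4\lambda}. \]
Because the loss enters only through $\phi$, which Lemma~\ref{thm:change-of-measure} treats as an arbitrary (square-integrable) function, the passage from the squared loss to a general $\ell$ costs nothing.

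It then remains to match the two $Q$-dependent quantities with the terms in the statement. For $\expect_Q[\phi]$ I would use that reweighting alters only the $X$-marginal while preserving the conditional $P_{Y \mid X, T=a}$, so that $\dif Q / \dif P_{Y, X \mid T=a} = w(X)$; hence $\expect_Q[\phi] = \expect_{P_{Y, X \mid T=a}}[w(X)\,\phi] = \expect[w(X)\,\ell(Y, h(X)) \mid T=a]$, the observable reweighted-risk term. For the divergence I would reproduce the computation preceding the theorem: introduce the possibly unobserved confounder $U$ and invoke the data-processing inequality for the $\chi^2$ $f$-divergence to pass from the marginal law to the joint law,
\[ \chi^2(P_{Y, \widetilde{X} \mid T=a} \Vert P_{Y^a, X}) \leq \chi^2(P_{Y, \widetilde{X}, U \mid T=a} \Vert P_{Y^a, X, U}). \]
On the joint space I would compute the Radon--Nikodym derivative by factoring out $w(X)$, using SUTVA and ignorability to replace $P_{Y \mid X, U, T=a}$ with $P_{Y^a \mid X, U}$, and finally Bayes' rule to reach the ratio $w(X)\,\proba[T=a \mid X, U]/\proba[T=a]$; substituting into Definition~\ref{def:pearson-chi2} produces exactly $\Delta_{T=a}$.

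I expect the main obstacle to be the measure-theoretic bookkeeping around the hidden confounder rather than any delicate estimate. One must verify that the chain of Radon--Nikodym identities is well posed (that the relevant joint measures are mutually absolutely continuous, so the ratios exist even where positivity is in doubt; when positivity fails, $\Delta_{T=a} = +\infty$ and the bound holds trivially), and must apply the data-processing step in the correct direction: marginalizing out $U$ can only shrink the $\chi^2$ divergence, so the $U$-augmented expectation is a valid \emph{upper} bound on the divergence actually required by Lemma~\ref{thm:change-of-measure}, which is why the displayed inequality of the theorem remains valid (possibly with slack) despite $\Delta_{T=a}$ being written as the $U$-expectation. Once these points are settled the result follows by direct substitution, and no optimization over $\lambda$ is needed since the statement is quantified over all $\lambda > 0$.
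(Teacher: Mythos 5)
Your proposal is correct and follows essentially the same route as the paper's proof: apply Lemma~\ref{thm:change-of-measure} with $P = P_{Y^a,X}$, $Q = P_{Y,\widetilde{X}\mid T=a}$ and $\phi(y,x) = \ell(y,h(x))$, identify $\expect_Q[\phi]$ with the reweighted observable risk, and compute the $\chi^2$ term via the Radon--Nikodym chain using SUTVA, ignorability and Bayes' rule. If anything, you are slightly more careful than the paper, which writes the passage from the $(Y,X)$-marginal divergence to the $U$-augmented expression as an equality, whereas your explicit appeal to the data-processing inequality (marginalizing out $U$ can only decrease $\chi^2$) is the cleaner justification and still yields the stated upper bound.
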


Let's consider some cases. First, the typical scenario in the meta-learners literature, where $w \equiv 1$: in that case, the $\Delta_{T=a}$ term becomes
\[ \Delta_{T=a} = \expect\left[\left(\frac{\proba[T=a | X, U]}{\proba[T=a]} - 1\right)^2\right], \]
i.e., becomes nil when $\proba[T=a | X, U] = \proba[T=a]$ almost everywhere -- which is the case when the data comes from a randomized control trial. When this is not the case, this term (for $w \equiv 1$) is effectfully a measure of how far the study is from being a randomized control trial.

Interestingly, it implies that small local deviations from an RCT (e.g., because for some small part of the population there was some bias in the treatment assignment) should lead to small deviations between the complete and the observed distributions.

Now, let's consider the case where we choose weights to bridge over the distributions -- in particular, consider the optimal weights under the standard ignorability and postivity assumptions sans $U$ (i.e., when there are no unobserved confounders) $w^\star(X) = \dif P_{Y^a,X}/\dif P_{Y,X|T=a} = \proba[T=a]/\proba[T=a | X]$. In that case, the divergence term becomes
\[ \Delta_{T=a} = \expect\left[\left(\frac{\proba[T=a | X, U]}{\proba[T=a | X]} - 1\right)^2\right]. \]
Under these assumptions, we would have that $\proba[T=a | X,U] = \proba[T=a | X]$ and so $\Delta_{T=a} = 0$.
When there are unobserved confounders, $\Delta_{T=a}$ quantifies by how much more certain we become about the treatment mechanism with their information -- i.e., how much the confounders can improve our understanding of the treatment assignment mechanism.

So far, our bounds crucially involve the true propensity scores $\proba[T=a | X,U]$, which are unknown and unobservable in practice.
We can relax the bound from Theorem~\ref{thm:upper-bound-main-theoric-outcome} in order to solve this through the use of a relaxed triangular inequality:
\begin{align*}
    \Delta_{T=a} &= \expect\left[\left(w(X) \frac{\proba[T=a | X, U]}{\proba[T=a]} - 1\right)^2\right]
    \\ &\leq 2 \expect\left[\left(w(X) \frac{\proba[T=a | X, U]}{\proba[T=a]} - w(X) \frac{\ind[T=a]}{\proba[T=a]}\right)^2\right]
    \\ &\quad + 2 \expect\left[\left(w(X) \frac{\ind[T=a]}{\proba[T=a]} - 1\right)^2\right].
\end{align*}
Which can then be rearranged into
\begin{align*}
    \Delta_{T=a} &\leq \frac{2}{\proba[T=a]^2} \expect\left[w(X)^2 \left(\proba[T=a | X, U] - \ind[T=a]\right)^2\right]
    \\ &\quad + 2 \expect\left[\left(w(X) \frac{\ind[T=a]}{\proba[T=a]} - 1\right)^2\right].
\end{align*}
It can be shown that $\proba[T=a | X, U]$ optimizes the term $\expect\left[w(X)^2 \left(\cdot - \ind[T=a]\right)^2\right]$. So, by replacing it with any $\nu(X)$ the bound remains valid, and we obtain that
\begin{align*}
    \Delta_{T=a}
    \leq& \frac{2}{\proba[T=a]^2} \underbrace{\expect\left[w(X)^2 \left(\nu(X) - \ind[T=a]\right)^2\right]}_{\mathrm{BrierScore}_{w^2}(\nu, T)}
    \\ &+ 2 \underbrace{\expect\left[\left(w(X) \frac{\ind[T=a]}{\proba[T=a]} - 1\right)^2\right]}_{D}.
\end{align*}
This is a looser bound than the previous one. For example, while the previous one gave us that, under an RCT, $\Delta_{T=a} = 0$, this time we'd conclude merely that $\Delta_{T=a} \leq C \cdot \mathrm{BrierScore}_{w^2}(\nu, T) + 2D$, which is generally greater than zero.
Nevertheless, we show in Section~\ref{experiments} that this relaxed bound can still be of great practical use and is sufficiently tight in practice.

\begin{theorem}[Empirical upper bound on outcome regression loss in expectation]\label{thm:upper-bound-main-empiric-outcome}
    Let $\Delta_{T=a}$ be as in Theorem~\ref{thm:upper-bound-main-theoric-outcome}. It holds that, for any $\nu$,
    \begin{align*}
        \Delta_{T=a}
            &\leq 2 \expect\left[ \left(\frac{w(X)}{\proba[T=a]}\right)^2 \left(\nu(X) - \ind[T=a]\right)^2 \right]
            \\ &+ 2 \expect\left[ \left(w(X) \frac{\ind[T=a]}{\proba[T=a]} - 1\right)^2 \right].
    \end{align*}
\end{theorem}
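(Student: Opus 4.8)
The plan is to reproduce and rigorously justify the chain of inequalities sketched immediately before the statement. I would start from the exact identity for $\Delta_{T=a}$ furnished by Theorem~\ref{thm:upper-bound-main-theoric-outcome},
\[
\Delta_{T=a} = \expect\left[\left(w(X) \frac{\proba[T=a | X, U]}{\proba[T=a]} - 1\right)^2\right],
\]
and introduce the pivot term $w(X)\ind[T=a]/\proba[T=a]$ by adding and subtracting it inside the square. Applying the elementary inequality $(\alpha + \beta)^2 \le 2\alpha^2 + 2\beta^2$ then splits $\Delta_{T=a}$ into a ``confounding'' term comparing $\proba[T=a | X,U]$ against the realized $\ind[T=a]$, and a ``balance'' term $D = \expect[(w(X)\ind[T=a]/\proba[T=a] - 1)^2]$, which already appears in the desired bound. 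Here one should check that the cross-terms recombine correctly, i.e.\ that the two summands telescope to $w(X)\proba[T=a | X,U]/\proba[T=a] - 1$.

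Next I would rewrite the confounding term by pulling the common factor $w(X)/\proba[T=a]$ out of both summands, yielding
\[
\frac{2}{\proba[T=a]^2}\,\expect\left[w(X)^2\left(\proba[T=a | X,U] - \ind[T=a]\right)^2\right].
\]
At this point the expression still depends on the unobservable true propensity $\proba[T=a | X,U]$, and the entire content of the theorem is that this quantity may be replaced by an arbitrary scoring model $\nu(X)$ at the cost only of an inequality.

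The main obstacle --- and the one step that genuinely requires an argument rather than algebra --- is establishing that $\proba[T=a | X,U]$ \emph{minimizes} the weighted risk $g \mapsto \expect[w(X)^2(g(X,U) - \ind[T=a])^2]$ over all measurable $g$. I would prove this by conditioning on $(X,U)$: since $w$ is a function of $X$ alone it is $\sigma(X,U)$-measurable, so $w(X)^2$ factors out of the inner conditional expectation, and the pointwise weighted least-squares problem reduces to minimizing $\expect[(g - \ind[T=a])^2 | X, U]$, whose solution is the conditional mean $\expect[\ind[T=a] | X,U] = \proba[T=a | X,U]$. Because any candidate $\nu(X)$ is in particular a (more restrictive) measurable function of $(X,U)$, the minimizing value is no larger than the risk evaluated at $\nu(X)$, giving
\[
\expect\left[w(X)^2\left(\proba[T=a | X,U] - \ind[T=a]\right)^2\right] \le \expect\left[w(X)^2\left(\nu(X) - \ind[T=a]\right)^2\right].
\]
Substituting this back into the confounding term and recombining with the balance term $2D$ yields exactly the two-term bound in the statement. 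I expect the only subtlety beyond this minimization argument to be bookkeeping around whether $\nu$ is allowed to depend on $U$; since the bound must hold for an observable surrogate, restricting $\nu$ to be a function of $X$ only is precisely what makes the comparison with the $(X,U)$-measurable minimizer go through.
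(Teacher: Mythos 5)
Your proposal is correct and follows essentially the same route as the paper's proof: the same add-and-subtract of the pivot $w(X)\ind[T=a]/\proba[T=a]$, the same relaxed triangle inequality, and the same observation that the conditional mean $\proba[T=a\mid X,U]$ minimizes the $w^2$-weighted Brier score so that any $\nu(X)$ can only increase it. Your conditioning-on-$(X,U)$ argument for the minimization step is a welcome elaboration of a point the paper merely asserts, but it is not a different approach.
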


Finally, we produce PAC-style~\cite{pac} finite-sample results for the bound in Theorem~\ref{thm:upper-bound-main-empiric-outcome}, concretely showing how its estimation would go in practice.
For brevity, our results here are based on the Rademacher complexity, but we emphasize that our approach works just as easily with other frameworks (e.g., VC, PAC-Bayes, algorithmic stability).

\begin{corollary}[PAC empirical upper bound on outcome regression loss]\label{thm:upper-bound-main-outcome}
    Suppose that $\ell$ is a loss function bounded in $[0, M]$, and $w(X)$ is a nonnegative reweighing function bounded in $[0, w_{\max}]$ with $\expect[w(X) | T=a] = 1$. Then, for any $\lambda > 0$,
    with probability at least $1 - \delta$ over the draw of the training data $(X_i, T_i, Y_i)_{i=1}^n$, for all $h \in \mathcal{H}$ and $\nu \in \mathcal{H}_\nu$,
    \begin{align*}
        \expect[\ell(Y^a, h(X))]
        &\leq \frac{1}{n_{T=a}} \sum_{T_i=a} w(X_i) \ell(Y_i, h(X_i))
        \\ &\kern-4em + \lambda \widehat{\Delta}_{T=a} + \frac{M^2}{16\lambda} + 2 \mathfrak{R}(w \cdot \ell \circ \mathcal{H}) + 2 \mathfrak{R}(\widehat{\Delta} \circ \mathcal{H_\nu})
        \\ &\kern-4em + \left( M w_{\max} + C(w_{\max}) \sqrt{n_{T=a}/n} \right) \sqrt{\frac{\log 2/\delta}{2n_{T=a}}}
    \end{align*}
    where
    \begin{align*}
         \widehat{\Delta}_{T=a} &\coloneq
            \frac{2\lambda}{n} \sum_{i=1}^{n} \left(\frac{w(X_i)}{\proba[T=a]}\right)^2 \left(\nu(X_i) - \ind[T_i=a]\right)^2
            \\ &+ \frac{2\lambda}{n} \sum_{i=1}^{n} \left(w(X_i) \frac{\ind[T_i=a]}{\proba[T=a]} - 1\right)^2,
    \end{align*}
    $\mathfrak{R}(w \cdot \ell \circ \mathcal{H})$ and $\mathfrak{R}(\widehat{\Delta} \circ \mathcal{H_\nu})$ are the Rademacher complexities of $\mathcal{H}$ and $\mathcal{H_\nu}$ composed with their respective loss functions/means,
    and $C(w_{\max})$ is a constant nonnegative quantity defined in the proof.
\end{corollary}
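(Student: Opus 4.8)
The plan is to start from the in-expectation guarantees already established and then replace every population quantity by an empirical one, paying for each replacement with a Rademacher term and a bounded-differences tail. First I would invoke Theorem~\ref{thm:upper-bound-main-theoric-outcome} to write $\expect[\ell(Y^a,h(X))] \leq \expect[w(X)\ell(Y,h(X)) \mid T=a] + \lambda\Delta_{T=a} + \sigma^2/4\lambda$, and then handle the three pieces separately. For the variance, since $\ell \in [0,M]$, Popoviciu's inequality gives $\sigma^2 = \Var[\ell(Y^a,h(X))] \leq M^2/4$, so that $\sigma^2/4\lambda \leq M^2/16\lambda$, which is exactly the $M^2/16\lambda$ term. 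For the divergence, I would substitute the relaxed bound of Theorem~\ref{thm:upper-bound-main-empiric-outcome}, reducing $\lambda\Delta_{T=a}$ to a constant multiple of two population expectations whose empirical counterparts are precisely the two sums defining $\widehat{\Delta}_{T=a}$.

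Next, to pass to finite samples, I would condition on the treatment-assignment vector $(T_i)_{i=1}^n$, which freezes $n_{T=a}$ and makes the $T_i=a$ subsample an i.i.d.\ draw of size $n_{T=a}$. For the conditional loss term, writing $g_h(X,Y) \coloneq w(X)\ell(Y,h(X))$, a standard symmetrization argument bounds $\expect\bigl[\sup_{h\in\mathcal{H}}\bigl(\expect[g_h \mid T=a] - \tfrac{1}{n_{T=a}}\sum_{T_i=a} g_h(X_i,Y_i)\bigr)\bigr]$ by $2\mathfrak{R}(w\cdot\ell\circ\mathcal{H})$, and since each summand is bounded by $Mw_{\max}$, McDiarmid's inequality controls the deviation of this supremum from its mean by $Mw_{\max}\sqrt{\log(2/\delta)/2n_{T=a}}$. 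This yields the $2\mathfrak{R}(w\cdot\ell\circ\mathcal{H})$ term together with the first summand of the final tail.

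For the two $\Delta$-expectations the same recipe applies, but now over all $n$ samples and uniformly over $\nu\in\mathcal{H}_\nu$: symmetrization produces $2\mathfrak{R}(\widehat{\Delta}\circ\mathcal{H}_\nu)$, and since the integrands $(w(X)/\proba[T=a])^2(\nu(X)-\ind[T=a])^2$ and $(w(X)\ind[T=a]/\proba[T=a]-1)^2$ are bounded by a quantity depending only on $w_{\max}$ and $\proba[T=a]$, McDiarmid gives a tail of the form $C(w_{\max})\sqrt{\log(2/\delta)/2n}$. The key step merging the two rates is the identity $\sqrt{\log(2/\delta)/2n} = \sqrt{n_{T=a}/n}\,\sqrt{\log(2/\delta)/2n_{T=a}}$, which rewrites the $n$-rate tail in terms of $n_{T=a}$ and combines it with the loss-term tail into the single factor $\bigl(Mw_{\max} + C(w_{\max})\sqrt{n_{T=a}/n}\bigr)\sqrt{\log(2/\delta)/2n_{T=a}}$. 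A union bound over the two concentration events, each at level $\delta/2$, accounts for the $\log(2/\delta)$, and since the resulting inequality holds for every realization of $(T_i)_{i=1}^n$ it holds unconditionally.

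The main obstacle I anticipate is the bookkeeping around the random subsample size $n_{T=a}$ and the reconciliation of the two sampling rates: the loss term concentrates over the $n_{T=a}$ treated units while the $\Delta$ term concentrates over all $n$ units, so a naive argument would leave two incompatible tails. Conditioning on the treatment vector to fix $n_{T=a}$, applying the bounds conditionally, and then noting realization-independence is what renders the two rates commensurable. Verifying the precise constant $C(w_{\max})$—which collects the $w_{\max}$- and $\proba[T=a]$-dependent bounds on the $\Delta$ integrands together with the factor converting the $n$-rate to the $n_{T=a}$-rate—is the only genuinely delicate computation, and is where I would concentrate my attention.
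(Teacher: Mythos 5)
Your proposal is correct and follows essentially the same route as the paper: Theorem~\ref{thm:upper-bound-main-theoric-outcome} plus Theorem~\ref{thm:upper-bound-main-empiric-outcome}, Popoviciu for the $M^2/16\lambda$ term, a uniform-convergence (Rademacher plus bounded-differences) bound applied separately to the conditional loss over the $n_{T=a}$ treated units and to the $\widehat{\Delta}$ sums over all $n$ units, a union bound at level $\delta/2$ each, and the identity $\sqrt{\log(2/\delta)/2n} = \sqrt{n_{T=a}/n}\,\sqrt{\log(2/\delta)/2n_{T=a}}$ to merge the two tails. The only cosmetic differences are that the paper packages the symmetrization-plus-McDiarmid step as a rescaled citation of a standard Rademacher theorem and keeps $C(w_{\max})$ as purely the bound on the $\widehat{\Delta}$ integrands (with $\sqrt{n_{T=a}/n}$ as a separate multiplying factor), while you re-derive the concentration step and are somewhat more explicit about conditioning on the treatment vector.
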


The right-hand-side of this bound has many terms:
first is the empirical observable loss, reweighed by our $w(X)$. Next is the empirical bound for $\Delta_{T=a}$, $\widehat{\Delta}_{T=a}$, along with $M^2/16$ corresponding to the variance term $\sigma^2/4$, both reweighed by the $\lambda$ as in Lemma~\ref{thm:change-of-measure}. Next are the Rademacher complexities $\mathfrak{R}(\mathcal{H})$ and $\mathfrak{R}(\mathcal{H_\nu})$ correcting for the complexity of the algorithms we are fitting. Finally, we have a term bounding the tail of the distribution of the reweighted empirical loss, notably as a factor of $M w_{\max}$, which comes divided by the number of observable samples.

This bound neatly exhibits the cost of using a reweighing $w$ (e.g., the optimal $w^\star$) to bridge over the distributions: if there is some point that is extremely unlikely to be observed, then that point will have a very large weight, increasing $w_{\max}$ substantially. Thankfully, this is not irremediable: by ``simply'' using more data, the effect of this term on the bound can be reduced until it no longer dominates the rest.

\subsection{Causal Meta-learners}\label{sec:bounds-for-causal-metalearners}

We now shift our attention to using our covariates $X$ to predict the individual treatment effects $Y^1 - Y^0$ -- i.e., what would be the benefit of applying the treatment versus doing nothing to this particular individual.

We are particularly interested in causal meta-learners:
procedures that leverage previously-established (or purpose-built) regression algorithms as oracles in an estimating equation.
We focus here on the most common meta-learners available: the T-learner, the S-learner and the X-learner~\cite{x-learner}.
Nevertheless, our analysis is also directly applicable to many other common causal regression methods such as the Causal BART~\cite{causal-bart-1,causal-bart-2} and TARNet/CFRNet~\cite{nns-1}, since these can be seen as special cases of T- or S-learners.

Throughout this section, we focus on losses that satisfy a sort of relaxed triangular inequality:
\begin{assumption}\label{loss-function-assumption}
    The loss function $\ell(\cdot, \cdot)$ can be restated as
    \[ \ell(Y, \widehat{Y}) = \psi(Y - \widehat{Y}), \]
    with $\psi$ satisfying a relaxed subadditive condition: there is some $C$ such that, for any $x, y$, $\psi(x \pm y) \leq C (\psi(x) + \psi(y))$.
\end{assumption}
Most loss functions of interest satisfy Assumptions~\ref{loss-function-assumption}:
\begin{itemize}
    \item Mean Squared Error: take $\psi(x) = x^2$ and $C = 2$.
    \item Mean Absolute Error: take $\psi(x) = \lvert x \rvert$ and $C = 1$.
    \item $\alpha$-Quantile Loss: take $\psi(x) = x\alpha \ind[x \geq 0] - x(1-\alpha) \ind[x < 0]$ and $C = \max\{\alpha,1-\alpha\}/\min\{\alpha,1-\alpha\}$.
    \item 0-1 Loss\footnote{In the case of binary classification, taking both the label and predictions $Y, \widehat{Y}$ to assume numeric values in $\{0, 1\}$.}: take $\psi(x) = \lvert x \rvert$ and $C = 1$.
\end{itemize}

Such an assumption allows us to employ a sort of relaxed triangle inequality in order to decompose the loss of a meta-learner into the losses of its individual components.
This technique has been widely applied in previous works (e.g. \cite{x-learner,prior-work}), though previously restricted to only the mean squared error.
Through the introduction of Assumption~\ref{loss-function-assumption}, we are able to make our results substantially more general -- the implications of which we discuss in Section~\ref{sec:beyond-mse}.

\subsubsection{T-learners and S-learners}\label{sec:tlearner-slearner}

In these types of models, we first obtain two functions $h^1$ and $h^0$ that have been trained to approximate $Y | X, T=1$ and $Y | X, T=0$ respectively.
In the case of T-learners, this is done by independently training $h^1$ and $h^0$ on the samples where $T=1$ and $T=0$, while for S-learners it is done by training a single model $h : \mathcal{X} \times \{0, 1\} \to \mathcal{Y}$ to use $X$ and $T$ to predict $Y$, and taking $h^1 = h(\cdot, 1)$ and $h^0 = h(\cdot, 0)$.
With $h^1$ and $h^0$ in hand, we can predict the individual treatment effect as $h^1(X) - h^0(X)$.

To bound the losses of our predictions $h^1(X) - h^0(X)$, we can separate it in terms of the losses of the individual functions $h^1$ and $h^0$ through the use of Assumption~\ref{loss-function-assumption}:
\begin{align*}
    & \expect[\ell(Y^1 - Y^0, h^1(X) - h^0(X))]
    \\ &\quad \leq C (\expect[\ell(Y^1, h^1(X))] + \expect[\ell(Y^0, h^0(X))])
\end{align*}
From there on, we can use the bounds developed in Section~\ref{sec:bounds-for-outcome-regression} to obtain bounds in terms of the observable losses:

\begin{proposition}[Upper bound on T-/S-learner loss in expectation]\label{thm:upper-bound-main-tlearner}
    Let $\ell$ be a loss function satisfying Assumption~\ref{loss-function-assumption} with constant $C$, and let $w^1(X), w^0(X)$ be nonnegative reweighing functions with $\expect[w^a(X) | T=a] = 1$ for $a = 0, 1$.
    For any $\lambda_1, \lambda_0 > 0$,
    \begin{align*}
        & \expect[\ell(Y^1-Y^0, h^1(X)-h^0(X))]
        \\ &\quad \leq C \Bigl( \expect[w^1(X) )\ell(Y, h^1(X)) | T=1]
        \\ &\qquad + \expect[w^0(X) \ell(Y, h^0(X)) | T=0]
        \\ &\qquad + \underbrace{\lambda_1 \Delta_{T=1} + \lambda_0 \Delta_{T=0}}_{\Delta_{\mathrm{T-learner}}}
                  + \underbrace{\sigma^2_{T=1}/4\lambda_1 + \sigma^2_{T=0}/4\lambda_0}_{\sigma^2/4} \Bigr),
    \end{align*}
    where $\sigma^2_{T=a} \coloneq \Var[\ell(Y^a, h^a(X))]$ and
    \begin{align*}
        \Delta_{T=a}
        &= \expect\left[ \left( w^a(X) \frac{\proba[T=a | X,U]}{\proba[T=a]} - 1 \right)^2 \right]
    \end{align*}
    \begin{align*}
        &\leq \frac{2}{\proba[T=a]^2} \expect\left[ w^a(X)^2 \left( \nu(X) - \ind[T=a] \right)^2 \right]
        \\ &\quad + 2 \expect\left[ \left( w^a(X) \frac{\ind[T=a]}{\proba[T=a]} - 1 \right)^2 \right].
    \end{align*}
\end{proposition}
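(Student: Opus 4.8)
The plan is to reduce the individual-treatment-effect loss to a sum of two outcome-regression losses via the relaxed triangle inequality of Assumption~\ref{loss-function-assumption}, and then to invoke the already-established outcome-regression bounds (Theorem~\ref{thm:upper-bound-main-theoric-outcome} and Theorem~\ref{thm:upper-bound-main-empiric-outcome}) separately for each treatment arm $a \in \{0, 1\}$. The whole argument is essentially a concatenation of these two ingredients, so there is no genuinely new analytic machinery to develop.

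First I would perform the loss decomposition pointwise. Writing $\ell(Y, \widehat{Y}) = \psi(Y - \widehat{Y})$ and setting $x \coloneq Y^1 - h^1(X)$ and $y \coloneq Y^0 - h^0(X)$, the effect residual factors as $(Y^1 - Y^0) - (h^1(X) - h^0(X)) = x - y$. Applying the relaxed subadditivity $\psi(x \pm y) \leq C(\psi(x) + \psi(y))$ from Assumption~\ref{loss-function-assumption} (with the minus sign) yields the pointwise bound $\ell(Y^1 - Y^0, h^1(X) - h^0(X)) \leq C(\ell(Y^1, h^1(X)) + \ell(Y^0, h^0(X)))$. Taking expectations over the complete distribution and using linearity gives $\expect[\ell(Y^1 - Y^0, h^1(X) - h^0(X))] \leq C(\expect[\ell(Y^1, h^1(X))] + \expect[\ell(Y^0, h^0(X))])$.

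Next I would bound each of the two terms $\expect[\ell(Y^a, h^a(X))]$ individually. For each $a$ the hypotheses of Theorem~\ref{thm:upper-bound-main-theoric-outcome} are met: $w^a$ is nonnegative with $\expect[w^a(X) \mid T = a] = 1$, and $\lambda_a > 0$. Applying it gives $\expect[\ell(Y^a, h^a(X))] \leq \expect[w^a(X) \ell(Y, h^a(X)) \mid T = a] + \lambda_a \Delta_{T=a} + \sigma^2_{T=a}/4\lambda_a$, with $\Delta_{T=a}$ and $\sigma^2_{T=a}$ exactly as in the statement. Substituting both the $a = 1$ and $a = 0$ instances into the inequality from the previous paragraph, and grouping the divergence and variance contributions into $\Delta_{\mathrm{T-learner}}$ and $\sigma^2/4$ respectively, produces the claimed inequality. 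The displayed upper bound on $\Delta_{T=a}$ is then immediate from Theorem~\ref{thm:upper-bound-main-empiric-outcome} applied with weight $w^a$, which is just the relaxed-triangle-inequality expansion of the $\chi^2$ term already carried out in Section~\ref{sec:bounds-for-outcome-regression}.

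The only real point requiring care is the decomposition step: one must group the residuals as $(Y^1 - h^1(X)) - (Y^0 - h^0(X))$ before invoking Assumption~\ref{loss-function-assumption}, so that subadditivity is applied to genuine per-arm residuals rather than to the raw outcomes or predictions. Once this grouping is fixed, the remainder is a direct application of two independent copies of the outcome-regression results, and no further obstacle arises.
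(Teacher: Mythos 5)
Your proposal is correct and follows essentially the same route as the paper's own proof: regroup the residual as $(Y^1 - h^1(X)) - (Y^0 - h^0(X))$, apply the relaxed subadditivity of Assumption~\ref{loss-function-assumption}, take expectations, and then invoke Theorems~\ref{thm:upper-bound-main-theoric-outcome} and~\ref{thm:upper-bound-main-empiric-outcome} once per treatment arm. You even state the factor $C$ explicitly in the pointwise decomposition, which the paper's displayed chain omits (evidently a typo there, since the final statement carries the $C$).
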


We can also produce a finite-sample bound akin to Corollary~\ref{thm:upper-bound-main-outcome}. Again, we only present here a bound based on the Rademacher complexity, but the same ideas can be applied to other paradigms.

\begin{corollary}[PAC empirical upper bound on the loss of a T-/S-learner]\label{thm:corollary-tlearner}
    Let $\ell$ be a loss function bounded in $[0, M]$ satisfying Assumption~\ref{loss-function-assumption} with constant $C$ and let $w^1(X)$, $w^0(X)$ be nonnegative reweighing functions bounded in $[0, w_{max}]$ and with $\expect[w^a(X) | T=a] = 1$ for $a = 0, 1$.
    Then, for any $\lambda_1, \lambda_0 > 0$,
    with probability at least $1 - \delta$ over the draw of the training data $(X_i, T_i, Y_i)_{i=1}^n$,
    for all $h^1, h^0 \in \mathcal{H}$ and $\nu \in \mathcal{H_\nu}$,
    \begin{align*}
        & \expect[\ell(Y^1 - Y^0, h^1(X) - h^0(X))]
        \\ &\leq C \Biggl( \frac{1}{n_{T=1}} \sum_{T_i=1} w^1(X_i) \ell(Y_i, h^1(X_i))
        \\ &\qquad + \frac{1}{n_{T=0}} \sum_{T_i=0} w^0(X_i) \ell(Y_i, h^0(X_i))
        \\ &\qquad + \lambda_1 \widehat{\Delta}_{T=1} + \lambda_0 \widehat{\Delta}_{T=0} + \frac{M^2}{16\lambda_1} + \frac{M^2}{16\lambda_0}
        \\ &\qquad + 2 \mathfrak{R}(w^1 \cdot \ell \circ \mathcal{H}) + 2 \mathfrak{R}(w^0 \cdot \ell \circ \mathcal{H}) + 2 \mathfrak{R}(\widehat{\Delta} \circ \mathcal{H_\nu})
        \\ &\qquad + \left( c M w_{\max} + C(w_{\max}) \sqrt{n_{T=\min}/n} \right) \sqrt{\frac{\log 3/\delta}{2n_{T=\min}}} \Biggr)
    \end{align*}
    where
    \begin{align*}
         \widehat{\Delta}_{T=a} &\coloneq
            \frac{2\lambda}{n} \sum_{i=1}^{n} \left(\frac{w^a(X_i)}{\proba[T=a]}\right)^2 \left(\nu(X_i) - \ind[T_i=a]\right)^2
            \\ &+ \frac{2\lambda}{n} \sum_{i=1}^{n} \left(w^a(X_i) \frac{\ind[T_i=a]}{\proba[T=a]} - 1\right)^2,
    \end{align*}
    $n_{T=\min} = \min \{n_{T=1}, n_{T=0}\}$, $\mathfrak{R}(w^a \cdot \ell \circ \mathcal{H})$ and $\mathfrak{R}(\widehat{\Delta} \circ \mathcal{H}_\nu)$ are the Rademacher complexities of $\mathcal{H}$ and $\mathcal{H_\nu}$ composed with their respective loss functions/means, and $c$ and $C(w_{\max})$ are a constant nonnegative quantities defined in the proof, with $1 \leq c \leq 2$.
\end{corollary}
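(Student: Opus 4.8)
The plan is to reduce this finite-sample statement to the in-expectation bound of Proposition~\ref{thm:upper-bound-main-tlearner} and then install the standard Rademacher/concentration scaffolding on each treatment arm, taking care to combine the two arms correctly. Concretely, I would begin from the expectation-level inequality already supplied by Proposition~\ref{thm:upper-bound-main-tlearner}, which bounds $\expect[\ell(Y^1-Y^0, h^1(X)-h^0(X))]$ by $C$ times the sum of the two reweighted conditional losses, the two divergence terms $\lambda_a \Delta_{T=a}$, and the two variance terms $\sigma^2_{T=a}/4\lambda_a$. The only purely analytic step is to discharge the variances: since $\ell \in [0,M]$, any such loss has variance at most $M^2/4$, so $\sigma^2_{T=a}/4\lambda_a \le M^2/16\lambda_a$, producing the two $M^2/16\lambda_a$ terms verbatim.

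It remains to replace the remaining expectations by their empirical surrogates uniformly over $h^1, h^0 \in \mathcal{H}$ and $\nu \in \mathcal{H}_\nu$. For each arm $a$, I would condition on the treatment-assignment vector $(T_i)_{i=1}^n$; conditionally, the $n_{T=a}$ points with $T_i=a$ are i.i.d.\ draws from $P_{X,Y\mid T=a}$, so the usual symmetrization argument bounds $\expect[w^a(X)\ell(Y,h^a(X))\mid T=a]$ by its empirical average plus $2\mathfrak{R}(w^a\cdot\ell\circ\mathcal{H})$ plus a McDiarmid tail; because $w^a\ell$ lives in $[0,Mw_{\max}]$, that tail scales as $Mw_{\max}\sqrt{\log(1/\delta)/2n_{T=a}}$. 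For the divergence terms I would use the relaxed bound $\Delta_{T=a}\le \tfrac{2}{\proba[T=a]^2}\mathrm{BrierScore}_{w^2} + 2D$ from Proposition~\ref{thm:upper-bound-main-tlearner} and apply the same symmetrization to the $\nu$-dependent Brier summand. The crucial bookkeeping point is that a single propensity model $\nu$ is shared across both arms, so I would carry out this uniform-convergence step jointly for $\widehat{\Delta}_{T=1}+\widehat{\Delta}_{T=0}$; this is exactly why only one term $2\mathfrak{R}(\widehat{\Delta}\circ\mathcal{H}_\nu)$ appears rather than two. Since each $\widehat{\Delta}$ summand runs over all $n$ samples (not just the arm-$a$ ones), its McDiarmid bounded-difference constant is of order $C(w_{\max})/n$, contributing a $C(w_{\max})/\sqrt{n}$ deviation.

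Finally I would assemble the three high-probability events -- loss concentration for $a=1$, loss concentration for $a=0$, and the joint $\Delta$-concentration -- by a union bound, allocating $\delta/3$ to each, which is the source of the $\log(3/\delta)$. Collecting the two arm-wise loss tails and bounding $n_{T=a}\ge n_{T=\min}$ produces a combined coefficient $cMw_{\max}$ with $1\le c\le 2$ (the exact $c$ depends on how the two $\sqrt{1/n_{T=a}}$ factors merge), while rewriting the $C(w_{\max})/\sqrt{n}$ divergence tail as $C(w_{\max})\sqrt{n_{T=\min}/n}\cdot\sqrt{1/n_{T=\min}}$ lets it share the same $\sqrt{\log(3/\delta)/2n_{T=\min}}$ factor, yielding the stated tail term. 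Multiplying through by the outer constant $C$ then reproduces the corollary exactly.

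I expect the main obstacle to be the presence of two distinct effective sample sizes -- the reweighted losses concentrate over the (random) $n_{T=a}$ arm-specific points, whereas the $\widehat{\Delta}$ estimators are averages over all $n$ points -- together with the fact that $n_{T=a}$ is itself random. Handling this cleanly requires conditioning on $(T_i)$ so that the arm-wise quantities become genuine i.i.d.\ averages, then reconciling the $1/\sqrt{n_{T=a}}$ and $1/\sqrt{n}$ rates into the single product form via the $\sqrt{n_{T=\min}/n}$ reparametrization; verifying that the shared-$\nu$ complexity is not double counted and that the merged constant indeed lands in $[1,2]$ is where the accounting is most delicate.
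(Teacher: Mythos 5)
Your proposal matches the paper's own proof essentially step for step: start from Proposition~\ref{thm:upper-bound-main-tlearner} with the relaxed $\Delta$ bound, discharge the variances via Popoviciu's inequality to get the $M^2/16\lambda_a$ terms, apply the Rademacher concentration lemma separately to each arm's reweighted loss and jointly to the two $\widehat{\Delta}$ terms (which is indeed why only one $\mathfrak{R}(\widehat{\Delta}\circ\mathcal{H}_\nu)$ appears), union-bound over three events to get $\log(3/\delta)$, and merge the tails via the $c = 1 + \sqrt{n_{T=\min}/n_{T=0}} \in [1,2]$ and $\sqrt{n_{T=\min}/n}$ reparametrizations. Your added remark about conditioning on $(T_i)_{i=1}^n$ to make the arm-wise averages genuinely i.i.d.\ is a reasonable elaboration of a point the paper leaves implicit, but it does not change the argument.
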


It should be noted that these bounds do not account for interactions between $h^1$ and $h^0$. Quantifying such interactions requires some consideration for the structure of the models themselves, which would lead to losing the mostly-model-agnostic flavor of our results.

\subsubsection{X-learners}\label{sec:xlearner}

An X-learner follows a slightly more involved procedure than the T- and S-learners.
First, one estimates $h^1$ and $h^0$ as in a T-learner. With that, new models $\tau^1$ and $\tau^0$ are trained to regress on $Y^1 - h^0(X)$ and $h^1(X) - Y^0$, respectively, as some sort of pseudo-treatment-effect labels. The individual treatment effect is then estimated as $e(X) \tau^1(X) + (1-e(X)) \tau^0(X)$, for some $e(x)$ (often an estimate of the propensity score $\proba[T=1 | X]$).

Similar to what we did in Section~\ref{sec:tlearner-slearner}, we will bound the loss of the X-learner estimator by separating it in terms of the losses of its individual components $h^1$, $h^0$, $\tau^1$ and $\tau^0$.

For ease of notation, let $\bar{e}(x) = 1 - e(x)$ and $\ell_e(Y, \widehat{Y}) \coloneq \ell(e(X) Y, e(X) \widehat{Y})$. Then, by Assumption~\ref{loss-function-assumption}:
\begin{align*}
    & \expect[\ell(Y^1 - Y^0, e(X) \tau^1(X) + \bar{e}(X) \tau^0(X))]
    \\ &\leq C^2 (\expect[\ell_{\bar{e}}(Y^1, h^1(X))] + \expect[\ell_e(Y^0, h^0(X))]
    \\ &\qquad\quad + \expect[\ell_e(Y^1 - h^0(X), \tau^1(X))]
    \\ &\qquad\quad + \expect[\ell_{\bar{e}}(h^1(X) - Y^0, \tau^0(X))]).
\end{align*}
And again, all that remains is to use the bounds from Section~\ref{sec:bounds-for-outcome-regression} to obtain observable bounds for the complete loss of the X-learner.

\begin{proposition}[Upper bound on X-learner loss in expectation]\label{thm:upper-bound-main-xlearner}
    Let $\ell$ be a loss function satisfying Assumption~\ref{loss-function-assumption} with constant $C$ and let $w^1$, $w^0$ be nonnegative reweighting functions with $\expect[w^a(X) | T=a] = 1$ for $a = 0, 1$.
    For any $\lambda_1, \lambda_0, \lambda_{0,1}, \lambda_{1,0} > 0$,
    \begin{align*}
        & \expect[\ell(Y^1-Y^0, e(X) \tau^1(X) + \bar{e}(X) \tau^0(X))]
        \\ &\quad \leq C^2 \Bigl( \expect[w^1(X) \ell_{\bar{e}}(Y, h^1(X)) | T=1]
        \\ &\quad + \expect[w^0(X) \ell_{e}(Y, h^0(X)) | T=0]
        \\ &\quad + \expect[w^1(X) \ell_e(Y^1 - h^0(X), \tau^1(X)) | T=1]
        \\ &\quad + \expect[w^0(X) \ell_{\bar{e}}(h^1(X) - Y^0, \tau^0(X)) | T=0]
        \\ &\quad + \underbrace{(\lambda_1 + \lambda_{1,0}) \Delta_{T=1} + (\lambda_0 + \lambda_{0,1}) \Delta_{T=0}}_{\Delta_{\mathrm{X-learner}}}
        \\ &\quad + \underbrace{\sigma^2_{T=1}/4\lambda_1 + \sigma^2_{T=0}/4\lambda_0 + \sigma^2_{1,0}/4\lambda_{1,0} + \sigma^2_{0,1}/4\lambda_{0,1}}_{\sigma^2/4} \Bigr),
    \end{align*}
    where $\sigma^2_{T=a} \coloneq \Var[\ell(Y^a, h^a(X))]$, $\sigma^2_{a,b} \coloneq \Var[\ell(Y^a - h^b(X), \tau^a(X))]$ and
    \begin{align*}
        \Delta_{T=a}
        &= \expect\left[ \left( w^a(X) \frac{\proba[T=a | X,U]}{\proba[T=a]} - 1 \right)^2 \right]
        \\ &\leq \frac{2}{\proba[T=a]^2} \expect\left[ w^a(X)^2 \left( \nu(X) - \ind[T=a] \right)^2 \right]
        \\ &\quad + 2 \expect\left[ \left( w^a(X) \frac{\ind[T=a]}{\proba[T=a]} - 1 \right)^2 \right].
    \end{align*}
\end{proposition}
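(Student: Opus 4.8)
The plan is to prove \Cref{thm:upper-bound-main-xlearner} in two stages: first, an \emph{algebraic} decomposition of the individual-treatment-effect error into four ``pseudo-outcome'' regression errors (the inequality already displayed just before the statement), which I would justify rigorously via \Cref{loss-function-assumption}; and second, an application of the outcome-regression bound of \Cref{thm:upper-bound-main-theoric-outcome} to each of those four terms, carefully bookkeeping the conditioning $T=a$, the reweightings $w^a$, and the multipliers.

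For the decomposition, write $\widehat{\tau} = e(X)\tau^1(X) + \bar{e}(X)\tau^0(X)$ and use $e + \bar{e} = 1$ to split the target as $Y^1 - Y^0 = e(X)(Y^1 - Y^0) + \bar{e}(X)(Y^1 - Y^0)$. Telescoping through $h^0(X)$ in the first group and $h^1(X)$ in the second introduces the X-learner's pseudo-labels and yields
\begin{align*}
    (Y^1 - Y^0) - \widehat{\tau}
    &= e(X)\bigl[(Y^1 - h^0(X)) - \tau^1(X)\bigr] + e(X)\bigl[h^0(X) - Y^0\bigr]
    \\ &\quad + \bar{e}(X)\bigl[(h^1(X) - Y^0) - \tau^0(X)\bigr] + \bar{e}(X)\bigl[Y^1 - h^1(X)\bigr].
\end{align*}
Applying \Cref{loss-function-assumption} twice --- once to split the four summands into two pairs and once within each pair --- pulls $\psi$ inside with constant $C^2$, and the $\pm$ freedom in the relaxed subadditivity lets me orient the signs (e.g.\ treating the $e(X)[h^0(X) - Y^0]$ summand as a subtraction of $e(X)[Y^0 - h^0(X)]$) so that the four resulting terms are exactly $\ell_e(Y^1 - h^0(X), \tau^1(X))$, $\ell_e(Y^0, h^0(X))$, $\ell_{\bar{e}}(h^1(X) - Y^0, \tau^0(X))$ and $\ell_{\bar{e}}(Y^1, h^1(X))$, since each summand already carries its $e$ or $\bar{e}$ factor and $\ell_e(Y, \widehat{Y}) = \psi(e(X)(Y - \widehat{Y}))$.

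Each of the four terms is now a complete-distribution expected loss of an $X$-measurable predictor against a label measurable with respect to a single potential outcome, so I would bound each with \Cref{thm:upper-bound-main-theoric-outcome}: the two terms built from $Y^1$ (namely the $\ell_{\bar{e}}(Y^1, h^1)$ and $\ell_e(Y^1 - h^0, \tau^1)$ terms) are treated with $a = 1$, weight $w^1$, and multipliers $\lambda_1$ and $\lambda_{1,0}$; the two terms built from $Y^0$ are treated with $a = 0$, weight $w^0$, and multipliers $\lambda_0$ and $\lambda_{0,1}$. The decisive observation is that the divergence term produced by \Cref{thm:upper-bound-main-theoric-outcome}, $\Delta_{T=a} = \expect[(w^a(X)\proba[T=a\mid X,U]/\proba[T=a] - 1)^2]$, depends only on $a$ and on $w^a$ --- it is \emph{indifferent to the loss and to the label}. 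Consequently the two $a=1$ applications emit the \emph{same} $\Delta_{T=1}$ and combine into $(\lambda_1 + \lambda_{1,0})\Delta_{T=1}$, while the $a=0$ applications combine into $(\lambda_0 + \lambda_{0,1})\Delta_{T=0}$, recovering $\Delta_{\mathrm{X-learner}}$; the four variance contributions are then collected into the $\sigma^2/4$ term. The final relaxed upper bound on $\Delta_{T=a}$ is the identical relaxed triangular inequality already established for \Cref{thm:upper-bound-main-empiric-outcome}, now instantiated with $w^a$.

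The step I expect to be the main obstacle is justifying that \Cref{thm:upper-bound-main-theoric-outcome} applies \emph{verbatim} to the pseudo-outcome regressions --- the terms whose label is $Y^1 - h^0(X)$ or $h^1(X) - Y^0$ rather than a raw potential outcome --- and, in particular, that doing so leaves $\Delta_{T=a}$ unchanged. The resolution is that the change-of-measure underlying \Cref{thm:upper-bound-main-theoric-outcome} reflects only the covariate-marginal shift induced by the propensities and by $w^a$, whereas the conditional law of the pseudo-label given $(X, U)$ is preserved between the complete and the $T=a$-conditional distributions: $h^0$ is a fixed function of $X$ and, under SUTVA and ignorability, $Y^1 \mid X, U$ has the same law conditionally on $T=1$ as unconditionally. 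Hence the Radon--Nikodym derivative computed in \Cref{sec:bounds-for-outcome-regression} is identical, and $\Delta_{T=1}$ is reused unchanged. A secondary subtlety, the sign bookkeeping for asymmetric losses such as the quantile loss, is precisely what the $\pm$ in \Cref{loss-function-assumption} is there to absorb.
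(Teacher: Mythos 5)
Your proposal is correct and follows essentially the same route as the paper's proof: the identical algebraic telescoping of $(Y^1-Y^0)-\widehat{\tau}$ into the four residuals $\bar{e}(Y^1-h^1)$, $e(Y^0-h^0)$, $e(\tau^1-(Y^1-h^0))$, $\bar{e}(\tau^0-(h^1-Y^0))$, two applications of Assumption~\ref{loss-function-assumption} yielding the factor $C^2$, and then Theorems~\ref{thm:upper-bound-main-theoric-outcome} and~\ref{thm:upper-bound-main-empiric-outcome} applied to each of the four terms with the $\Delta_{T=a}$ contributions merging as $(\lambda_1+\lambda_{1,0})\Delta_{T=1}+(\lambda_0+\lambda_{0,1})\Delta_{T=0}$. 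Your closing remark justifying why the change-of-measure term is unaffected by the pseudo-labels $Y^1-h^0(X)$ and $h^1(X)-Y^0$ is a welcome extra step of care that the paper leaves implicit.
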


We can also obtain finite-sample bounds akin to our Corollary~\ref{thm:corollary-tlearner}, which can be found in Appendix~\ref{appendix-a} as Corollary~\ref{thm:upper-bound-empirical-xlearner}.

Just as in our analysis of T- and S- learners, we are blind to potential improvements to the bounds due to interactions between the individual regressions, in the spirit of keeping our analysis reasonably model-agnostic.

With both bounds for T/S-learners and for X-learners in hand, a natural question arises: can our theory suggest when one should be preferred over the other?
Since our bounds for these classes of models share many terms, it is not clear a priori whether one method should be preferred over the other. In the end, it appears that any possible advantage originates from the potentially improved inductive biases of the individual regressions. This is consistent with previous findings~\cite{x-learner,acic16}.

\subsection{Beyond the Mean Squared Loss}\label{sec:beyond-mse}

One remarkable aspect of the bounds developed in Section~\ref{sec:bounds-for-causal-metalearners} is how they are general in relation to the loss.
That is, they can limit other losses such as the mean absolute error (for robust regression) and the quantile loss (for quantile regression) just as well as the more typical mean squared loss.
Moreover, minimizing the same loss on each component of the meta-learner implies minimizing the total loss of the meta-learner as a whole.

This is a rather surprising result. For instance, it was previously understood that the conditional quantile treatment effect $Q_\alpha(Y^1 - Y^0 | X)$ was unidentifiable in general. Yet we show that by training a T-learner optimizing the quantile loss, for example, we can estimate it by first approximating
$h^1(X) \approx Q_\alpha(Y^1 | X)$ and $h^0 \approx Q_\alpha(Y^0 | X)$, leading to
\[ h^1(X) - h^0(X) \approx Q_\alpha(Y^1 - Y^0 | X). \]
This is counterintuitive, since $Q_\alpha(Y^1 | X) - Q_\alpha(Y^0 | X) \neq Q_\alpha(Y^1 - Y^0 | X)$ in general! The key to our argument is that we are \emph{not} asserting that $h^1(X) - h^0(X)$ will always converge to $Q_\alpha(Y^1 - Y^0 | X)$. We have bounded the complete quantile loss of the T-learner, but we have never proven that it is even close to the optimal one (which is what would then imply convergence to the conditional quantile).

\begin{figure*}[ht]
    \centering
    \includegraphics[width=.99\textwidth]{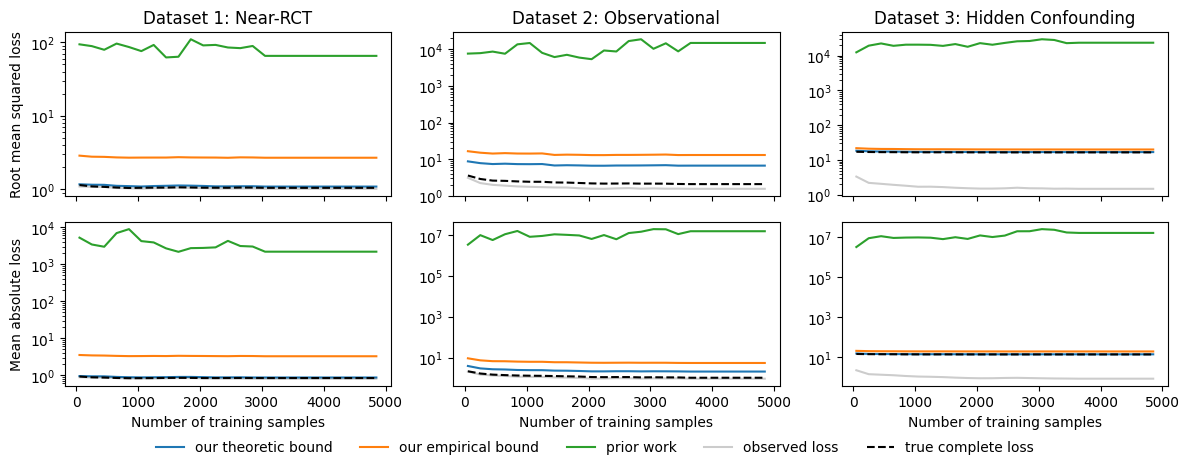}
    \caption{
        \textbf{Tightness of our bounds.}
        Comparison between our bounds and those of \cite{prior-work}, both for the complete loss of the estimation of the potential outcome $Y^1$. Additional images for other tasks (e.g., estimation of treatment effects) are available in Appendix~\ref{sec:more-figures}.
        Our ``theoretic" and ``empirical" bounds correspond in Theorems~\ref{thm:upper-bound-main-theoric-outcome} and \ref{thm:upper-bound-main-empiric-outcome}, and ``prior work'' refers to Corollary~1 of \cite{prior-work}.
        Our theoretic bound is quite tight, being very close to the complete loss (which is unobservable in practice). Our empirical bound, while somewhat looser than the theoretic bound, is still \emph{substantially} tighter than the available prior work.
    }
    \label{fig:figure-1}
\end{figure*}

\begin{figure}[ht]
    \centering
    \includegraphics[width=.45\textwidth]{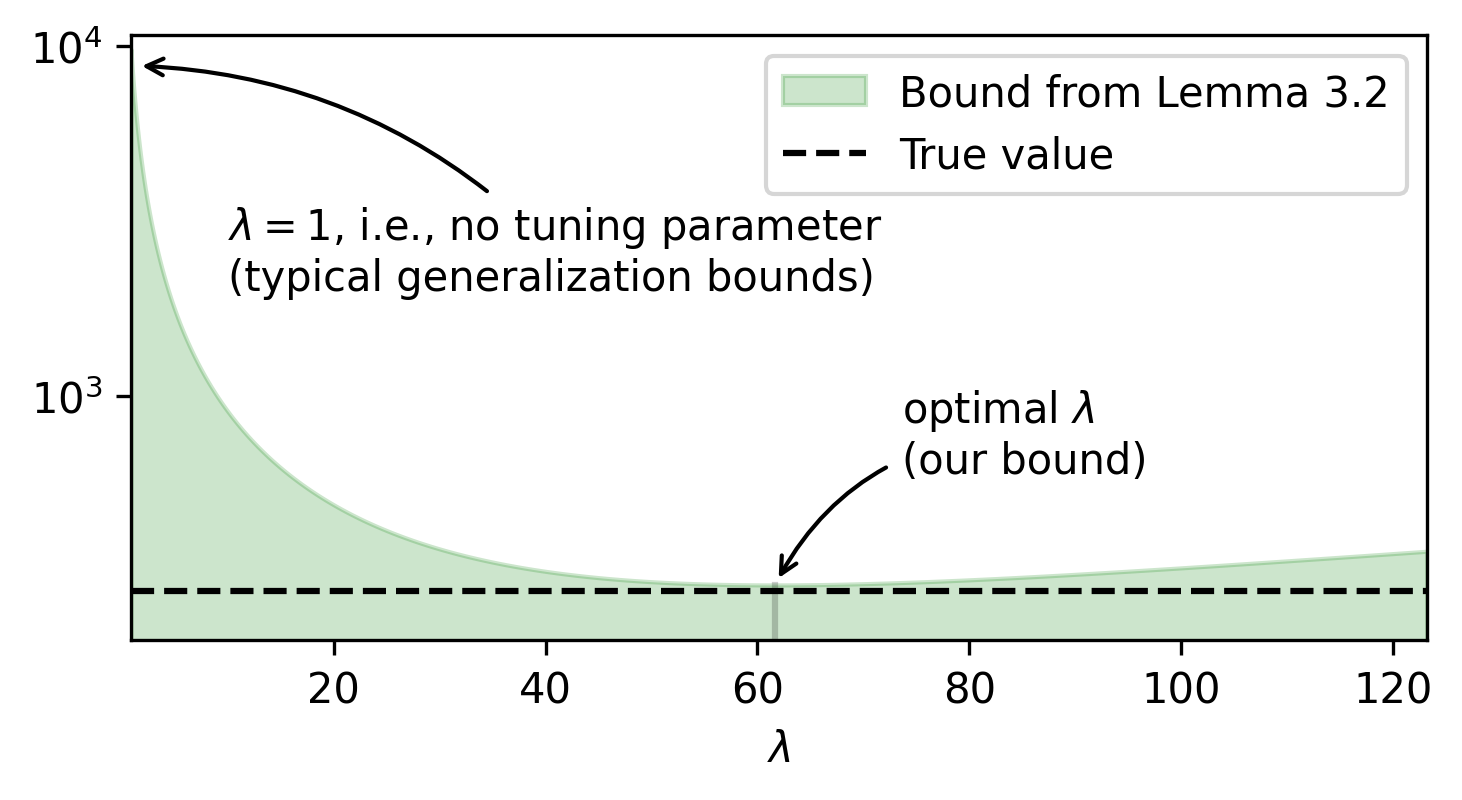}
    \caption{
        \textbf{Importance of the tuning parameter $\lambda$ in Lemma~\ref{thm:change-of-measure}.}
        An illustration of the bound in Lemma~\ref{thm:change-of-measure} (shaded in green) over different values of its tuning parameter $\lambda$.
        Change-of-measure inequalities (e.g., \cite{novel-change-of-measure}) typically do not have a tuning parameter, which corresponds to taking $\lambda = 1$ in our lemma.
        As can be seen in the figure, being able to optimally select $\lambda$ substantially tightens our bounds.
    }
    \label{fig:importance-of-tuning-parameter}
    \vspace{-1.5em}
\end{figure}

Nevertheless, we do show that by improving how well we estimate the individual conditional quantiles (in terms of the quantile loss) we also improve how well we estimate the conditional quantile of the treatment effect. At the limit, if $h^1$ and $h^0$ achieve zero quantile loss (meaning they predict the potential outcomes exactly) and we have a negligible divergence term (e.g., $\Delta = 0$), then the quantile loss for the treatment effect will also be zero (meaning we will predict the individual treatment effects exactly).

\section{Experiments and Applications}
\label{experiments}

In order to assess the tightness and efficacy of our bounds, we empirically evaluate them on three semi-synthetic datasets of increasing difficulty: first simulating a randomized control trial and later in simulations of observational data with fully observed and hidden confounders. Finally, we tackle a real-world application of model selection in a Parkinson's telemonitoring dataset.

Experiments were run on an AMD Ryzen 9 5950X CPU (2.2GHz/5.0GHz, 32 threads) with 64GB of RAM. Nevertheless, the relevant code is lightweight and should easily run on weaker hardware.
More details can be found in Appendix~\ref{sec:experiment-details}, and the code is available
at \url{https://github.com/dccsillag/experiments-causal-generalization-bounds}.

\begin{figure*}[ht]
    \centering
    \includegraphics[width=.99\textwidth]{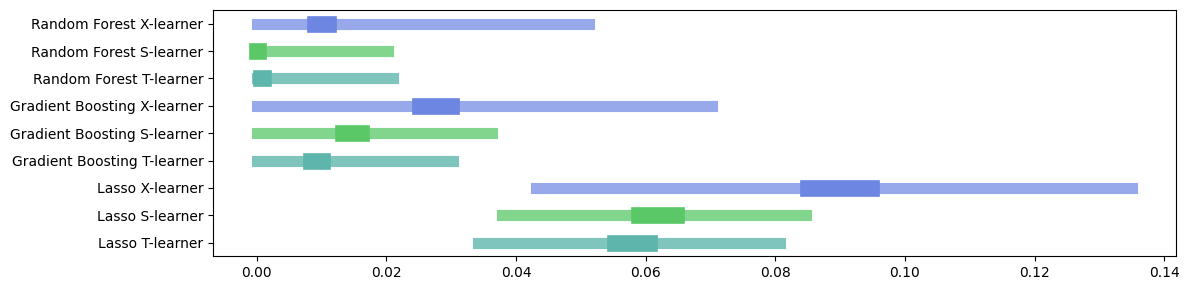}
    \caption{
        \textbf{Application: model selection on real data.}
        The plot compares multiple models for treatment effect estimation: bars correspond to our bounds on the complete loss of the models, while the knobs in the middle correspond to standard bootstrapped confidence intervals for the loss on the observed distributions.
        Note how some models (e.g., R.F. T-learner) appear strictly better than others (e.g., G.B. X-learner) only if ours bounds are not considered. The R.F. and G.B. T/S-learners remain strictly better than the Lasso-based models.
    }
    \label{fig:figure-3}
\end{figure*}

\subsection{Experiments on Semi-Synthetic Data}\label{sec:experiments-semisynthetic}

To evaluate our procedures we use semi-synthetic data (i.e., synthetic data that attempts to reproduce real data) as this allows us to have access to the potential outcomes $Y^1$, $Y^0$.

We use the following datasets, ordered by increasing difficulty.
 \textbf{Learned IHDP}: Results of a randomized control trial simulated with generative models trained on the IHDP~\cite{causal-bart-1} dataset.
 \textbf{ACIC16}: Simulated observational data from \cite{acic16} with fully observed confounding, satisfying ignorability and positivity assumptions;
 \textbf{Confounded ACIC16}: Observational data with significant unobserved confounding and no positivity. This is also the ACIC16 dataset, but modified so that there are hidden confounders. This does not satisfy ignorability nor positivity, making it an extremely challenging dataset to work with.

Figure~\ref{fig:figure-1} demonstrates the tightness of our bounds, especially in comparison to the prior work of~\cite{prior-work}.
In it, we visualize the observed loss, the (unobservable) complete loss, our bounds and the closest corresponding bound from~\cite{prior-work}, varying the number of training samples, all for the task of estimating the potential outcome $Y^1$.

 Our theoretical bound is remarkably tight both in the Near-RCT data and in the observational datasets with hidden confounding, matching the complete loss almost exactly. Furthermore, in the hidden confounding case even our empirical bound -- which is generally looser than the theoretic one -- is exceptionally tight, possibly due to the inapplicability of the positivity assumption.
In the observable dataset with fully observed confounding there is a wider gap between our bounds and the complete loss, though they are all about the same order of magnitude.
On all accounts, the previous bound of \cite{prior-work} is multiple orders of magnitude looser.

A significant contributor to the tightness of our bounds is the tuning parameter $\lambda$ in Lemma~\ref{thm:change-of-measure}.
Figure~\ref{fig:importance-of-tuning-parameter} showcases its importance, displaying the bound from Lemma~\ref{thm:change-of-measure} on our dataset with hidden confounding for various values of $\lambda$. Most existing bounds correspond to taking $\lambda = 1$ (e.g., those of \cite{novel-change-of-measure} and those of \cite{prior-work}, though it is arguable whether such a tuning parameter could even be introduced into their IPM-based bounds).
By allowing optimization over the $\lambda$ parameter we introduced, our bounds can become over 30$\times$ tighter.

\subsection{Application on real data}\label{sec:experiments-real-data}

In this section, we work on top of the Parkinson's Telemonitoring dataset of \cite{parkinsons}, containing features from voice measurements paired with standardized scores describing the progression of the disease, along with the subject's age, gender, and time since recruitment to the study. The goal is to assess the effect of sex on the progression of the disease. There are likely unobserved confounders (e.g., not enough data about the subject to construct a baseline), and at the same time there is enough data to nearly violate positivity (voice data can be a good predictor of sex).

In order to evaluate a causal link between sex and disease progression, we wish to train a model that estimates the individual treatment effect of the standardized UPDRS score on the subject being male vs. female.
To this end, we have trained a myriad of models consisting of T-learners, S-learners and X-learners based on a Lasso, Gradient Boosting and Random Forests.

To assess the quality of our models, we utilize our bounds from Propositions~\ref{thm:upper-bound-main-tlearner} and \ref{thm:upper-bound-main-xlearner}.
We limit the $\Delta$ term via our empirial bounds, and posit a conservative upper bound on the variance of the loss;
we note that, on the observed distributions, the loss typically has a variance of around $10^{-6}$ to $10^{-8}$, leading us to conservatively estimate the variance of the loss as below $10^{-5}$ to give some margin to increased deviation in the complete distribution.

The results of this procedure can be seen in Figure~\ref{fig:figure-3}.
Prior to the ``correction'' given by our bounds, we would have confidently concluded that the Random Forest-based S- and T-learners outperformed all others. After considering the gap between the observed and complete distributions, however, this no longer appears to be the case, with these models now performing roughly on par with the gradient boosting-based meta-learners.
However, even after this correction, it is clear that there is an advantage to using more expressive models over our linear meta-learners, these being confidently outperformed by the Random Forest backed T- and S-learners.
Finally, the use of an X-learner does not seem worth it. They are worse in terms of the observable loss and have wider error bars due to the bound -- one should probably opt here instead for either the Random Forest T- or S-learner.

\section{Conclusion}
\label{conclusion}

In this work, we've introduced generalization bounds for outcome regression and treatment effect estimation, establishing rigorously that modern causal ML procedures \emph{can} properly estimate causal quantities.
Our results hold regardless of hidden confounding or lack of positivity.
Moreover, our bounds show that existing procedures for CATE estimation can be adapted for other tasks, such as estimation of quantiles of individual treatment effects -- which was previously thought to be infeasible -- simply by changing the loss function in the individual optimizations.
We also empirically showcase the tightness and practical utility of our bounds on semi-synthetic and real data, where it is shown to vastly outperform the bounds from the closest matching prior work.
We expect that our results can be of great use not only as motivation for using existing algorithms, but also as a backbone for new algorithms on diverse causal inference tasks.

\section*{Acknowledgements}

We would like to thank the reviewers for their helpful and constructive comments.
This work was partially funded by CNPq and FAPERJ.

\section*{Impact Statement}

This paper presents work whose goal is to advance the field of Machine Learning and Causality.
There are many potential societal consequences of our work through applications to various domains such as medicine and economics, but none which we feel must be specifically highlighted here.

\nocite{novel-change-of-measure,specific-dgps,mohri,scikit-learn,geomloss-wasserstein}

\bibliography{main}

\begin{thebibliography}{31}
\providecommand{\natexlab}[1]{#1}
\providecommand{\url}[1]{\texttt{#1}}
\expandafter\ifx\csname urlstyle\endcsname\relax
  \providecommand{\doi}[1]{doi: #1}\else
  \providecommand{\doi}{doi: \begingroup \urlstyle{rm}\Url}\fi

\bibitem[Angrist \& Imbens(1995)Angrist and Imbens]{linear-1}
Angrist, J.~D. and Imbens, G.
\newblock Two-stage least squares estimation of average causal effects in
  models with variable treatment intensity.
\newblock \emph{Journal of the American Statistical Association}, 90:\penalty0
  431--442, 1995.
\newblock URL \url{https://api.semanticscholar.org/CorpusID:8384694}.

\bibitem[Athey \& Imbens(2015)Athey and Imbens]{trees-1}
Athey, S. and Imbens, G.
\newblock Recursive partitioning for heterogeneous causal effects.
\newblock \emph{Proceedings of the National Academy of Sciences}, 113:\penalty0
  7353 -- 7360, 2015.
\newblock URL \url{https://api.semanticscholar.org/CorpusID:16171120}.

\bibitem[Athey et~al.(2016)Athey, Tibshirani, and Wager]{trees-3}
Athey, S., Tibshirani, J., and Wager, S.
\newblock Generalized random forests.
\newblock \emph{The Annals of Statistics}, 2016.
\newblock URL \url{https://api.semanticscholar.org/CorpusID:51735142}.

\bibitem[Athey et~al.(2019)Athey, Tibshirani, and
  Wager]{generalized-random-forest}
Athey, S., Tibshirani, J., and Wager, S.
\newblock {Generalized random forests}.
\newblock \emph{The Annals of Statistics}, 47\penalty0 (2):\penalty0 1148 --
  1178, 2019.
\newblock \doi{10.1214/18-AOS1709}.
\newblock URL \url{https://doi.org/10.1214/18-AOS1709}.

\bibitem[Blitzer et~al.(2007)Blitzer, Crammer, Kulesza, Pereira, and
  Vaughan]{domain-adaptation-2}
Blitzer, J., Crammer, K., Kulesza, A., Pereira, F.~C., and Vaughan, J.~W.
\newblock Learning bounds for domain adaptation.
\newblock In \emph{Neural Information Processing Systems}, 2007.
\newblock URL \url{https://api.semanticscholar.org/CorpusID:2497886}.

\bibitem[Cortes et~al.(2010)Cortes, Mansour, and Mohri]{domain-adaptation-1}
Cortes, C., Mansour, Y., and Mohri, M.
\newblock Learning bounds for importance weighting.
\newblock In \emph{Neural Information Processing Systems}, 2010.
\newblock URL \url{https://api.semanticscholar.org/CorpusID:2555196}.

\bibitem[Curth \& van~der Schaar(2021)Curth and van~der Schaar]{specific-dgps}
Curth, A. and van~der Schaar, M.
\newblock Nonparametric estimation of heterogeneous treatment effects: From
  theory to learning algorithms.
\newblock In \emph{International Conference on Artificial Intelligence and
  Statistics}, 2021.
\newblock URL \url{https://api.semanticscholar.org/CorpusID:231709566}.

\bibitem[Dorie et~al.(2017)Dorie, Hill, Shalit, Scott, and Cervone]{acic16}
Dorie, V., Hill, J.~L., Shalit, U., Scott, M.~A., and Cervone, D.
\newblock Automated versus do-it-yourself methods for causal inference: Lessons
  learned from a data analysis competition.
\newblock \emph{Statistical Science}, 2017.
\newblock URL \url{https://api.semanticscholar.org/CorpusID:51992418}.

\bibitem[Feydy et~al.(2018)Feydy, S{\'e}journ{\'e}, Vialard, Amari, Trouv{\'e},
  and Peyr{\'e}]{geomloss-wasserstein}
Feydy, J., S{\'e}journ{\'e}, T., Vialard, F.-X., Amari, S., Trouv{\'e}, A., and
  Peyr{\'e}, G.
\newblock Interpolating between optimal transport and mmd using sinkhorn
  divergences.
\newblock In \emph{International Conference on Artificial Intelligence and
  Statistics}, 2018.
\newblock URL \url{https://api.semanticscholar.org/CorpusID:84834062}.

\bibitem[Germain et~al.(2016)Germain, Habrard, Laviolette, and
  Morvant]{domain-adaptation-4}
Germain, P., Habrard, A., Laviolette, F., and Morvant, E.
\newblock A new pac-bayesian perspective on domain adaptation.
\newblock In Balcan, M.~F. and Weinberger, K.~Q. (eds.), \emph{Proceedings of
  The 33rd International Conference on Machine Learning}, volume~48 of
  \emph{Proceedings of Machine Learning Research}, pp.\  859--868, New York,
  New York, USA, 20--22 Jun 2016. PMLR.
\newblock URL \url{https://proceedings.mlr.press/v48/germain16.html}.

\bibitem[Germain et~al.(2017)Germain, Habrard, Laviolette, and
  Morvant]{domain-adaptation-3}
Germain, P., Habrard, A., Laviolette, F., and Morvant, E.
\newblock Pac-bayes and domain adaptation.
\newblock \emph{Neurocomputing}, 379:\penalty0 379--397, 2017.
\newblock URL \url{https://api.semanticscholar.org/CorpusID:53493590}.

\bibitem[Hahn et~al.(2017)Hahn, Murray, and Carvalho]{causal-bart-2}
Hahn, P.~R., Murray, J.~S., and Carvalho, C.~M.
\newblock Bayesian regression tree models for causal inference: Regularization,
  confounding, and heterogeneous effects.
\newblock \emph{Econometrics: Multiple Equation Models eJournal}, 2017.
\newblock URL \url{https://api.semanticscholar.org/CorpusID:34019969}.

\bibitem[Hill(2011)]{causal-bart-1}
Hill, J.~L.
\newblock Bayesian nonparametric modeling for causal inference.
\newblock \emph{Journal of Computational and Graphical Statistics},
  20:\penalty0 217 -- 240, 2011.
\newblock URL \url{https://api.semanticscholar.org/CorpusID:122155840}.

\bibitem[Hirano \& Imbens(2001)Hirano and Imbens]{linear-2}
Hirano, K. and Imbens, G.
\newblock Estimation of causal effects using propensity score weighting: An
  application to data on right heart catheterization.
\newblock \emph{Health Services and Outcomes Research Methodology}, 2:\penalty0
  259--278, 2001.
\newblock URL \url{https://api.semanticscholar.org/CorpusID:3346892}.

\bibitem[Johansson et~al.(2016)Johansson, Shalit, and Sontag]{nns-2}
Johansson, F.~D., Shalit, U., and Sontag, D.~A.
\newblock Learning representations for counterfactual inference.
\newblock \emph{ArXiv}, abs/1605.03661, 2016.
\newblock URL \url{https://api.semanticscholar.org/CorpusID:8558103}.

\bibitem[Johansson et~al.(2022)Johansson, Shalit, Kallus, and
  Sontag]{prior-work}
Johansson, F.~D., Shalit, U., Kallus, N., and Sontag, D.
\newblock Generalization bounds and representation learning for estimation of
  potential outcomes and causal effects.
\newblock \emph{Journal of Machine Learning Research}, 23\penalty0
  (166):\penalty0 1--50, 2022.
\newblock URL \url{http://jmlr.org/papers/v23/19-511.html}.

\bibitem[Jolicoeur-Martineau et~al.(2023)Jolicoeur-Martineau, Fatras, and
  Kachman]{forest-diffusion}
Jolicoeur-Martineau, A., Fatras, K., and Kachman, T.
\newblock Generating and imputing tabular data via diffusion and flow-based
  gradient-boosted trees, 2023.

\bibitem[K{\"u}nzel et~al.(2017)K{\"u}nzel, Sekhon, Bickel, and Yu]{x-learner}
K{\"u}nzel, S.~R., Sekhon, J.~S., Bickel, P.~J., and Yu, B.
\newblock Metalearners for estimating heterogeneous treatment effects using
  machine learning.
\newblock \emph{Proceedings of the National Academy of Sciences of the United
  States of America}, 116:\penalty0 4156 -- 4165, 2017.
\newblock URL \url{https://api.semanticscholar.org/CorpusID:73455742}.

\bibitem[Mansour et~al.(2009)Mansour, Mohri, and
  Rostamizadeh]{domain-adaptation-5}
Mansour, Y., Mohri, M., and Rostamizadeh, A.
\newblock Domain adaptation: Learning bounds and algorithms.
\newblock \emph{ArXiv}, abs/0902.3430, 2009.
\newblock URL \url{https://api.semanticscholar.org/CorpusID:6178817}.

\bibitem[Mohri et~al.(2012)Mohri, Rostamizadeh, and Talwalkar]{mohri}
Mohri, M., Rostamizadeh, A., and Talwalkar, A.
\newblock Foundations of machine learning.
\newblock In \emph{Adaptive computation and machine learning}, 2012.
\newblock URL \url{https://api.semanticscholar.org/CorpusID:263010642}.

\bibitem[Nie \& Wager(2017)Nie and Wager]{r-learner}
Nie, X. and Wager, S.
\newblock Quasi-oracle estimation of heterogeneous treatment effects.
\newblock \emph{Biometrika}, 2017.
\newblock URL \url{https://api.semanticscholar.org/CorpusID:85529052}.

\bibitem[Ohnishi \& Honorio(2021)Ohnishi and Honorio]{novel-change-of-measure}
Ohnishi, Y. and Honorio, J.
\newblock Novel change of measure inequalities with applications to
  pac-bayesian bounds and monte carlo estimation.
\newblock In Banerjee, A. and Fukumizu, K. (eds.), \emph{Proceedings of The
  24th International Conference on Artificial Intelligence and Statistics},
  volume 130 of \emph{Proceedings of Machine Learning Research}, pp.\
  1711--1719. PMLR, 13--15 Apr 2021.
\newblock URL \url{https://proceedings.mlr.press/v130/ohnishi21a.html}.

\bibitem[Oprescu et~al.(2023)Oprescu, Dorn, Ghoummaid, Jesson, Kallus, and
  Shalit]{b-learner}
Oprescu, M., Dorn, J., Ghoummaid, M., Jesson, A., Kallus, N., and Shalit, U.
\newblock B-learner: Quasi-oracle bounds on heterogeneous causal effects under
  hidden confounding.
\newblock In \emph{International Conference on Machine Learning}, 2023.
\newblock URL \url{https://api.semanticscholar.org/CorpusID:258291549}.

\bibitem[Pedregosa et~al.(2011)Pedregosa, Varoquaux, Gramfort, Michel, Thirion,
  Grisel, Blondel, Prettenhofer, Weiss, Dubourg, Vanderplas, Passos,
  Cournapeau, Brucher, Perrot, and Duchesnay]{scikit-learn}
Pedregosa, F., Varoquaux, G., Gramfort, A., Michel, V., Thirion, B., Grisel,
  O., Blondel, M., Prettenhofer, P., Weiss, R., Dubourg, V., Vanderplas, J.,
  Passos, A., Cournapeau, D., Brucher, M., Perrot, M., and Duchesnay, E.
\newblock Scikit-learn: Machine learning in {P}ython.
\newblock \emph{Journal of Machine Learning Research}, 12:\penalty0 2825--2830,
  2011.

\bibitem[Ruderman et~al.(2012)Ruderman, Reid, Garc{\'i}a-Garc{\'i}a, and
  Petterson]{variational-repr}
Ruderman, A., Reid, M.~D., Garc{\'i}a-Garc{\'i}a, D., and Petterson, J.
\newblock Tighter variational representations of f-divergences via restriction
  to probability measures.
\newblock \emph{ArXiv}, abs/1206.4664, 2012.
\newblock URL \url{https://api.semanticscholar.org/CorpusID:288983}.

\bibitem[Shalit et~al.(2017)Shalit, Johansson, and Sontag]{nns-1}
Shalit, U., Johansson, F.~D., and Sontag, D.
\newblock Estimating individual treatment effect: generalization bounds and
  algorithms.
\newblock In Precup, D. and Teh, Y.~W. (eds.), \emph{Proceedings of the 34th
  International Conference on Machine Learning}, volume~70 of \emph{Proceedings
  of Machine Learning Research}, pp.\  3076--3085. PMLR, 06--11 Aug 2017.
\newblock URL \url{https://proceedings.mlr.press/v70/shalit17a.html}.

\bibitem[Tsanas et~al.(2009)Tsanas, Little, McSharry, and Ramig]{parkinsons}
Tsanas, A., Little, M.~A., McSharry, P.~E., and Ramig, L.~O.
\newblock Accurate telemonitoring of parkinson's disease progression by
  noninvasive speech tests.
\newblock \emph{IEEE Transactions on Biomedical Engineering}, 57:\penalty0
  884--893, 2009.
\newblock URL \url{https://api.semanticscholar.org/CorpusID:7382779}.

\bibitem[Valiant(1984)]{pac}
Valiant, L.~G.
\newblock A theory of the learnable.
\newblock \emph{Commun. ACM}, 27:\penalty0 1134--1142, 1984.
\newblock URL \url{https://api.semanticscholar.org/CorpusID:59712}.

\bibitem[Wager \& Athey(2015)Wager and Athey]{trees-2}
Wager, S. and Athey, S.
\newblock Estimation and inference of heterogeneous treatment effects using
  random forests.
\newblock \emph{Journal of the American Statistical Association}, 113:\penalty0
  1228 -- 1242, 2015.
\newblock URL \url{https://api.semanticscholar.org/CorpusID:15676251}.

\bibitem[Yoon et~al.(2018)Yoon, Jordon, and van~der Schaar]{nns-3}
Yoon, J., Jordon, J., and van~der Schaar, M.
\newblock Ganite: Estimation of individualized treatment effects using
  generative adversarial nets.
\newblock In \emph{International Conference on Learning Representations}, 2018.
\newblock URL \url{https://api.semanticscholar.org/CorpusID:65516833}.

\bibitem[Zhang et~al.(2020)Zhang, Bellot, and van~der Schaar]{nns-4}
Zhang, Y., Bellot, A., and van~der Schaar, M.
\newblock Learning overlapping representations for the estimation of
  individualized treatment effects.
\newblock \emph{ArXiv}, abs/2001.04754, 2020.
\newblock URL \url{https://api.semanticscholar.org/CorpusID:210473399}.

\end{thebibliography}
\bibliographystyle{icml2024}

\newpage
\appendix
\onecolumn

\section{Theoretical Results}\label{appendix-a}

\subsection{Change of Measure}

\begin{lemma}[Lemma~\ref{thm:change-of-measure} in the main body]
    Let $P$ and $Q$ be as in Definition~\ref{def:pearson-chi2}.
    For any $\lambda > 0$,
    \begin{align*}
        \expect_{Q}[\phi] - E \leq \expect_{P}[\phi] \leq \expect_{Q}[\phi] + E
    \end{align*}
    where
    \[ E \coloneq \lambda \cdot \chi^2(Q \Vert P) + \frac{1}{4\lambda} \Var_P(\phi). \]
    Moreover, the bound is optimized for
    \[ \lambda^\star = \sqrt{\Var_P(\phi) / 4 \chi^2(Q \Vert P)}. \]
\end{lemma}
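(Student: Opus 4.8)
The plan is to reduce both inequalities to a single Cauchy--Schwarz estimate on the signed difference $\expect_Q[\phi] - \expect_P[\phi]$, and then to convert the resulting product bound into the $\lambda$-parametrized form via Young's inequality. Note first that the two-sided statement $\expect_Q[\phi] - E \leq \expect_P[\phi] \leq \expect_Q[\phi] + E$ is exactly the assertion that $\lvert \expect_Q[\phi] - \expect_P[\phi]\rvert \leq E$, so it suffices to bound this single absolute difference.

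I would begin by assuming $Q \ll P$ (otherwise $\chi^2(Q \Vert P) = +\infty$ and the bound is trivially true), so that the Radon--Nikodym derivative $\dif Q/\dif P$ exists. Rewriting the $Q$-expectation as a $P$-expectation gives
\[ \expect_Q[\phi] - \expect_P[\phi] = \expect_P\left[\phi\left(\frac{\dif Q}{\dif P} - 1\right)\right]. \]
The key step is a centering trick: since $\expect_P[\dif Q/\dif P - 1] = 1 - 1 = 0$, any constant may be subtracted from $\phi$ without altering the right-hand side. Choosing that constant to be $\expect_P[\phi]$ yields
\[ \expect_Q[\phi] - \expect_P[\phi] = \expect_P\left[\left(\phi - \expect_P[\phi]\right)\left(\frac{\dif Q}{\dif P} - 1\right)\right]. \]

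Next I would apply the Cauchy--Schwarz inequality to this inner product in $L^2(P)$, bounding the absolute value by $\sqrt{\expect_P[(\phi - \expect_P[\phi])^2]}\cdot\sqrt{\expect_P[(\dif Q/\dif P - 1)^2]}$. The first factor is exactly $\sqrt{\Var_P(\phi)}$ and the second is $\sqrt{\chi^2(Q \Vert P)}$ by Definition~\ref{def:pearson-chi2}, giving the sharp product bound $\lvert \expect_Q[\phi] - \expect_P[\phi]\rvert \leq \sqrt{\Var_P(\phi)\,\chi^2(Q \Vert P)}$. Centering is precisely what makes this tight: without subtracting the mean, Cauchy--Schwarz would produce the second moment $\expect_P[\phi^2]$ rather than the variance.

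Finally, to recover the stated $\lambda$-dependent form, I would apply Young's inequality $\sqrt{ab} \leq \lambda a + b/(4\lambda)$ (valid for all $\lambda > 0$ by AM--GM) with $a = \chi^2(Q \Vert P)$ and $b = \Var_P(\phi)$, which gives exactly $E = \lambda \chi^2(Q \Vert P) + \Var_P(\phi)/(4\lambda)$. Equality in Young's inequality holds precisely when $\lambda a = b/(4\lambda)$, and rearranging yields $\lambda^\star = \sqrt{\Var_P(\phi)/4\chi^2(Q \Vert P)}$, recovering the claimed optimizer; this is also the value at which the $\lambda$-bound collapses back to the sharp product $\sqrt{\Var_P(\phi)\,\chi^2(Q \Vert P)}$. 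There is no substantial obstacle in this argument: the only points needing care are the measure-theoretic justification of $Q \ll P$ and of the centering identity, which is the one genuinely load-bearing idea in the proof.
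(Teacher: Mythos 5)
Your proof is correct, but it takes a different route from the paper's. The paper does not argue from first principles: it invokes Lemma~2 of \cite{novel-change-of-measure} as a black box, namely the fixed inequality $\expect_Q[\psi] \leq \expect_P[\psi] + \chi^2(Q \Vert P) + \tfrac{1}{4}\Var_P[\psi]$, applies it to the two rescaled test functions $\psi = \lambda\phi$ and $\psi = -\lambda\phi$, and extracts the tunable parameter from the fact that the $\chi^2$ term is invariant under this rescaling while the variance term scales as $\lambda^2$; dividing by $\lambda$ then yields both sides of the bound, and the optimizer is found by differentiating in $\lambda$. You instead derive everything directly: the centering identity $\expect_Q[\phi] - \expect_P[\phi] = \expect_P[(\phi - \expect_P[\phi])(\dif Q/\dif P - 1)]$, Cauchy--Schwarz to get the sharp product bound $\sqrt{\Var_P(\phi)\,\chi^2(Q \Vert P)}$, and Young's inequality to split it into the $\lambda$-parametrized form. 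The two arguments are morally equivalent (the paper's $\pm\lambda\phi$ rescaling trick plays exactly the role of your Young's inequality, and the cited lemma is itself presumably proved by your centering-plus-Cauchy--Schwarz computation), but yours is self-contained, handles the degenerate case $Q \not\ll P$ explicitly, and makes transparent that at $\lambda^\star$ the bound collapses to the sharp product form --- a fact the paper's optimization-by-differentiation obscures. The one thing the paper's presentation buys is brevity and an explicit connection to the variational-representation literature it is positioning itself against.
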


\begin{proof}
    We build upon the variational representation framework of \cite{variational-repr} and \cite{novel-change-of-measure}.
    By Lemma 2 of \cite{novel-change-of-measure}, we have that, for all $R$ and $S$, for any $\psi$,
    \[ \expect_Q[\psi] \leq \expect_P[\psi] + \chi^2(Q \Vert P) + \frac{1}{4} \Var_P[\psi]. \]
    Consider two possible choices of $\psi$: $\psi_0 = \lambda \phi$ and $\psi_1 = -\lambda \phi$.
    We then have that:
    \begin{gather*}
        \expect_Q[\lambda \phi] = \lambda \expect_Q[\phi] \leq \expect_P[\lambda \phi] + \chi^2(Q \Vert P) + \frac{1}{4} \Var_P[\lambda \phi] = \lambda \expect_P[\phi] + \chi^2(Q \Vert P) + \frac{1}{4} \lambda^2 \Var_P[\phi]
        \\
        \expect_Q[-\lambda \phi] = -\lambda \expect_Q[\phi] \leq \expect_P[-\lambda \phi] + \chi^2(Q \Vert P) + \frac{1}{4} \Var_P[-\lambda \phi] = -\lambda \expect_P[\phi] + \chi^2(Q \Vert P) + \frac{1}{4} \lambda^2 \Var_P[\phi].
    \end{gather*}
    Rearranging, we obtain that, for any $\lambda > 0$,
    \begin{align*}
        \expect_{Q}[\phi] - \lambda \cdot \chi^2(Q \Vert P) - \frac{1}{4\lambda} \Var_P[\phi] \leq \expect_{P}[\phi] \leq \expect_{Q}[\phi] + \lambda \cdot \chi^2(Q \Vert P) + \frac{1}{4\lambda} \Var_P[\phi].
    \end{align*}
    Finally, this bound is tightest by optimizing
    \[ \lambda^\star = \argmin_{\lambda} \left( \expect_{Q}[\phi] + \lambda \cdot \chi^2(Q \Vert P) + \frac{1}{4\lambda} \Var_P(\phi) \right) = \argmin_{\lambda} \left( \lambda \cdot \chi^2(Q \Vert P) + \frac{1}{4\lambda} \Var_P(\phi) \right). \]
    And taking the derivative in respect to $\lambda$ and equating it to zero gives the desired result.
\end{proof}

\subsection{Bounds in Expectation}\label{sec:suppl-bounds-in-expectation}

\begin{theorem}[Theorem~\ref{thm:upper-bound-main-theoric-outcome} in the main body]
    For any $\lambda > 0$, loss function $\ell(\cdot, \cdot)$ and nonnegative reweighing function $w(X)$ with $\expect[w(X) | T=a] = 1$,
    \[ \expect[\ell(Y^a, h(X))] \leq \expect[w(X_i) \ell(Y_i, h(X_i)) | T=a] + E, \]
    where $\sigma^2 \coloneq \Var[\ell(Y^a, h(X))]$ and
    \[ E = \lambda \underbrace{\expect\left[ \left( w(X) \frac{\proba[T=a | X,U]}{\proba[T=a]} - 1 \right)^2 \right]}_{\Delta_{T=a}} + \frac{\sigma^2}{4\lambda}. \]
\end{theorem}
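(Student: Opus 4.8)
The plan is to obtain the bound as a direct instantiation of the change-of-measure inequality (Lemma~\ref{thm:change-of-measure}), applied on the \emph{extended} sample space that includes the unobserved confounder $U$. Concretely, I would take the base measure to be the complete law $P = P_{Y^a, X, U}$, the tilted measure to be the reweighted observed law $Q = P_{Y, \widetilde{X}, U \mid T=a}$, and the test function to be $\phi(y, x, u) = \ell(y, h(x))$, which simply ignores the $U$ coordinate. Since $\phi$ is a measurable function on this extended space, the Lemma applies verbatim, and its right-hand inequality gives
\[ \expect_P[\phi] \leq \expect_Q[\phi] + \lambda \cdot \chi^2(Q \Vert P) + \tfrac{1}{4\lambda} \Var_P(\phi). \]
It then remains to identify each of the three terms on the right with the quantities appearing in the statement.

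For the first term, because $\phi$ does not depend on $u$, marginalizing $U$ out of $P$ leaves $\expect_P[\phi] = \expect[\ell(Y^a, h(X))]$, the complete (unobservable) loss we wish to bound; likewise $\Var_P(\phi) = \Var[\ell(Y^a, h(X))] = \sigma^2$. For the observed-loss term, I would invoke the reweighting interpretation from Section~\ref{sec:bounds-for-outcome-regression}: $Q$ is obtained from the observed conditional law $P_{Y, X \mid T=a}$ by tilting the $X$-marginal by $w(X)$ while leaving the conditionals given $X$ unchanged, so $\dif Q / \dif P_{Y, X, U \mid T=a} = w(X)$ and hence $\expect_Q[\phi] = \expect[w(X)\, \ell(Y, h(X)) \mid T=a]$, exactly the observable term in the statement.

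The crux of the argument is the computation of $\chi^2(Q \Vert P) = \Delta_{T=a}$, which hinges on evaluating the Radon--Nikodym derivative $\dif Q / \dif P$ on the extended space. I would carry this out in the factored form already sketched before the theorem: first peel off the reweighting factor, writing $\dif P_{Y, \widetilde{X}, U \mid T=a} / \dif P_{Y^a, X, U} = w(X) \cdot \dif P_{Y, X, U \mid T=a}/\dif P_{Y^a, X, U}$; then note that SUTVA (consistency) yields $P_{Y \mid X, U, T=a} = P_{Y^a \mid X, U, T=a}$ and ignorability ($Y^1, Y^0 \indep T \mid X, U$) yields $P_{Y^a \mid X, U, T=a} = P_{Y^a \mid X, U}$, so the outcome-conditional factors cancel and the ratio collapses to $\dif P_{X, U \mid T=a}/\dif P_{X, U}$; finally Bayes' rule turns this into $\proba[T=a \mid X, U]/\proba[T=a]$. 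Substituting $\dif Q/\dif P = w(X)\, \proba[T=a \mid X, U]/\proba[T=a]$ into Definition~\ref{def:pearson-chi2} reproduces $\Delta_{T=a}$ verbatim, and plugging the three identifications into the displayed inequality gives the claim. I expect this Radon--Nikodym step to be the main obstacle: one must keep the conditioning on $T=a$ and the marginalization of $U$ consistent, and verify the identity even when positivity fails (if $\proba[T=a \mid X, U] = 0$ on a set of positive $P$-measure, then $\dif Q / \dif P = 0$ there, absolute continuity $Q \ll P$ still holds, and $\chi^2(Q \Vert P)$ simply accrues the corresponding $(0-1)^2$ penalty), which is precisely what lets the bound hold without ignorability or positivity.
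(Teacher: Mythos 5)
Your proposal is correct and follows essentially the same route as the paper's own proof: apply Lemma~\ref{thm:change-of-measure} with the complete law as $P$ and the reweighted observed law as $Q$, then collapse the Radon--Nikodym derivative to $w(X)\,\proba[T=a \mid X, U]/\proba[T=a]$ via SUTVA, ignorability given $(X,U)$, and Bayes' rule. If anything, your choice to instantiate the lemma directly on the extended space including $U$ (rather than stating the divergence on the $(Y,X)$-marginal and then computing the derivative on the extended space, as the paper does) is the cleaner reading, and your remark on how absolute continuity survives failures of positivity is a worthwhile observation the paper leaves implicit.
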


\begin{proof}
    First note that since $\expect[w(X) | T=a] = 1$, the reweighting $w(X)$ induces a distribution $P_{\widetilde{X}}$ over the $X$ with $\dif P_{\widetilde{X}}/\dif P_{X|T=1} = w(X)$.

    By Lemma~\ref{thm:change-of-measure}:
    \begin{align}\label{goal-prop1}
        &\expect[(Y^a - h(X))^2]
        \leq \expect[w(X) (Y^a - h(X))^2 | T=a] + \lambda \chi^2(P_{Y^a, \widetilde{X} | T=a} \Vert P_{Y^a, X}) + \frac{1}{4\lambda} \Var[(Y^a - h(X))^2];
    \end{align}
    Under SUTVA and ignorability with regards to $X$ and $U$ and using the fact that $\dif P_{\widetilde{X}}/\dif P_{X|T=1} = w(X)$ and a simple application of Bayes' rule, we have that
    \begin{align*}
        \frac{\dif P_{Y,\widetilde{X},U|T=a}}{\dif P_{Y,X,U}}
        &= \frac{\dif P_{Y,U|\widetilde{X},T=a} \cdot \dif P_{\widetilde{X}|T=a}}{\dif P_{Y^a,X,U}}
        = w(X) \frac{\dif P_{Y,U|X,T=a} \cdot \dif P_{X|T=a}}{\dif P_{Y^a,X,U}}
        = w(X) \frac{\dif P_{Y,X,U|T=a}}{\dif P_{Y^a,X,U}}
        \\ &= w(X) \frac{\dif P_{Y^a,X,U|T=a}}{\dif P_{Y^a,X,U}}
        = w(X) \frac{\dif P_{Y^a|X,U,T=a} \cdot \dif P_{X,U|T=a}}{\dif P_{Y^a|X,U} \cdot \dif P_{X,U}}
        = w(X) \frac{\dif P_{Y^a|X,U} \cdot \dif P_{X,U|T=a}}{\dif P_{Y^a|X,U} \cdot \dif P_{X,U}}
        \\ &= w(X) \frac{\dif P_{X,U|T=a}}{\dif P_{X,U}}
        = w(X) \frac{\dif P_{T=a|X,U} \cdot \dif P_{X,U} / \dif P_{T=a}}{\dif P_{X,U}}
        = w(X) \frac{\dif P_{T=a|X,U}}{\dif P_{T=a}}
        \\ &= w(X) \frac{\proba[T=a|X,U]}{\proba[T=a]},
    \end{align*}
    and thus
    \begin{align*}
        \chi^2(P_{Y, \widetilde{X} | T=a} \Vert P_{Y^a, X}) = \expect\left[\left(w(X) \frac{\proba[T=a | X, U]}{\proba[T=a]} - 1\right)^2\right].
    \end{align*}
    Plugging this into Equation~\ref{goal-prop1}, we conclude.
\end{proof}

\begin{theorem}[Theorem~\ref{thm:upper-bound-main-empiric-outcome} in the main body]
    Let $\Delta_{T=a}$ be as in Theorem~\ref{thm:upper-bound-main-theoric-outcome}. It holds that, for any $\nu$,
    \begin{align*}
        \Delta_{T=a}
            &\leq 2 \expect\left[ \left(\frac{w(X)}{\proba[T=a]}\right)^2 \left(\nu(X) - \ind[T=a]\right)^2 \right] + 2 \expect\left[ \left(w(X) \frac{\ind[T=a]}{\proba[T=a]} - 1\right)^2 \right].
    \end{align*}
\end{theorem}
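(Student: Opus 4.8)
The plan is to introduce $w(X)\,\ind[T=a]/\proba[T=a]$ as an intermediate pivot between the unobservable quantity $w(X)\,\proba[T=a\mid X,U]/\proba[T=a]$ and the constant $1$, decouple the two resulting discrepancies via the elementary inequality $(x+y)^2 \le 2x^2 + 2y^2$, and then replace the true propensity $\proba[T=a\mid X,U]$ by the arbitrary model $\nu(X)$ using the fact that the true propensity is the minimizer of a suitable weighted-least-squares functional.

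First I would write
\begin{align*}
    \Delta_{T=a} = \expect\left[\left(w(X)\frac{\proba[T=a\mid X,U]}{\proba[T=a]} - w(X)\frac{\ind[T=a]}{\proba[T=a]} + w(X)\frac{\ind[T=a]}{\proba[T=a]} - 1\right)^2\right],
\end{align*}
and apply $(x+y)^2 \le 2x^2 + 2y^2$ (valid since $2x^2 + 2y^2 - (x+y)^2 = (x-y)^2 \ge 0$) with $x = \tfrac{w(X)}{\proba[T=a]}(\proba[T=a\mid X,U] - \ind[T=a])$ and $y = w(X)\tfrac{\ind[T=a]}{\proba[T=a]} - 1$, noting that $x+y$ recovers exactly the integrand of $\Delta_{T=a}$. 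This gives
\begin{align*}
    \Delta_{T=a} \le \frac{2}{\proba[T=a]^2}\expect\left[w(X)^2\left(\proba[T=a\mid X,U] - \ind[T=a]\right)^2\right] + 2\expect\left[\left(w(X)\frac{\ind[T=a]}{\proba[T=a]} - 1\right)^2\right].
\end{align*}
The second summand already coincides with the second term of the claimed bound, so no further work is needed there.

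The crux is replacing the unobservable $\proba[T=a\mid X,U]$ in the first summand by an arbitrary $\nu(X)$ without invalidating the inequality. To this end I would establish that $\proba[T=a\mid X,U]$ is precisely the minimizer of the weighted functional $g \mapsto \expect[w(X)^2(g - \ind[T=a])^2]$ over all $\sigma(X,U)$-measurable $g$. Because the weight $w(X)^2$ is itself $\sigma(X,U)$-measurable, the standard orthogonality argument applies: writing $g^\star = \proba[T=a\mid X,U] = \expect[\ind[T=a]\mid X,U]$ and expanding $(g - \ind[T=a])^2 = (g - g^\star)^2 + 2(g-g^\star)(g^\star - \ind[T=a]) + (g^\star - \ind[T=a])^2$, the cross term vanishes after conditioning on $(X,U)$, since $w(X)^2(g-g^\star)$ is $\sigma(X,U)$-measurable and $\expect[g^\star - \ind[T=a]\mid X,U] = 0$. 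Hence $\expect[w(X)^2(g-\ind[T=a])^2] \ge \expect[w(X)^2(g^\star - \ind[T=a])^2]$ for every such $g$.

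Since any candidate $\nu(X)$ is a function of $X$ alone, it is in particular $\sigma(X,U)$-measurable, so this optimality yields $\expect[w(X)^2(\proba[T=a\mid X,U] - \ind[T=a])^2] \le \expect[w(X)^2(\nu(X) - \ind[T=a])^2]$; substituting into the first summand completes the proof. The main obstacle is exactly this optimality step: one must argue carefully that the $w(X)^2$-weighting does not shift the minimizer away from the conditional expectation (which it does not, precisely because the weight depends only on $X$ and is therefore measurable with respect to the conditioning $\sigma$-field), and that it is the enlargement of the admissible class from functions of $X$ to functions of $(X,U)$ that both makes $\proba[T=a\mid X,U]$ optimal and turns the substitution of $\nu$ into a valid relaxation rather than an equality.
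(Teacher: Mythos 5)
Your proposal is correct and follows essentially the same route as the paper's proof: the same pivot through $w(X)\,\ind[T=a]/\proba[T=a]$, the same application of $(x+y)^2 \leq 2x^2 + 2y^2$, and the same final step of replacing $\proba[T=a\mid X,U]$ by an arbitrary $\nu(X)$ via the optimality of the conditional expectation for the weighted Brier score. Your write-up of the orthogonality argument (the cross term vanishing after conditioning on $(X,U)$ because the weight $w(X)^2$ is $\sigma(X,U)$-measurable) is actually more explicit than the paper's, which simply asserts the minimization property.
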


\begin{proof}
    We can relax the bound from Theorem~\ref{thm:upper-bound-main-theoric-outcome} in order to solve this through the use of a relaxed triangular inequality:
    \begin{align*}
        & \Delta_{T=a} = \expect\left[\left(w(X) \frac{\proba[T=a | X, U]}{\proba[T=a]} - 1\right)^2\right]
        \\ &\quad \leq 2 \expect\left[\left(w(X) \frac{\proba[T=a | X, U]}{\proba[T=a]} - w(X) \frac{\ind[T=a]}{\proba[T=a]}\right)^2\right]
        + 2 \expect\left[\left(w(X) \frac{\ind[T=a]}{\proba[T=a]} - 1\right)^2\right]
    \end{align*}
    Which can then be rearranged into
    \begin{align*}
        \frac{2}{\proba[T=a]^2} \expect\left[w(X)^2 \left(\proba[T=a | X, U] - \ind[T=a]\right)^2\right]
        + 2 \expect\left[\left(w(X) \frac{\ind[T=a]}{\proba[T=a]} - 1\right)^2\right].
    \end{align*}
    Note that $\expect[w(X)^2 \left(\nu(X) - \ind[T=a]\right)^2]$ is the Brier Score (or Mean Squared Error) of $\nu(X)$ predicting $\ind[T=a]$, reweighted by the covariates $X$ according to $w(X)$. Therefore, it is minimized by $\nu^\star(X, U) = \expect[\ind[T=a] | X, U] = \proba[T=a | X, U]$.
    Therefore, it holds that
    \[ \forall nu. \quad \expect\left[w(X)^2 \left(\proba[T=a | X, U] - \ind[T=a]\right)^2\right] \leq \expect\left[w(X)^2 \left(\nu(X) - \ind[T=a]\right)^2\right], \]
    And we obtain that
    \begin{align*}
        \Delta_{T=a}
        \leq& \frac{2}{\proba[T=a]^2} \expect\left[w(X)^2 \left(\nu(X) - \ind[T=a]\right)^2\right]
        + 2 \expect\left[\left(w(X) \frac{\ind[T=a]}{\proba[T=a]} - 1\right)^2\right].
    \end{align*}
\end{proof}

Before we continue, let us prove a fundamental lemma about the pinball function:

\begin{lemma}\label{thm:pinball-basic}
    Let $\pinball_\alpha(x) = x \alpha \ind[x \geq 0] - x (1-\alpha) \ind[x < 0]$. Then, for all $x \in \mathbb{R}$ and $\alpha \in (0, 1)$, it holds that
    \begin{align*}
        x \alpha \leq \pinball_\alpha(x) \qquad\qquad \text{and} \qquad\qquad -x (1-\alpha) \leq \pinball_\alpha(x).
    \end{align*}
\end{lemma}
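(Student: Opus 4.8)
The plan is to recognize that the pinball function is the pointwise maximum of the two linear functions appearing on the left-hand sides, which makes both inequalities immediate. Concretely, I would first establish the identity
\[ \pinball_\alpha(x) = \max\{x\alpha,\ -x(1-\alpha)\}. \]
The two desired inequalities then follow at once, since the maximum of two quantities is never smaller than either of them.

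To prove this identity, I would split on the sign of $x$. When $x \geq 0$, the definition gives $\pinball_\alpha(x) = x\alpha$; here $x\alpha \geq 0 \geq -x(1-\alpha)$, using $\alpha \in (0,1)$ so that $1-\alpha > 0$, so the first argument is the larger one and the identity holds. When $x < 0$, the definition gives $\pinball_\alpha(x) = -x(1-\alpha)$; here $-x(1-\alpha) > 0 > x\alpha$, so the second argument is the larger one and the identity again holds.

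Alternatively, and perhaps more transparently, I would verify each of the two inequalities directly by the same case split, noting that in each case one side is an equality (when the relevant linear piece is active) while the other reduces to a sign condition on $x$. For instance, $x\alpha \leq \pinball_\alpha(x)$ is an equality when $x \geq 0$, and when $x < 0$ it rearranges to $x\alpha + x(1-\alpha) = x \leq 0$, which holds; the inequality $-x(1-\alpha) \leq \pinball_\alpha(x)$ is handled symmetrically, being an equality for $x < 0$ and reducing to $x \geq 0$ otherwise.

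There is no real obstacle here: the statement is an elementary consequence of the piecewise-linear definition, and the only point to keep in mind is that $\alpha \in (0,1)$ guarantees both coefficients $\alpha$ and $1-\alpha$ are strictly positive, so the signs behave as claimed. Since this lemma is presumably a stepping stone toward fitting the quantile loss into the relaxed-triangle-inequality framework of Assumption~\ref{loss-function-assumption}, I would favor stating it through the max-representation, as that form makes the subsequent lower bounds cleanest to apply.
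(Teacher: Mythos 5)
Your proposal is correct and follows essentially the same route as the paper: a case split on the sign of $x$, observing that each inequality is an equality on the branch where the corresponding linear piece is active and reduces to a sign check (using $\alpha \in (0,1)$) on the other. The max-representation $\pinball_\alpha(x) = \max\{x\alpha, -x(1-\alpha)\}$ is a clean repackaging of the same argument rather than a different proof.
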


\begin{proof}
    To prove that $x \alpha \leq \pinball_\alpha(x)$, we consider two cases:
    \begin{enumerate}
        \item If $x \geq 0$, then $\pinball_\alpha(x) = x \alpha$ and equality holds trivially.
        \item If $x < 0$, then $x \alpha < 0$. But certainly $\pinball_\alpha(x) = -x (1-\alpha) \geq 0$, so the inequality holds.
    \end{enumerate}
    To prove that $-x (1-\alpha) \leq \pinball_\alpha(x)$ we similarly consider two cases:
    \begin{enumerate}
        \item If $x > 0$, then $-x (1-\alpha) < 0$. But certainly $\pinball_\alpha(x) = x \alpha \geq 0$, so the inequality holds.
        \item If $x \leq 0$, then $\pinball_\alpha(x) = -x (1 - \alpha)$, and the equality holds trivially.
    \end{enumerate}
\end{proof}

\begin{proposition}[Applicability of Assumption~\ref{loss-function-assumption}]
    Let $\ell(\cdot, \cdot)$ be the mean squared error, mean absolute error, quantile loss or 0-1 loss. Then it holds that there exists some $C$ and $\psi$ that satisfy Assumption~\ref{loss-function-assumption} for $\ell$.
\end{proposition}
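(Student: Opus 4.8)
The plan is to verify the relaxed subadditive condition of Assumption~\ref{loss-function-assumption} case by case, using for each loss the pair $(\psi, C)$ already announced in the main text. Three of the four cases are elementary, so I would dispatch them first. For the mean squared error I take $\psi(x) = x^2$ and check that $(x \pm y)^2 \leq 2(x^2 + y^2)$, which holds because $2(x^2+y^2) - (x \pm y)^2 = (x \mp y)^2 \geq 0$; hence $C = 2$. For the mean absolute error and the $0$--$1$ loss I take $\psi(x) = \lvert x \rvert$ and invoke the ordinary triangle inequality $\lvert x \pm y \rvert \leq \lvert x \rvert + \lvert y \rvert$, giving $C = 1$.

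The main obstacle is the quantile (pinball) loss, whose $\psi = \pinball_\alpha$ is asymmetric, so one cannot simply exploit $\psi(-y) = \psi(y)$. Here I would first record the max-representation $\pinball_\alpha(z) = \max\{z\alpha,\, -z(1-\alpha)\}$, which is exactly the content of Lemma~\ref{thm:pinball-basic}. From it, subadditivity $\pinball_\alpha(x+y) \leq \pinball_\alpha(x) + \pinball_\alpha(y)$ follows immediately, since each argument of the max splits additively and is bounded termwise by the lemma (e.g. $(x+y)\alpha = x\alpha + y\alpha \leq \pinball_\alpha(x) + \pinball_\alpha(y)$, and analogously for $-(x+y)(1-\alpha)$). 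To cover the $x - y$ case and to produce the stated constant, I would compare $\pinball_\alpha(-y)$ with $\pinball_\alpha(y)$ via a short sign analysis, obtaining $\pinball_\alpha(-y) \leq \frac{\max\{\alpha,1-\alpha\}}{\min\{\alpha,1-\alpha\}}\,\pinball_\alpha(y)$. Combining subadditivity applied to $x$ and $-y$ with this comparison then yields $\pinball_\alpha(x - y) \leq \pinball_\alpha(x) + C\,\pinball_\alpha(y) \leq C(\pinball_\alpha(x) + \pinball_\alpha(y))$ with $C = \max\{\alpha,1-\alpha\}/\min\{\alpha,1-\alpha\} \geq 1$, and the same $C$ trivially works for $x+y$ since $C \geq 1$.

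The only subtlety to get right is that the asymmetry constant must be read off the correct case: the ratio $\pinball_\alpha(-y)/\pinball_\alpha(y)$ equals $(1-\alpha)/\alpha$ when $y > 0$ and $\alpha/(1-\alpha)$ when $y < 0$, and bounding both by $\max\{\alpha,1-\alpha\}/\min\{\alpha,1-\alpha\}$ is precisely what forces the stated constant. Since each of the four losses admits such a $(\psi, C)$, the proposition follows.
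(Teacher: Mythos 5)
Your proposal is correct and follows essentially the same route as the paper: the same pairs $(\psi, C)$ for each loss, the elementary (relaxed) triangle inequalities for the squared, absolute and $0$--$1$ losses, and Lemma~\ref{thm:pinball-basic} to handle the pinball case with the constant $\max\{\alpha,1-\alpha\}/\min\{\alpha,1-\alpha\}$. The only difference is organizational: you factor the pinball case into subadditivity plus a reflection bound $\pinball_\alpha(-y) \leq C\,\pinball_\alpha(y)$, whereas the paper runs the sign case analysis directly on the combined argument $y \pm \widehat{y}$; the underlying algebra is identical.
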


\begin{proof}
    For the mean squared error, write $\ell(y, \widehat{y}) = (y - \widehat{y})^2$, thus having $\psi(x) = x^2$. Then, by a relaxed triangular inequality, $(y \pm \widehat{y})^2 \leq 2 \cdot ( y^2 + (\pm \widehat{y})^2 ) = 2 \cdot ( y^2 + \widehat{y}^2 )$, meaning we can take $C=2$.

    For the mean absolute error, write $\ell(y, \widehat{y}) = \lvert y - \widehat{y} \rvert$, with $\psi(x) = \lvert x \rvert$. Then, by the standard triangular inequality, $\lvert y \pm \widehat{y} \rvert \leq \lvert y \rvert + \lvert \pm \widehat{y} \rvert = \lvert y \rvert + \lvert \widehat{y} \rvert$, meaning we can take $C=1$.

    For the $\alpha$-quantile loss, write $\ell(y, \widehat{y}) = \pinball_\alpha(y - \widehat{y})$, with $\psi(x) = \pinball_\alpha(x) = x \alpha \ind[x \geq 0] - x (1-\alpha) \ind[x < 0]$.
    To prove the triangular-type inequality, we consider two cases and use Lemma~\ref{thm:pinball-basic}:
    \begin{enumerate}
        \item When $y - \widehat{y} \geq 0$, it holds that
            \begin{align*}
                \pinball_\alpha(y - \widehat{y}) &= (y - \widehat{y}) \alpha = y \alpha + \frac{\alpha}{1-\alpha} [-\widehat{y} (1 - \alpha)] \leq \pinball_\alpha(y) + \frac{\alpha}{1-\alpha} \pinball_\alpha(\widehat{y}).
                \\
                \pinball_\alpha(y + \widehat{y}) &= (y + \widehat{y}) \alpha = y \alpha + \widehat{y} \alpha \leq \pinball_\alpha(y) + \pinball_\alpha(\widehat{y}).
            \end{align*}
        \item When $y - \widehat{y} < 0$, it holds that
            \begin{align*}
                \pinball_\alpha(y - \widehat{y}) &= -(y - \widehat{y}) (1-\alpha) = -y (1-\alpha) + \frac{1-\alpha}{\alpha} [\widehat{y} \alpha] \leq \pinball_\alpha(y) + \frac{1-\alpha}{\alpha} \pinball_\alpha(\widehat{y}).
                \\
                \pinball_\alpha(y + \widehat{y}) &= -(y + \widehat{y}) (1-\alpha) = -y (1-\alpha) - \widehat{y} (1-\alpha) \leq \pinball_\alpha(y) + \pinball_\alpha(\widehat{y}).
            \end{align*}
    \end{enumerate}
    Joining both, we conclude that
    \begin{align*}
        \pinball_\alpha(y \pm \widehat{y})
        &\leq \pinball_\alpha(y) + \max \left\{\frac{\alpha}{1-\alpha}, \frac{1-\alpha}{\alpha}, 1\right\} \pinball_\alpha(\widehat{y})
        \\ &\leq \max \left\{\frac{\alpha}{1-\alpha}, \frac{1-\alpha}{\alpha}, 1\right\} \left( \pinball_\alpha(y) + \pinball_\alpha(\widehat{y}) \right)
        \\ &= \max \left\{\frac{\alpha}{1-\alpha}, \frac{1-\alpha}{\alpha}\right\} \left( \pinball_\alpha(y) + \pinball_\alpha(\widehat{y}) \right)
        \\ &= \frac{\max\{\alpha,1-\alpha\}}{\min\{\alpha,1-\alpha\}} \left( \pinball_\alpha(y) + \pinball_\alpha(\widehat{y}) \right).
    \end{align*}

    Finally, for the 0-1 loss (in the binary case), consider $y, \widehat{y} \in \{0,1\}$ and write $\ell(y, \widehat{y}) = \lvert y - \widehat{y} \rvert$. Then the same logic as in the mean absolute error holds.
\end{proof}

\begin{proposition}[Proposition~\ref{thm:upper-bound-main-tlearner} of the main body]
    Let $\ell$ be a loss function satisfying Assumption~\ref{loss-function-assumption} with constant $C$, and let $w^1(X), w^0(X)$ be nonnegative reweighing functions.
    For any $\lambda_1, \lambda_0 > 0$,
    \begin{align*}
        \expect[\ell(Y^1-Y^0, h^1(X)-h^0(X))]
        &\leq C \Bigl( \expect[w^1(X) )\ell(Y, h^1(X)) | T=1] + \expect[w^0(X) \ell(Y, h^0(X)) | T=0]
        \\ &\qquad \quad + \underbrace{\lambda_1 \Delta_{T=1} + \lambda_0 \Delta_{T=0}}_{\Delta_{\mathrm{T-learner}}}
                  + \underbrace{\sigma^2_{T=1}/4\lambda_1 + \sigma^2_{T=0}/4\lambda_0}_{\sigma^2/4} \Bigr),
    \end{align*}
    Where $\sigma^2_{T=a} \coloneq \Var[\ell(Y^a, h^a(X))]$ and
    \begin{align*}
        \Delta_{T=a}
        &= \expect\left[ \left( w^a(X) \frac{\proba[T=a | X,U]}{\proba[T=a]} - 1 \right)^2 \right].
        \\ &\leq \frac{2}{\proba[T=a]^2} \expect\left[ w^a(X)^2 \left( \nu(X) - \ind[T=a] \right)^2 \right]
        \quad + 2 \expect\left[ \left( w^a(X) \frac{\ind[T=a]}{\proba[T=a]} - 1 \right)^2 \right].
    \end{align*}
\end{proposition}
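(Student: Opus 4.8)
The plan is to reduce this treatment-effect bound to two applications of the outcome-regression bound of Theorem~\ref{thm:upper-bound-main-theoric-outcome}, with the relaxed subadditivity of Assumption~\ref{loss-function-assumption} serving as the bridge. First I would rewrite the loss argument as a difference of individual residuals: writing $\ell(Y^1 - Y^0, h^1(X) - h^0(X)) = \psi\big((Y^1 - Y^0) - (h^1(X) - h^0(X))\big)$ and regrouping the argument as $(Y^1 - h^1(X)) - (Y^0 - h^0(X))$, the ``minus'' case of Assumption~\ref{loss-function-assumption} yields
\[ \ell(Y^1 - Y^0, h^1(X) - h^0(X)) \leq C\big(\ell(Y^1, h^1(X)) + \ell(Y^0, h^0(X))\big). \]
Taking expectations on both sides then gives the componentwise bound $\expect[\ell(Y^1 - Y^0, h^1(X) - h^0(X))] \leq C(\expect[\ell(Y^1, h^1(X))] + \expect[\ell(Y^0, h^0(X))])$ already sketched in the main body.

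Next I would apply Theorem~\ref{thm:upper-bound-main-theoric-outcome} separately to each of the two expectations---once with $a=1$, reweighing function $w^1$, and tuning parameter $\lambda_1$, and once with $a=0$, $w^0$, and $\lambda_0$. The hypotheses $\expect[w^a(X) \mid T=a] = 1$ required by that theorem hold by assumption, so each application bounds $\expect[\ell(Y^a, h^a(X))]$ by the observable reweighed conditional loss $\expect[w^a(X)\ell(Y, h^a(X)) \mid T=a]$ plus the error $\lambda_a \Delta_{T=a} + \sigma^2_{T=a}/4\lambda_a$. Summing these two bounds, multiplying through by $C$, and collecting the divergence terms into $\Delta_{\mathrm{T-learner}}$ and the variance terms into $\sigma^2/4$ reproduces the displayed inequality exactly.

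Finally, the second (relaxed) inequality, which upper bounds each $\Delta_{T=a}$ by a Brier-score term plus a balance term, is precisely the content of Theorem~\ref{thm:upper-bound-main-empiric-outcome} instantiated with the weight $w^a$, so it can be invoked directly rather than reproven. The only step requiring genuine care is the regrouping in the first paragraph: the decomposition succeeds exactly because the treatment-effect residual $(Y^1 - Y^0) - (h^1(X) - h^0(X))$ splits additively into the two individual residuals, letting the subadditive bound on $\psi$ apply with a single factor of $C$. I do not anticipate any real obstacle beyond verifying this rearrangement and ensuring each outcome-regression bound is instantiated with the matching $a$, $w^a$, and $\lambda_a$; the proposition is otherwise a clean composition of the two earlier results.
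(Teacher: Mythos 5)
Your proposal is correct and follows essentially the same route as the paper: decompose the treatment-effect residual as $(Y^1 - h^1(X)) - (Y^0 - h^0(X))$, apply the relaxed subadditivity of Assumption~\ref{loss-function-assumption} to split into the two componentwise losses, and then invoke Theorems~\ref{thm:upper-bound-main-theoric-outcome} and \ref{thm:upper-bound-main-empiric-outcome} with the matching $a$, $w^a$, $\lambda_a$. If anything, your write-up is slightly more careful than the paper's, which drops the factor $C$ in its displayed inequality chain.
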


\begin{proof}
    Using Assumption~\ref{loss-function-assumption},
    \begin{align*}
        \expect[\ell(Y^1-Y^0, h^1(X)-h^0(X))]
        &= \expect[\psi\left((Y^1-Y^0) - (h^1(X)-h^0(X))\right)]
        \\ &= \expect[\psi\left((Y^1 - h^1(X)) - (Y^0 - h^0(X))\right)]
        \\ &\leq \expect[\psi(Y^1 - h^1(X)) + \psi(Y^0 - h^0(X))]
        = \expect[\psi(Y^1 - h^1(X))] + \expect[\psi(Y^0 - h^0(X))].
    \end{align*}
    Then, the rest follows by applying Theorem~\ref{thm:upper-bound-main-theoric-outcome} and \ref{thm:upper-bound-main-empiric-outcome}.
\end{proof}

\begin{proposition}[Proposition~\ref{thm:upper-bound-main-xlearner} of the main body]
    Let $\ell$ be a loss function satisfying Assumption~\ref{loss-function-assumption} with constant $C$ and let $w^1$, $w^0$ be nonnegative reweighting functions.
    For any $\lambda_1, \lambda_0, \lambda_{0,1}, \lambda_{1,0} > 0$,
    \begin{align*}
        \expect[\ell(Y^1-Y^0, e(X) \tau^1(X) + \bar{e}(X) \tau^0(X))]
        &\leq C^2 \Bigl(
            \expect[w^1(X) \ell_{\bar{e}}(Y, h^1(X)) | T=1]
            + \expect[w^0(X) \ell_{e}(Y, h^0(X)) | T=0]
            \\ &\kern-10em + \expect[w^1(X) \ell_e(Y^1 - h^0(X), \tau^1(X)) | T=1]
            + \expect[w^0(X) \ell_{\bar{e}}(h^1(X) - Y^0, \tau^0(X)) | T=0]
            \\ &\kern-10em + \underbrace{(\lambda_1 + \lambda_{1,0}) \Delta_{T=1} + (\lambda_0 + \lambda_{0,1}) \Delta_{T=0}}_{\Delta_{\mathrm{X-learner}}}
            + \underbrace{\sigma^2_{T=1}/4\lambda_1 + \sigma^2_{T=0}/4\lambda_0 + \sigma^2_{1,0}/4\lambda_{1,0} + \sigma^2_{0,1}/4\lambda_{0,1}}_{\sigma^2/4}
        \Bigr),
    \end{align*}
    where $\sigma^2_{T=a} \coloneq \Var[\ell(Y^a, h^a(X))]$, $\sigma^2_{a,b} \coloneq \Var[\ell(Y^a - h^b(X), \tau^a(X))]$ and
    \begin{align*}
        \Delta_{T=a}
        &= \expect\left[ \left( w^a(X) \frac{\proba[T=a | X,U]}{\proba[T=a]} - 1 \right)^2 \right]
        \\ &\leq \frac{2}{\proba[T=a]^2} \expect\left[ w^a(X)^2 \left( \nu(X) - \ind[T=a] \right)^2 \right]
        + 2 \expect\left[ \left( w^a(X) \frac{\ind[T=a]}{\proba[T=a]} - 1 \right)^2 \right].
    \end{align*}
\end{proposition}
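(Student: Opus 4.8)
The plan is to mirror the proof of Proposition~\ref{thm:upper-bound-main-tlearner}, the only new ingredient being the two-stage residual decomposition specific to the X-learner, which forces Assumption~\ref{loss-function-assumption} to be invoked twice and hence produces the constant $C^2$ rather than $C$.

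First I would carry out the algebraic decomposition of the treatment-effect residual that is asserted (but not derived) just before the proposition. Writing $\ell(Y,\widehat Y)=\psi(Y-\widehat Y)$ and using $e(X)+\bar e(X)=1$, I split
\[ (Y^1-Y^0) - \left(e\tau^1+\bar e\tau^0\right) = e\left[(Y^1-Y^0)-\tau^1\right] + \bar e\left[(Y^1-Y^0)-\tau^0\right], \]
and then rewrite each bracket so that the pseudo-labels on which $\tau^1$ and $\tau^0$ were trained appear explicitly:
\[ (Y^1-Y^0)-\tau^1 = \left[(Y^1-h^0)-\tau^1\right]-\left[Y^0-h^0\right], \]
\[ (Y^1-Y^0)-\tau^0 = \left[(h^1-Y^0)-\tau^0\right]+\left[Y^1-h^1\right]. \]
Applying the relaxed subadditivity of Assumption~\ref{loss-function-assumption} once to separate the $e$- and $\bar e$-weighted pieces, and once more inside each bracket, together with the elementary identities $\psi(e z)=\ell_e(\,\cdot\,)$ and $\psi(\bar e z)=\ell_{\bar e}(\,\cdot\,)$, reproduces the four-term bound with leading factor $C^2$. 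Taking expectations leaves me with the four complete losses $\expect[\ell_{\bar e}(Y^1,h^1)]$, $\expect[\ell_e(Y^0,h^0)]$, $\expect[\ell_e(Y^1-h^0,\tau^1)]$ and $\expect[\ell_{\bar e}(h^1-Y^0,\tau^0)]$.

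Second, I would bound each of these four expectations by Theorems~\ref{thm:upper-bound-main-theoric-outcome} and~\ref{thm:upper-bound-main-empiric-outcome}. The first two are ordinary outcome regressions for $Y^1$ and $Y^0$ under the modified losses $\ell_{\bar e}$ and $\ell_e$, so the theorem applies verbatim with $(a,w,\lambda)=(1,w^1,\lambda_1)$ and $(0,w^0,\lambda_0)$, contributing $\lambda_1\Delta_{T=1}$ and $\lambda_0\Delta_{T=0}$. The two $\tau$-terms are pseudo-outcome regressions on the $T=1$ and $T=0$ subsamples respectively, so I apply the same theorem with the ``outcome'' taken to be the pseudo-label and $(a,\lambda)=(1,\lambda_{1,0})$ and $(0,\lambda_{0,1})$, contributing $\lambda_{1,0}\Delta_{T=1}$ and $\lambda_{0,1}\Delta_{T=0}$. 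Regrouping the four divergence terms gives $(\lambda_1+\lambda_{1,0})\Delta_{T=1}+(\lambda_0+\lambda_{0,1})\Delta_{T=0}$ and the four variance terms give the stated $\sigma^2/4$ block; relaxing each $\Delta_{T=a}$ to its Brier-score form is then exactly Theorem~\ref{thm:upper-bound-main-empiric-outcome}.

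The step requiring the most care is justifying that Theorem~\ref{thm:upper-bound-main-theoric-outcome} genuinely applies to the pseudo-outcome regressions, since that theorem was established for regressing the true potential outcome $Y^a$ and its $\Delta_{T=a}$ arose from the Radon--Nikodym derivative $\dif P_{Y,\widetilde X,U|T=a}/\dif P_{Y^a,X,U}$ computed under ignorability. The key observation is that $h^0$ and $h^1$ are functions of $X$ alone, so a pseudo-label such as $Y^1-h^0(X)$ is merely a measurable transform of $(Y^1,X)$; the change of measure concerns only the shift in the joint law of $(X,U)$ induced by conditioning on $T=a$ and reweighting by $w^a$, and is insensitive to which measurable function of $(Y^a,X)$ is fed into the loss. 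Consequently the identical derivative---and hence the identical $\Delta_{T=a}$---is recovered, and the theorem transfers unchanged. Once this is in place the remainder is bookkeeping of the $\lambda$ and $\sigma^2$ indices, exactly as in Proposition~\ref{thm:upper-bound-main-tlearner}.
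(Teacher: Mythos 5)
Your proposal is correct and follows essentially the same route as the paper: the same four-term residual decomposition (you reach it by splitting on $e+\bar e=1$ and then inserting the pseudo-labels, while the paper adds and subtracts them directly, but the resulting terms and the double application of Assumption~\ref{loss-function-assumption} yielding $C^2$ are identical), followed by applying Theorems~\ref{thm:upper-bound-main-theoric-outcome} and~\ref{thm:upper-bound-main-empiric-outcome} to each of the four complete losses. Your explicit justification that the change-of-measure argument transfers to the pseudo-outcome regressions because $h^0,h^1$ are functions of $X$ alone is a point the paper leaves implicit, and is a welcome addition rather than a deviation.
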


\begin{proof}
    Using Assumption~\ref{loss-function-assumption},
    \begin{align*}
        & \expect[\ell(Y^1-Y^0, e(X) \tau^1(X) + (1-e(X)) \tau^0(X))]
        \\ &= \expect[\psi\left((Y^1-Y^0) - (e(X) \tau^1(X) + (1-e(X)) \tau^0(X))\right)]
        \\ &= \expect[\psi\bigl(Y^1 - Y^0 - e(X) (\tau^1(X) - (Y^1 - h^0(X)) + (Y^1 - h^0(X)))
            \\ &\qquad - (1-e(X)) (\tau^0(X) - (h^1(X) - Y^0) + (h^1(X) - Y^0))\bigr)]
        \\ &= \expect[\psi\bigl( Y^1 - Y^0 - e(X) Y^1 + e(X) h^0(X) - (1-e(X)) h^1(X) + (1-e(X)) Y^0
            \\ &\qquad - e(X) (\tau^1(X) - (Y^1 - h^0(X))) - (1-e(X)) (\tau^0(X) - (h^1(X) - Y^0)) \bigr)]
        \\ &= \expect[\psi\bigl( (1-e(X)) (Y^1 - h^1(X)) - e(X) (Y^0 - h^0(X))
            \\ &\qquad - e(X) (\tau^1(X) - (Y^1 - h^0(X))) - (1-e(X)) (\tau^0(X) - (h^1(X) - Y^0)) \bigr)]
        \\ &\leq C ( \expect[\psi\left( (1-e(X)) (Y^1 - h^1(X)) - e(X) (Y^0 - h^0(X)) \right)]
            \\ &\qquad + \expect[\psi\left( e(X) (\tau^1(X) - (Y^1 - h^0(X))) + (1-e(X)) (\tau^0(X) - (h^1(X) - Y^0)) \right)] )
        \\ &\leq C^2 ( \expect[\psi\left( (1-e(X)) (Y^1 - h^1(X)) \right)] + \expect[\psi\left( e(X) (Y^0 - h^0(X)) \right)]
            \\ &\qquad + \expect[\psi\left( e(X) (\tau^1(X) - (Y^1 - h^0(X))) \right)] + \expect[\psi\left( (1-e(X)) (\tau^0(X) - (h^1(X) - Y^0)) \right)] )
        \\ &= C^2 ( \expect[\ell_{\bar{e}}(Y^1, h^1(X))] + \expect[\ell_e(Y^0, h^0(X))] + \expect[\ell_e(Y^1 - h^0(X), \tau^1(X))] + \expect[\ell_{\bar{e}}(h^1(X) - Y^0, \tau^0(X))] )
    \end{align*}
    Then, the rest follows by applying Theorems~\ref{thm:upper-bound-main-theoric-outcome} and \ref{thm:upper-bound-main-empiric-outcome}.
\end{proof}

\subsection{Finite-sample Bounds}\label{sec:more-generalization-bounds}

Throughout this section, we rely on bounds on the random variables within the expectations that appear on the right-hand-side of the results in Section~\ref{sec:suppl-bounds-in-expectation}.

\begin{lemma}\label{thm:outcome-regression-hoeffding-bounds}
    Suppose that $\ell$ is a loss function bounded in $[0, M]$, $w(X)$ is a nonnegative reweighing function bounded in $[0, w_{\max}]$, and that $\nu(X)$ is bounded in $[0, 1]$. It then holds that, for all $x, y$:
    \begin{gather*}
        0 \leq w(x) \ell(y, h(x)) \leq w_{\max} M
        \\
        0 \leq \left( \frac{w(x)}{\proba[T=a]} \right)^2 (\nu(x) - \ind[T=a])^2 + \left( w(x) \frac{\ind[T=a]}{\proba[T=a]} - 1 \right)^2 \leq \left( \frac{w_{\max}}{\proba[T=a]} \right)^2 + \max \left\{ 1, \left( \frac{w_{\max}}{\proba[T=a]} - 1 \right)^2 \right\}.
    \end{gather*}
\end{lemma}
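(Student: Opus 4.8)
The plan is to bound each random variable termwise, treating the two displayed inequalities separately and relying only on elementary monotonicity together with a short case split on the indicator $\ind[T=a]$. For the first inequality, the lower bound is immediate: since $w(x) \geq 0$ and $\ell(y, h(x)) \geq 0$, their product is nonnegative. The upper bound follows by multiplying the two pointwise hypotheses $w(x) \leq w_{\max}$ and $\ell(y, h(x)) \leq M$, both of which hold since each factor is nonnegative.

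For the second inequality, the lower bound is again trivial because the left-hand side is a sum of two squares. For the upper bound I would bound each summand in isolation. The first summand is $\left(w(x)/\proba[T=a]\right)^2 (\nu(x) - \ind[T=a])^2$; since $w(x) \in [0, w_{\max}]$ the first factor is at most $\left(w_{\max}/\proba[T=a]\right)^2$, and since $\nu(x) \in [0,1]$ and $\ind[T=a] \in \{0,1\}$ the difference $\nu(x) - \ind[T=a]$ lies in $[-1,1]$, so the squared second factor is at most $1$. Multiplying these yields the bound $\left(w_{\max}/\proba[T=a]\right)^2$ on the first summand.

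The only step requiring slight care is the second summand $\left(w(x)\ind[T=a]/\proba[T=a] - 1\right)^2$, where I would split on the value of the indicator. When $\ind[T=a] = 0$ the summand equals $1$. When $\ind[T=a] = 1$ it becomes $\left(w(x)/\proba[T=a] - 1\right)^2$, an upward-opening parabola in $w(x)$ whose maximum over the interval $[0, w_{\max}]$ is therefore attained at one of the endpoints, giving $\max\{1, \left(w_{\max}/\proba[T=a] - 1\right)^2\}$. Taking the larger of the two cases bounds the second summand by $\max\{1, \left(w_{\max}/\proba[T=a] - 1\right)^2\}$, and adding the two summand bounds produces exactly the stated right-hand side. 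This endpoint argument for the convex parabola is the only non-immediate point; everything else is direct substitution of the hypothesized ranges.
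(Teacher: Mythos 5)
Your proposal is correct and follows essentially the same route as the paper's proof: termwise bounding of each factor, with the second summand handled by observing that $w(x)\ind[T=a]/\proba[T=a] - 1$ lies in $[-1,\, w_{\max}/\proba[T=a]-1]$ so its square is at most $\max\{1, (w_{\max}/\proba[T=a]-1)^2\}$. Your explicit case split on the indicator plus the convexity/endpoint argument is just a slightly more verbose packaging of the paper's interval-containment step; there is no substantive difference.
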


\begin{proof}
    For the first bound, simply note that both $w(x)$ and $\ell(y, h(x))$ are nonnegative, and that
    \[ w(x) \ell(y, h(x)) \leq w_{\max} \ell(y, h(x)) \leq w_{\max} M. \]
    The second bound is slightly more involved.
    \[ 0 \leq \frac{w(x)}{\proba[T=a]} \leq \frac{w_{\max}}{\proba[T=a]} \implies \left( \frac{w(x)}{\proba[T=a]} \right)^2 \leq \left( \frac{w_{\max}}{\proba[T=a]} \right)^2 \]
    and
    \[ -1 \leq \nu(x) - \ind[T=a] \leq 1 \implies 0 \leq (\nu(x) - \ind[T=a])^2 \leq 1. \]
    Together, we get that
    \[ 0 \leq \left( \frac{w(x)}{\proba[T=a]} \right)^2 (\nu(x) - \ind[T=a])^2 \leq \left( \frac{w_{\max}}{\proba[T=a]} \right)^2. \]
    Next,
    \[ -1 \leq w(x) \frac{\ind[T=a]}{\proba[T=a]} - 1 \leq \frac{w_{\max}}{\proba[T=a]} - 1 \implies 0 \leq \left( w(x) \frac{\ind[T=a]}{\proba[T=a]} - 1 \right)^2 \leq \max \left\{ 1, \left( \frac{w_{\max}}{\proba[T=a]} - 1 \right)^2 \right\}. \]
    Combining everything, we get that
    \[ 0 \leq \left( \frac{w(x)}{\proba[T=a]} \right)^2 (\nu(x) - \ind[T=a])^2 + \left( w(x) \frac{\ind[T=a]}{\proba[T=a]} - 1 \right)^2 \leq \left( \frac{w_{\max}}{\proba[T=a]} \right)^2 + \max \left\{ 1, \left( \frac{w_{\max}}{\proba[T=a]} - 1 \right)^2 \right\}. \]
\end{proof}

We base ourselves on a variation of the standard PAC Rademacher Complexity bound presented in~\cite{mohri}:

\begin{lemma}[Theorem 3.3 in \cite{mohri}]\label{thm:rademacher-original}
    Let $\mathcal{G}$ be family of functions mapping from $\mathcal{Z}$ to $[0, 1]$. Then, for any $\delta > 0$, with probability at least $1 - \delta$ over the draw of an i.i.d. sample $(Z_i)_{i=1}^n$, the following holds for all $g \in \mathcal{G}$:
    \[ \expect[g(Z)] \leq \frac{1}{n} \sum_{i=1}^n g(Z_i) + 2 \mathfrak{R}(\mathcal{G}) + \sqrt{\frac{\log 1/\delta}{2n}}. \]
\end{lemma}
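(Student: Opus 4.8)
The plan is to follow the standard two-part argument underlying all average-Rademacher generalization bounds: a concentration step via McDiarmid's bounded-differences inequality, followed by a symmetrization step that relates the expected uniform deviation to the Rademacher complexity $\mathfrak{R}(\mathcal{G})$. Writing $S = (Z_i)_{i=1}^n$, I would introduce the one-sided uniform deviation
\[ \Phi(S) = \sup_{g \in \mathcal{G}} \left( \expect[g(Z)] - \frac{1}{n}\sum_{i=1}^n g(Z_i) \right), \]
and control it with high probability, since the claimed inequality is exactly the statement that $\Phi(S)$ is small.

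First I would establish the bounded-differences property of $\Phi$. Because every $g \in \mathcal{G}$ takes values in $[0,1]$, replacing a single sample point $Z_i$ by an arbitrary $Z_i'$ changes the empirical average $\frac{1}{n}\sum_j g(Z_j)$ by at most $1/n$, uniformly in $g$; as $\Phi$ is a supremum of quantities each perturbed by at most $1/n$, $\Phi$ itself changes by at most $1/n$ under any single-coordinate substitution. McDiarmid's inequality then yields, with probability at least $1-\delta$,
\[ \Phi(S) \leq \expect[\Phi(S)] + \sqrt{\frac{\log(1/\delta)}{2n}}. \]

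The crux is the symmetrization step, where I bound $\expect[\Phi(S)] \leq 2\mathfrak{R}(\mathcal{G})$. I would introduce a ghost sample $S' = (Z_i')_{i=1}^n$ drawn i.i.d. from the same law and independent of $S$, rewrite $\expect[g(Z)] = \expect_{S'}[\frac{1}{n}\sum_i g(Z_i')]$, and pull the expectation over $S'$ outside the supremum through Jensen's inequality (sup of an expectation is at most the expectation of the sup), giving
\[ \expect[\Phi(S)] \leq \expect_{S,S'}\left[ \sup_{g \in \mathcal{G}} \frac{1}{n}\sum_{i=1}^n \left( g(Z_i') - g(Z_i) \right) \right]. \]
Since each pair $(Z_i, Z_i')$ is exchangeable, multiplying the $i$-th summand by an independent Rademacher sign $\sigma_i \in \{-1,+1\}$ leaves the joint distribution unchanged; splitting the supremum of the difference into two separate suprema (subadditivity of the supremum) and using that $-\sigma_i$ is distributed as $\sigma_i$ produces exactly $2\mathfrak{R}(\mathcal{G})$. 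Chaining this with the McDiarmid bound gives the stated inequality.

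The main obstacle is keeping the symmetrization rigorous: justifying the swap of supremum and expectation via Jensen, and verifying that exchangeability of the paired samples genuinely licenses the insertion of the Rademacher variables and the subsequent decoupling of the two sums. The bounded-differences step is routine once the $1/n$ constant is confirmed. As this is precisely Theorem~3.3 of \cite{mohri}, I would ultimately defer to that reference for the complete details.
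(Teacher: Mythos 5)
Your sketch is correct: the bounded-differences constant of $1/n$ follows from $g$ taking values in $[0,1]$, McDiarmid then controls $\Phi(S)$ around its mean, and the ghost-sample symmetrization with Rademacher signs bounds $\expect[\Phi(S)]$ by $2\mathfrak{R}(\mathcal{G})$, which is exactly the textbook argument for Theorem~3.3 of \cite{mohri}. The paper itself gives no proof of this lemma --- it imports it verbatim from that reference --- so your reconstruction matches the (only) intended derivation and there is nothing to compare beyond noting that you have correctly reproduced it.
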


\begin{lemma}\label{thm:rademacher-base}
    Let $\mathcal{G}$ be family of functions mapping from $\mathcal{Z}$ to $[0, M]$. Then, for any $\delta > 0$, with probability at least $1 - \delta$ over the draw of an i.i.d. sample $(Z_i)_{i=1}^n$, the following holds for all $g \in \mathcal{G}$:
    \[ \expect[g(Z)] \leq \frac{1}{n} \sum_{i=1}^n g(Z_i) + 2 \mathfrak{R}(\mathcal{G}) + M \sqrt{\frac{\log 1/\delta}{2n}}. \]
\end{lemma}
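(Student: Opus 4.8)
The plan is to reduce this to the already-stated Lemma~\ref{thm:rademacher-original} by a simple rescaling argument, exploiting the positive homogeneity of the Rademacher complexity. Since every $g \in \mathcal{G}$ takes values in $[0, M]$, the rescaled function $g/M$ takes values in $[0, 1]$, so the family $\mathcal{G}' \coloneq \{ z \mapsto g(z)/M : g \in \mathcal{G} \}$ satisfies the hypothesis of Lemma~\ref{thm:rademacher-original} verbatim.

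First I would apply Lemma~\ref{thm:rademacher-original} directly to $\mathcal{G}'$: with probability at least $1 - \delta$ over the draw of $(Z_i)_{i=1}^n$, simultaneously for all $g' \in \mathcal{G}'$,
\[ \expect[g'(Z)] \leq \frac{1}{n} \sum_{i=1}^n g'(Z_i) + 2 \mathfrak{R}(\mathcal{G}') + \sqrt{\frac{\log 1/\delta}{2n}}. \]
Next I would rewrite the complexity term. By the definition of the (empirical) Rademacher complexity as a supremum over the family of an average of the form $\frac{1}{n} \sum_i \sigma_i g(Z_i)$, the positive constant factor $1/M$ can be pulled out of the supremum, giving $\mathfrak{R}(\mathcal{G}') = \mathfrak{R}(\mathcal{G})/M$. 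Substituting $g' = g/M$ together with this identity into the displayed inequality makes every term carry a factor of $1/M$; multiplying through by $M > 0$ and cancelling then yields
\[ \expect[g(Z)] \leq \frac{1}{n} \sum_{i=1}^n g(Z_i) + 2 \mathfrak{R}(\mathcal{G}) + M \sqrt{\frac{\log 1/\delta}{2n}} \]
for all $g \in \mathcal{G}$, on the very same probability-$(1-\delta)$ event, which is exactly the claim.

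There is no substantive obstacle here, as the result is a pure normalization of a known bound. The only point deserving an explicit line is the homogeneity identity $\mathfrak{R}(c\,\mathcal{G}) = c\,\mathfrak{R}(\mathcal{G})$ for $c > 0$, which is immediate from the definition but should be stated so that the constant is tracked faithfully through the rescaling and lands on the $\sqrt{\log(1/\delta)/2n}$ term as the factor $M$.
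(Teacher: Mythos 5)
Your proposal is correct and is essentially identical to the paper's own proof: both rescale each $g$ by $1/M$ to land in the hypotheses of Lemma~\ref{thm:rademacher-original}, use positive homogeneity of the Rademacher complexity to identify $\mathfrak{R}(\mathcal{G}/M) = \mathfrak{R}(\mathcal{G})/M$, and multiply the resulting inequality through by $M$. If anything, your write-up is slightly cleaner in keeping the original and rescaled families notationally distinct.
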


\begin{proof}
    Writing $g(z) = M^{-1} \widetilde{g}(z)$, we have by the definition of the Rademacher complexity and Lemma~\ref{thm:rademacher-original} that
    \[ \mathfrak{R}(\mathcal{G}) = \expect\left[ \expect_{\mathbf{\sigma}} \left[ \sup_{g \in \mathcal{G}} \frac{1}{n} \sum_{i=1}^n \sigma_i g(Z_i) \right] \right] = \expect\left[ \expect_{\mathbf{\sigma}} \left[ \sup_{\widetilde{g} \in \widetilde{\mathcal{G}}} \frac{1}{n} \sum_{i=1}^n \sigma_i M^{-1} \widetilde{g}(Z_i) \right] \right] = M^{-1} \expect\left[ \expect_{\mathbf{\sigma}} \left[ \sup_{\widetilde{g} \in \widetilde{\mathcal{G}}} \frac{1}{n} \sum_{i=1}^n \sigma_i \widetilde{g}(Z_i) \right] \right] = M^{-1} \mathfrak{R}(\widetilde{\mathcal{G}}) \]
    and so
    \begin{align*}
        & M^{-1} \expect[\widetilde{g}(Z)] \leq M^{-1} \frac{1}{n} \sum_{i=1}^n \widetilde{g}(Z_i) + 2 M^{-1} \mathfrak{R}(\mathcal{G}) + \sqrt{\frac{\log 1/\delta}{2n}}.
        \\ \implies& \expect[\widetilde{g}(Z)] \leq \frac{1}{n} \sum_{i=1}^n \widetilde{g}(Z_i) + 2 \mathfrak{R}(\mathcal{G}) + M \sqrt{\frac{\log 1/\delta}{2n}}.
    \end{align*}
\end{proof}

\begin{corollary}[Corollary~\ref{thm:upper-bound-main-outcome} in the main body]
    Suppose that $\ell$ is a loss function bounded in $[0, M]$ and $w(X)$ is a nonnegative reweighing function bounded in $[0, w_{\max}]$. Then, for any $\lambda > 0$,
    with probability at least $1 - \delta$ over the draw of the training data $(X_i, T_i, Y_i)_{i=1}^n$, for all $h \in \mathcal{H}$ and $\nu \in \mathcal{H_\nu}$,
    \begin{align*}
        \expect[\ell(Y^a, h(X))]
        &\leq \frac{1}{n_{T=a}} \sum_{T_i=a} w(X_i) \ell(Y_i, h(X_i))
        + \lambda \widehat{\Delta}_{T=a} + \frac{M^2}{16\lambda}
        \\ &\quad + 2 \mathfrak{R}(w \cdot \ell \circ \mathcal{H}) + 2 \mathfrak{R}(\widehat{\Delta} \circ \mathcal{H_\nu}) + \left( M w_{\max} + C(w_{\max}) \sqrt{n_{T=a}/n} \right) \sqrt{\frac{\log 2/\delta}{2n_{T=a}}}
    \end{align*}
    where
    \begin{align*}
         \widehat{\Delta}_{T=a} &\coloneq
            \frac{2\lambda}{n} \sum_{i=1}^{n} \left(\frac{w(X_i)}{\proba[T=a]}\right)^2 \left(\nu(X_i) - \ind[T_i=a]\right)^2
            + \frac{2\lambda}{n} \sum_{i=1}^{n} \left(w(X_i) \frac{\ind[T_i=a]}{\proba[T=a]} - 1\right)^2,
    \end{align*}
    $\mathfrak{R}(w \cdot \ell \circ \mathcal{H})$ and $\mathfrak{R}(\widehat{\Delta} \circ \mathcal{H_\nu})$ are the Rademacher complexities of $\mathcal{H}$ and $\mathcal{H_\nu}$ composed with their respective loss functions/means
    and $C(w_{\max})$ is a constant nonnegative quantitity defined in the proof.
\end{corollary}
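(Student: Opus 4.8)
The plan is to begin from the in-expectation guarantee and then discretize each of its two surviving expectations into an empirical average using the Rademacher bound of Lemma~\ref{thm:rademacher-base}, paying a union-bound cost of $\delta/2$ per application. First I would chain Theorem~\ref{thm:upper-bound-main-theoric-outcome} with the relaxation in Theorem~\ref{thm:upper-bound-main-empiric-outcome}, obtaining an upper bound on $\expect[\ell(Y^a,h(X))]$ consisting of the conditional observable loss $\expect[w(X)\ell(Y,h(X))\mid T=a]$, the term $\lambda\,\Delta_{T=a}$ (now replaced by its empirical-friendly two-term upper bound), and the variance term $\sigma^2/4\lambda$. Since $\ell$ is bounded in $[0,M]$, a random variable supported there has variance at most $M^2/4$, so $\sigma^2/4\lambda \leq M^2/16\lambda$, which accounts for that summand exactly.

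It then remains to replace the two expectations by empirical averages. For the conditional loss term, conditionally on the treatment-assignment vector the $n_{T=a}$ samples with $T_i=a$ are i.i.d.\ draws from $P_{X,Y\mid T=a}$, and $w(X)\ell(Y,h(X))$ lies in $[0,w_{\max}M]$ by Lemma~\ref{thm:outcome-regression-hoeffding-bounds}; applying Lemma~\ref{thm:rademacher-base} over these $n_{T=a}$ points (with range $w_{\max}M$ and confidence $\delta/2$) yields the empirical reweighted loss, the complexity term $2\mathfrak{R}(w\cdot\ell\circ\mathcal{H})$, and a tail term $Mw_{\max}\sqrt{\log(2/\delta)/2n_{T=a}}$. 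For the $\Delta_{T=a}$ term the relevant expectation is \emph{unconditional}, taken over all $n$ samples; I would view the integrand as a single function of $(X,T)$ parametrized by $\nu$, bounded in $[0,C(w_{\max})]$ with $C(w_{\max}) = (w_{\max}/\proba[T=a])^2 + \max\{1,(w_{\max}/\proba[T=a]-1)^2\}$, again by Lemma~\ref{thm:outcome-regression-hoeffding-bounds}. Applying Lemma~\ref{thm:rademacher-base} over the $n$ points (range $C(w_{\max})$, confidence $\delta/2$) produces the empirical $\widehat{\Delta}_{T=a}$, the complexity term $2\mathfrak{R}(\widehat{\Delta}\circ\mathcal{H}_\nu)$, and a tail term proportional to $C(w_{\max})\sqrt{\log(2/\delta)/2n}$. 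A union bound makes both estimates hold simultaneously with probability at least $1-\delta$.

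The final step is to recombine the two tail terms, which carry different denominators ($n_{T=a}$ versus $n$). Writing $\sqrt{\log(2/\delta)/2n} = \sqrt{n_{T=a}/n}\,\sqrt{\log(2/\delta)/2n_{T=a}}$ lets me factor out the common $\sqrt{\log(2/\delta)/2n_{T=a}}$ and collect the coefficients into the single factor $(Mw_{\max} + C(w_{\max})\sqrt{n_{T=a}/n})$ appearing in the statement. I expect the main obstacle to be the conditioning subtlety: $n_{T=a}$ is itself random, so the Rademacher bound for the loss term must be invoked conditionally on the assignment counts and then transferred to an unconditional high-probability statement, all while keeping the $\Delta$-term estimate (over all $n$ samples) compatible with the same draw of data. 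Carefully tracking the boundedness constants fed into Lemma~\ref{thm:rademacher-base}, and verifying that the relaxation of $\Delta_{T=a}$ used here is exactly the empirical object $\widehat{\Delta}_{T=a}$ of the statement, are the remaining bookkeeping points.
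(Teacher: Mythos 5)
Your proposal is correct and follows essentially the same route as the paper's proof: chain Theorems~\ref{thm:upper-bound-main-theoric-outcome} and \ref{thm:upper-bound-main-empiric-outcome}, bound the variance by $M^2/4$ via Popoviciu, apply Lemma~\ref{thm:rademacher-base} with the ranges from Lemma~\ref{thm:outcome-regression-hoeffding-bounds} separately to the conditional loss (over $n_{T=a}$ samples) and to the $\Delta$ term (over all $n$ samples) at confidence $\delta/2$ each, union bound, and factor the two tail terms through $\sqrt{n_{T=a}/n}$. Your identification of $C(w_{\max})$ matches the paper's, and the conditioning subtlety you flag regarding the randomness of $n_{T=a}$ is a point the paper's proof passes over silently rather than one where your approach diverges.
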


\begin{proof}
    By Theorems~\ref{thm:upper-bound-main-theoric-outcome} and \ref{thm:upper-bound-main-empiric-outcome},
    \begin{align*}
        &\expect[\ell(Y^a, h(X))] \leq \expect[w(X_i) \ell(Y_i, h(X_i)) | T=a]
        \\ &\quad + 2 \lambda \expect\left[ \left(\frac{w(X)}{\proba[T=a]}\right)^2 \left(\nu(X) - \ind[T=a]\right)^2 \right] + 2 \lambda \expect\left[ \left(w(X) \frac{\ind[T=a]}{\proba[T=a]} - 1\right)^2 \right]
        + \frac{\Var[\ell(Y^a, h(X))]}{4\lambda}.
    \end{align*}
    By Popoviciu's inequality, $\sigma^2 \leq M^2 / 4$ and so
    \begin{align*}
        &\expect[\ell(Y^a, h(X))] \leq \expect[w(X_i) \ell(Y_i, h(X_i)) | T=a]
        \\ &\quad + 2 \lambda \expect\left[ \left(\frac{w(X)}{\proba[T=a]}\right)^2 \left(\nu(X) - \ind[T=a]\right)^2 \right] + 2 \lambda \expect\left[ \left(w(X) \frac{\ind[T=a]}{\proba[T=a]} - 1\right)^2 \right]
        + \frac{M^2}{16\lambda}.
    \end{align*}
    Note that for all $x, y$, it holds that $w(X) \ell(Y, h(X)) \leq w_{\max} M$.
    By Lemma~\ref{thm:rademacher-base} along with Lemma~\ref{thm:outcome-regression-hoeffding-bounds}, with probability of at least $1 - \delta/2$,
    \[ \expect[w(X) \ell(Y, h(X)) | T=a] \leq \frac{1}{n_{T=a}} \sum_{T_i=a} w(X_i) \ell(Y_i, h(X_i)) + 2 \mathfrak{R}(w \cdot \ell \circ \mathcal{H}) + M w_{\max} \sqrt{\frac{\log 2/\delta}{2n_{T=a}}}. \]
    And, also with probability of at least $1 - \delta/2$,
    \begin{align*}
        & 2 \lambda \expect\left[ \left(\frac{w(X)}{\proba[T=a]}\right)^2 \left(\nu(X) - \ind[T=a]\right)^2 \right] + 2 \lambda \expect\left[ \left(w(X) \frac{\ind[T=a]}{\proba[T=a]} - 1\right)^2 \right]
        \\ &\quad = 2 \lambda \expect\left[ \left(\frac{w(X)}{\proba[T=a]}\right)^2 \left(\nu(X) - \ind[T=a]\right)^2 + \left(w(X) \frac{\ind[T=a]}{\proba[T=a]} - 1\right)^2 \right]
        \\ &\quad \leq \frac{2 \lambda}{n} \sum_{i=1}^n \left(\frac{w(X)}{\proba[T=a]}\right)^2 \left(\nu(X) - \ind[T=a]\right)^2 + \frac{2 \lambda}{n} \sum_{i=1}^n \left(w(X) \frac{\ind[T=a]}{\proba[T=a]} - 1\right)^2
        \\ &\qquad \qquad + 2 \mathfrak{R}(\widehat{\Delta} \circ \mathcal{H}_\nu) + \left( \left( \frac{w_{\max}}{\proba[T=a]} \right)^2 + \max \left\{ 1, \left( \frac{w_{\max}}{\proba[T=a]} - 1 \right)^2 \right\} \right) \sqrt{\frac{\log 2/\delta}{2n}}.
    \end{align*}
    Therefore, by an union bound, with probability of at least $1 - \delta$,
    \begin{align*}
        &\expect[\ell(Y^a, h(X))] \leq \frac{1}{n_{T=a}} \sum_{T_i=a} w(X_i) \ell(Y_i, h(X_i))
        + \frac{2 \lambda}{n} \sum_{i=1}^n \left(\frac{w(X)}{\proba[T=a]}\right)^2 \left(\nu(X) - \ind[T=a]\right)^2
        \\ &\quad
        + \frac{2 \lambda}{n} \sum_{i=1}^n \left(w(X) \frac{\ind[T=a]}{\proba[T=a]} - 1\right)^2
        + \frac{M^2}{16\lambda}
        + 2 \mathfrak{R}(w \cdot \ell \circ \mathcal{H})
        + 2 \mathfrak{R}(\widehat{\Delta} \circ \mathcal{H_\nu})
        \\ &\quad
        + M w_{\max} \sqrt{\frac{\log 2/\delta}{2n_{T=a}}}
        + \underbrace{\left( \left( \frac{w_{\max}}{\proba[T=a]} \right)^2 + \max \left\{ 1, \left( \frac{w_{\max}}{\proba[T=a]} - 1 \right)^2 \right\} \right)}_{C(w_{\max})} \sqrt{\frac{\log 2/\delta}{2n}}.
    \end{align*}
    And we conclude by rearranging and observing that
    \begin{align*}
        & M w_{\max} \sqrt{\frac{\log 2/\delta}{2n_{T=a}}} + C(w_{\max}) \sqrt{\frac{\log 2/\delta}{2n}} = M w_{\max} \sqrt{\frac{\log 2/\delta}{2n_{T=a}}} + C(w_{\max}) \sqrt{\frac{n_{T=a}}{n}} \sqrt{\frac{\log 2/\delta}{2n_{T=a}}}
        \\ &\quad = \left(w_{\max} M + C(w_{\max}) \sqrt{\frac{n_{T=a}}{n}}\right) \sqrt{\frac{\log 2/\delta}{2n_{T=a}}}.
    \end{align*}
\end{proof}

\begin{corollary}[Corollary~\ref{thm:upper-bound-main-tlearner} in the main body]
    Let $\ell$ be a loss function bounded in $[0, M]$ satisfying Assumption~\ref{loss-function-assumption} and let $w^1(X), w^0(X)$ be nonnegative reweighing functions bounded in $[0, w_{\max}]$. Then, for any $\lambda_1, \lambda_2 > 0$,
    with probability at least $1 - \delta$ over the draw of the training data $(X_i, T_i, Y_i)_{i=1}^n$, for all $h^1, h^0 \in \mathcal{H}$ and $\nu \in \mathcal{H_\nu}$,
    \begin{align*}
        \expect[\ell(Y^1 - Y^0, h^1(X) - h^0(X))]
        &\leq \frac{1}{n_{T=1}} \sum_{T_i=1} w(X_i) \ell(Y_i, h^1(X_i))
        + \frac{1}{n_{T=0}} \sum_{T_i=0} w(X_i) \ell(Y_i, h^0(X_i))
        \\ &\quad + \lambda_1 \widehat{\Delta}_{T=1} + \lambda_0 \widehat{\Delta}_{T=0} + \frac{M^2}{16\lambda_1} + \frac{M^2}{16\lambda_0} + 2 \mathfrak{R}(w^1 \cdot \ell \circ \mathcal{H}) + 2 \mathfrak{R}(w^0 \cdot \ell \circ \mathcal{H}) + 2 \mathfrak{R}(\widehat{\Delta} \circ \mathcal{H_\nu})
        \\ &\quad + \left( c M w_{\max} + C(w_{\max}) \sqrt{n_{T=\min}/n} \right) \sqrt{\frac{\log 3/\delta}{2n_{T=\min}}}
    \end{align*}
    where
    \begin{align*}
         \widehat{\Delta}_{T=a} &\coloneq
            \frac{2\lambda}{n} \sum_{i=1}^{n} \left(\frac{w(X_i)}{\proba[T=a]}\right)^2 \left(\nu(X_i) - \ind[T_i=a]\right)^2
            + \frac{2\lambda}{n} \sum_{i=1}^{n} \left(w(X_i) \frac{\ind[T_i=a]}{\proba[T=a]} - 1\right)^2,
    \end{align*}
    $n_{T=\min} = \min \{n_{T=1}, n_{T=0}\}$, $\mathfrak{R}(w^a \cdot \ell \circ \mathcal{H})$ and $\mathfrak{R}(\widehat{\Delta} \circ \mathcal{H_\nu})$ are the Rademacher complexities of $\mathcal{H}$ and $\mathcal{H_\nu}$ composed with their respective loss functions/means,
    and $c$ and $C(w_{\max})$ are constant nonnegative quantities defined in the proof, with $1 \leq c \leq 2$.
\end{corollary}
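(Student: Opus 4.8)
The plan is to reduce this finite-sample statement to the expectation-level bound of Proposition~\ref{thm:upper-bound-main-tlearner} followed by a small number of Rademacher concentration steps, exactly mirroring the proof of the single-outcome Corollary~\ref{thm:upper-bound-main-outcome}. First I would invoke Proposition~\ref{thm:upper-bound-main-tlearner}: by Assumption~\ref{loss-function-assumption} the complete T-/S-learner loss splits (with a leading factor $C$) into the two complete outcome-regression losses, which by Theorems~\ref{thm:upper-bound-main-theoric-outcome} and~\ref{thm:upper-bound-main-empiric-outcome} are each dominated by a reweighted observable loss conditional on $T=a$, a divergence contribution $\lambda_a\Delta_{T=a}$ bounded via the relaxed triangular inequality, and a variance term $\sigma^2_{T=a}/4\lambda_a$. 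As in Corollary~\ref{thm:upper-bound-main-outcome}, I would immediately discharge each variance through Popoviciu's inequality ($\sigma^2_{T=a}\le M^2/4$), producing the $M^2/16\lambda_1 + M^2/16\lambda_0$ summands.

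The remaining task is to replace the surviving expectations by their empirical averages. There are three such quantities: the $T=1$ reweighted loss, the $T=0$ reweighted loss, and the combined empirical divergence estimate. To each I would apply the scaled Rademacher bound of Lemma~\ref{thm:rademacher-base}, supplying the ranges from Lemma~\ref{thm:outcome-regression-hoeffding-bounds}: the loss integrands lie in $[0, w_{\max}M]$, while the divergence integrands are bounded by the constant $C(w_{\max})$ identified in the proof of Corollary~\ref{thm:upper-bound-main-outcome} (taken uniformly over $a\in\{0,1\}$). The key structural point is that $\Delta_{T=1}$ and $\Delta_{T=0}$ share the \emph{same} $\nu\in\mathcal{H}_\nu$ and are both averaged over all $n$ samples, so they fold into a single function class handled by one term $\mathfrak{R}(\widehat{\Delta}\circ\mathcal{H}_\nu)$; this is why only one such Rademacher complexity appears. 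I therefore have three high-probability events — two conditional-loss events over $n_{T=1}$ and $n_{T=0}$ samples respectively, and one divergence event over all $n$ samples — each taken at level $\delta/3$, so that a union bound yields the overall $1-\delta$ guarantee and accounts for the $\log(3/\delta)$ factor. Uniformity over all $h^1,h^0\in\mathcal{H}$ and $\nu\in\mathcal{H}_\nu$ is automatic, since each Rademacher bound is already uniform over its class.

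The main obstacle, and the only genuinely delicate bookkeeping, is reconciling the three mismatched sample sizes into the single clean tail term. The two loss events contribute tails $M w_{\max}\sqrt{\log(3/\delta)/(2 n_{T=a})}$ for $a=1,0$; factoring out $\sqrt{\log(3/\delta)/(2 n_{T=\min})}$ collapses them into $c\,M w_{\max}$ with $c = \sqrt{n_{T=\min}/n_{T=1}} + \sqrt{n_{T=\min}/n_{T=0}} \in [1,2]$. The divergence event contributes a tail over $n$ samples, which I would rescale precisely as in Corollary~\ref{thm:upper-bound-main-outcome} by writing $\sqrt{1/n} = \sqrt{n_{T=\min}/n}\,\sqrt{1/n_{T=\min}}$, producing the $C(w_{\max})\sqrt{n_{T=\min}/n}$ summand. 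Collecting all pieces, carrying the constant $C$ from Assumption~\ref{loss-function-assumption} through, and rearranging then yields the stated bound.
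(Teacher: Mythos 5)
Your proposal is correct and follows essentially the same route as the paper's own proof: reduce to Proposition~\ref{thm:upper-bound-main-tlearner}, apply Popoviciu's inequality to the variance terms, concentrate the two conditional losses and the single combined divergence term via Lemma~\ref{thm:rademacher-base} with the ranges from Lemma~\ref{thm:outcome-regression-hoeffding-bounds}, take a union bound over three events at level $\delta/3$, and collapse the mismatched sample sizes into the $c$ and $\sqrt{n_{T=\min}/n}$ factors. Your general expression $c = \sqrt{n_{T=\min}/n_{T=1}} + \sqrt{n_{T=\min}/n_{T=0}}$ agrees with the paper's (which fixes $n_{T=1}=n_{T=\min}$ without loss of generality), and your remark about carrying the constant $C$ of Assumption~\ref{loss-function-assumption} through is the right bookkeeping.
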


\begin{proof}
    By Propositions~\ref{thm:upper-bound-main-tlearner},
    \begin{align*}
        \expect[\ell(Y^1-Y^0, h^1(X)-h^0(X))]
        &\quad \leq C \Bigl(
            \expect[w^1(X) )\ell(Y, h^1(X)) | T=1]
            + \expect[w^0(X) \ell(Y, h^0(X)) | T=0]
            \\ &\qquad
            + \frac{2 \lambda_1}{\proba[T=1]^2} \expect\left[ w^1(X)^2 \left( \nu(X) - \ind[T=1] \right)^2 \right] + 2 \lambda_1 \expect\left[ \left( w^1(X) \frac{\ind[T=1]}{\proba[T=1]} - 1 \right)^2 \right]
            \\ &\qquad
            + \frac{2 \lambda_0}{\proba[T=0]^2} \expect\left[ w^0(X)^2 \left( \nu(X) - \ind[T=0] \right)^2 \right] + 2 \lambda_0 \expect\left[ \left( w^0(X) \frac{\ind[T=0]}{\proba[T=0]} - 1 \right)^2 \right]
            \\ &\qquad
            + \frac{1}{4\lambda_1} \Var[\ell(Y^1, h^1(X))]
            + \frac{1}{4\lambda_0} \Var[\ell(Y^0, h^0(X))]
        \Bigr).
    \end{align*}
    By Popoviciu's inequality, $\sigma^2 \leq M^2 / 4$ and so
    \begin{align*}
        \expect[\ell(Y^1-Y^0, h^1(X)-h^0(X))]
        &\quad \leq C \Bigl(
            \expect[w^1(X) )\ell(Y, h^1(X)) | T=1]
            + \expect[w^0(X) \ell(Y, h^0(X)) | T=0]
            \\ &\qquad
            + \frac{2 \lambda_1}{\proba[T=1]^2} \expect\left[ w^1(X)^2 \left( \nu(X) - \ind[T=1] \right)^2 \right] + 2 \lambda_1 \expect\left[ \left( w^1(X) \frac{\ind[T=1]}{\proba[T=1]} - 1 \right)^2 \right]
            \\ &\qquad
            + \frac{2 \lambda_0}{\proba[T=0]^2} \expect\left[ w^0(X)^2 \left( \nu(X) - \ind[T=0] \right)^2 \right] + 2 \lambda_0 \expect\left[ \left( w^0(X) \frac{\ind[T=0]}{\proba[T=0]} - 1 \right)^2 \right]
            \\ &\qquad
            + \frac{M^2}{16\lambda_1}
            + \frac{M^2}{16\lambda_0}
        \Bigr).
    \end{align*}
    By Lemma~\ref{thm:rademacher-base} along with Lemma~\ref{thm:outcome-regression-hoeffding-bounds}, with probability of at least $1 - \delta/3$,
    \[ \expect[w^a(X) \ell(Y, h^a(X)) | T=a] \leq \frac{1}{n_{T=a}} \sum_{T_i=a} w^a(X_i) \ell(Y_i, h^a(X_i)) + 2 \mathfrak{R}(w^a \cdot \ell \circ \mathcal{H}) + M w_{\max} \sqrt{\frac{\log 3/\delta}{2n_{T=a}}}. \]
    And, also with probability of at least $1 - \delta/3$,
    \begin{align*}
        & \sum_{a \in \{0,1\}} \left( 2 \lambda \expect\left[ \left(\frac{w(X)}{\proba[T=a]}\right)^2 \left(\nu(X) - \ind[T=a]\right)^2 \right] + 2 \lambda \expect\left[ \left(w(X) \frac{\ind[T=a]}{\proba[T=a]} - 1\right)^2 \right] \right)
        \\ &\quad = 2 \lambda \expect\left[ \sum_{a \in \{0,1\}} \left( \left(\frac{w(X)}{\proba[T=a]}\right)^2 \left(\nu(X) - \ind[T=a]\right)^2 + \left(w(X) \frac{\ind[T=a]}{\proba[T=a]} - 1\right)^2 \right) \right]
        \\ &\quad \leq \sum_{a \in \{0,1\}} \left( \frac{2 \lambda}{n} \sum_{i=1}^n \left(\frac{w(X)}{\proba[T=a]}\right)^2 \left(\nu(X) - \ind[T=a]\right)^2 + \frac{2 \lambda}{n} \sum_{i=1}^n \left(w(X) \frac{\ind[T=a]}{\proba[T=a]} - 1\right)^2 \right)
        \\ &\qquad \qquad + 2 \mathfrak{R}(\widehat{\Delta} \circ \mathcal{H}_\nu) + \left(\sum_{a \in \{0,1\}} \left( \frac{w_{\max}}{\proba[T=a]} \right)^2 + \sum_{a \in \{0,1\}} \max \left\{ 1, \left( \frac{w_{\max}}{\proba[T=a]} - 1 \right)^2 \right\} \right) \sqrt{\frac{\log 3/\delta}{2n}}.
    \end{align*}
    Therefore, by an union bound, with probability of at least $1 - \delta$,
    \begin{align*}
        &\expect[\ell(Y^a, h^1(X) - h^0(X))] \leq \frac{1}{n_{T=1}} \sum_{T_i=1} w^1(X_i) \ell(Y_i, h^1(X_i))
        + \frac{1}{n_{T=0}} \sum_{T_i=0} w^0(X_i) \ell(Y_i, h^0(X_i))
        \\ &\quad
        + \frac{2 \lambda}{n} \sum_{i=1}^n \left(\frac{w^1(X)}{\proba[T=1]}\right)^2 \left(\nu(X) - \ind[T=1]\right)^2
        + \frac{2 \lambda}{n} \sum_{i=1}^n \left(w^1(X) \frac{\ind[T=1]}{\proba[T=1]} - 1\right)^2
        \\ &\quad
        + \frac{2 \lambda}{n} \sum_{i=1}^n \left(\frac{w^0(X)}{\proba[T=0]}\right)^2 \left(\nu(X) - \ind[T=0]\right)^2
        + \frac{2 \lambda}{n} \sum_{i=1}^n \left(w^0(X) \frac{\ind[T=0]}{\proba[T=0]} - 1\right)^2
        \\ &\quad
        + \frac{M^2}{16\lambda_1} + \frac{M^2}{16\lambda_2}
        + 2 \mathfrak{R}(w^1 \cdot \ell \circ \mathcal{H}) + 2 \mathfrak{R}(w^0 \cdot \ell \circ \mathcal{H}) + 2 \mathfrak{R}(\widehat{\Delta} \circ \mathcal{H_\nu})
        \\ &\quad
        + M w_{\max} \sqrt{\frac{\log 3/\delta}{2n_{T=1}}}
        + M w_{\max} \sqrt{\frac{\log 3/\delta}{2n_{T=0}}}
        + \underbrace{\left(\sum_{a \in \{0,1\}} \left( \frac{w_{\max}}{\proba[T=a]} \right)^2 + \sum_{a \in \{0,1\}} \max \left\{ 1, \left( \frac{w_{\max}}{\proba[T=a]} - 1 \right)^2 \right\} \right)}_{C(w_{\max})} \sqrt{\frac{\log 3/\delta}{2n}}.
    \end{align*}
    And we conclude by rearranging and observing that, assuming without loss of generality that $n_{T=1} = \min \{n_{T=1}, n_{T=0}\}$,
    \begin{align*}
        & M w_{\max} \sqrt{\frac{\log 3/\delta}{2n_{T=1}}} + M w_{\max} \sqrt{\frac{\log 3/\delta}{2n_{T=0}}} + C(w_{\max}) \sqrt{\frac{\log 3/\delta}{2n}}
        \\ &\quad = M w_{\max} \sqrt{\frac{\log 3/\delta}{2n_{T=\min}}} + M w_{\max} \sqrt{\frac{n_{T=\min}}{n_{T=0}}} \sqrt{\frac{\log 3/\delta}{2n_{T=\min}}} + C(w_{\max}) \sqrt{\frac{n_{T=\min}}{n}} \sqrt{\frac{\log 3/\delta}{2n_{T=\min}}}
        \\ &\quad = \left( M w_{\max} \underbrace{\left(1 + \sqrt{\frac{n_{T=\min}}{n_{T=0}}}\right)}_{c} + C(w_{\max}) \sqrt{\frac{n_{T=\min}}{n}} \right) \sqrt{\frac{\log 3/\delta}{2n_{T=\min}}}
    \end{align*}
    and that $0 \leq \sqrt{n_{T=\min}/n_{T=0}} \leq 1$.
\end{proof}

\begin{corollary}\label{thm:upper-bound-empirical-xlearner}
    Let $\ell$ be a loss function bounded in $[0, M]$ satisfying Assumption~\ref{loss-function-assumption} and let $w^1(X), w^0(X)$ be nonnegative reweighing functions bounded in $[0, w_{\max}]$. Then, for any $\lambda_1, \lambda_0 > 0$,
    with probability at least $1 - \delta$ over the draw of the training data $(X_i, T_i, Y_i)_{i=1}^n$, for all $h^1, h^0, \tau^1, \tau^0 \in \mathcal{H}$ and $\nu \in \mathcal{H_\nu}$,
    \begin{align*}
        \expect[\ell(Y^1-Y^0, e(X) \tau^1(X) + \bar{e}(X) \tau^0(X))]
        &\leq C^2 \Biggl(
            \frac{1}{n_{T=1}} \sum_{T_i=1} w(X_i) \ell_{\bar{e}}(Y_i, h^1(X_i))
            + \frac{1}{n_{T=0}} \sum_{T_i=0} w(X_i) \ell_e(Y_i, h^0(X_i))
            \\ &\kern-10em + \frac{1}{n_{T=1}} \sum_{T_i=1} w(X_i) \ell_e(Y_i - h^0(X_i), \tau^1(X_i))
            + \frac{1}{n_{T=0}} \sum_{T_i=0} w(X_i) \ell_{\bar{e}}(h^1(X_i) - Y_i, \tau^0(X_i))
            \\ &\kern-10em + \lambda_1 \widehat{\Delta}_{T=1} + \lambda_0 \widehat{\Delta}_{T=0} + \frac{M^2}{4\lambda_1} + \frac{M^2}{4\lambda_0}
            + 4 \mathfrak{R}(w^1 \cdot \ell \circ \mathcal{H}) + 4 \mathfrak{R}(w^0 \cdot \ell \circ \mathcal{H}) + 2 \mathfrak{R}(\widehat{\Delta} \circ \mathcal{H_\nu})
            \\ &\kern-10em + \left( 2 c M w_{\max} + C(w_{\max}) \sqrt{n_{T=\min}/n} \right) \sqrt{\frac{\log 5/\delta}{2n_{T=\min}}}
        \Biggr)
    \end{align*}
    where
    \begin{align*}
         \widehat{\Delta}_{T=a} &\coloneq
            \frac{2\lambda}{n} \sum_{i=1}^{n} \left(\frac{w(X_i)}{\proba[T=a]}\right)^2 \left(\nu(X_i) - \ind[T_i=a]\right)^2
            + \frac{2\lambda}{n} \sum_{i=1}^{n} \left(w(X_i) \frac{\ind[T_i=a]}{\proba[T=a]} - 1\right)^2,
    \end{align*}
    $n_{T=\min} = \min \{n_{T=1}, n_{T=0}\}$, $\mathfrak{R}(w^a \cdot \ell \circ \mathcal{H})$ and $\mathfrak{R}(\widehat{\Delta} \circ \mathcal{H_\nu})$ are the Rademacher complexities of $\mathcal{H}$ and $\mathcal{H_\nu}$ composed with their respective loss functions/means,
    and $c$ and $C(w_{\max})$ are constant nonnegative quantities defined in the proof, with $1 \leq c \leq 2$.
\end{corollary}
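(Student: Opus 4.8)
The plan is to follow the proof of Corollary~\ref{thm:corollary-tlearner} essentially verbatim, adapted to the four component regressions $h^1, h^0, \tau^1, \tau^0$ of an X-learner rather than the two of a T-learner. The starting point is Proposition~\ref{thm:upper-bound-main-xlearner}, which already splits the complete loss into four conditional expected losses (the two outcome-regression losses $\ell_{\bar{e}}(Y, h^1)$, $\ell_e(Y, h^0)$ and the two pseudo-label losses $\ell_e(Y^1 - h^0, \tau^1)$, $\ell_{\bar{e}}(h^1 - Y^0, \tau^0)$), the divergence terms $\Delta_{T=1}, \Delta_{T=0}$, and the variance terms, all multiplied by the overall factor $C^2$. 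All that remains is to (i) bound the variance terms deterministically, (ii) convert each conditional expectation into an empirical average at the cost of a Rademacher term and a tail term, and (iii) replace each $\Delta_{T=a}$ by its empirically boundable surrogate.

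For step (i), I would apply Popoviciu's inequality to each $\sigma^2$, after noting that the $e$-scaled losses $\ell_e$ and $\ell_{\bar{e}}$ remain bounded in $[0, M]$: since $e(X), \bar{e}(X) \in [0,1]$ and every admissible $\psi$ satisfies $\psi(e \cdot x) \leq \psi(x)$, scaling the argument by $e(X)$ can only shrink the loss. The tuning parameters $\lambda_{1,0}, \lambda_{0,1}$ of Proposition~\ref{thm:upper-bound-main-xlearner} are merged into $\lambda_1, \lambda_0$, collapsing the four variance terms and the two divergence coefficients into the displayed two-parameter form $\frac{M^2}{4\lambda_1} + \frac{M^2}{4\lambda_0}$ and $\lambda_1 \widehat{\Delta}_{T=1} + \lambda_0 \widehat{\Delta}_{T=0}$. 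For step (ii), I would apply Lemma~\ref{thm:rademacher-base} together with the boundedness in Lemma~\ref{thm:outcome-regression-hoeffding-bounds} to each of the four conditional loss terms, each holding with probability $1 - \delta/5$; since the two $T=1$ losses live in $w^1 \cdot \ell \circ \mathcal{H}$ and the two $T=0$ losses live in $w^0 \cdot \ell \circ \mathcal{H}$, their Rademacher contributions consolidate into $4\mathfrak{R}(w^1 \cdot \ell \circ \mathcal{H}) + 4\mathfrak{R}(w^0 \cdot \ell \circ \mathcal{H})$. For step (iii), I would first invoke the empirical relaxation of Theorem~\ref{thm:upper-bound-main-empiric-outcome} to upper-bound each $\Delta_{T=a}$ by its Brier-score-plus-$D$ surrogate, and then bound the single quantity summed over $a \in \{0,1\}$ by one further application of Lemma~\ref{thm:rademacher-base} (holding with probability $1 - \delta/5$), yielding $\widehat{\Delta}_{T=1}, \widehat{\Delta}_{T=0}$, the term $2\mathfrak{R}(\widehat{\Delta} \circ \mathcal{H}_\nu)$, and the $C(w_{\max})$ tail. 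A union bound over these five events gives overall confidence $1 - \delta$, which is what accounts for the $\log 5/\delta$. Finally, I would consolidate the tail terms exactly as in Corollary~\ref{thm:corollary-tlearner}: the four loss tails (two at rate $n_{T=1}$, two at rate $n_{T=0}$) factor through $\sqrt{\log(5/\delta)/2 n_{T=\min}}$, bounding $\sqrt{n_{T=\min}/n_{T=a}} \leq 1$ and collecting the two-per-arm count together with the factor $c = 1 + \sqrt{n_{T=\min}/n_{T=\max}}$ into the coefficient $2c\, M w_{\max}$.

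The routine part is the arithmetic of matching the constants; the only points requiring genuine care are verifying that $\ell_e, \ell_{\bar{e}}$ preserve the $[0,M]$ boundedness needed by Lemmas~\ref{thm:rademacher-base} and~\ref{thm:outcome-regression-hoeffding-bounds}, and confirming that Theorems~\ref{thm:upper-bound-main-theoric-outcome} and~\ref{thm:upper-bound-main-empiric-outcome} apply to the pseudo-label regressions of $\tau^1$ and $\tau^0$, whose targets $Y^1 - h^0(X)$ and $h^1(X) - Y^0$ depend on the already-fitted $h^0, h^1$. This is legitimate because those theorems are agnostic to the identity of the regression target, but it does mean --- as already flagged for Proposition~\ref{thm:upper-bound-main-xlearner} --- that the bound treats each stage in isolation and does not exploit the statistical coupling between the fitted $h$'s and the $\tau$'s.
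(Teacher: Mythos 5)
Your proposal follows the paper's own proof essentially step for step: instantiate Proposition~\ref{thm:upper-bound-main-xlearner} (with the four tuning parameters merged pairwise into $\lambda_1,\lambda_0$, which is exactly how the paper obtains $M^2/4\lambda_a$ after Popoviciu), apply Lemma~\ref{thm:rademacher-base} with Lemma~\ref{thm:outcome-regression-hoeffding-bounds} to the four conditional losses and to the pooled $\Delta$ surrogate, take a union bound over five $\delta/5$ events, and consolidate the tails via $c = 1 + \sqrt{n_{T=\min}/n_{T=\max}}$. The only additions --- checking that $\ell_e,\ell_{\bar e}$ stay in $[0,M]$ and that the pseudo-label targets pose no obstacle --- are points the paper passes over implicitly, and your justifications for them are sound.
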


\begin{proof}
    By Proposition~\ref{thm:upper-bound-main-xlearner},
    \begin{align*}
        \expect[\ell(Y^1-Y^0, e(X) \tau^1(X) + \bar{e}(X) \tau^0(X))]
        &\leq C^2 \Bigl(
            \expect[w^1(X) \ell_{\bar{e}}(Y, h^1(X)) | T=1]
            + \expect[w^0(X) \ell_{e}(Y, h^0(X)) | T=0]
            \\ &\kern-10em + \expect[w^1(X) \ell_e(Y^1 - h^0(X), \tau^1(X)) | T=1]
            + \expect[w^0(X) \ell_{\bar{e}}(h^1(X) - Y^0, \tau^0(X)) | T=0]
            \\ &\kern-10em + \left(\frac{\lambda_1}{2} + \frac{\lambda_1}{2}\right) \Delta_{T=1} + \left(\frac{\lambda_0}{2} + \frac{\lambda_0}{2}\right) \Delta_{T=0}
            + \frac{\sigma^2_{T=1}}{4\lambda_1/2} + \frac{\sigma^2_{T=0}}{4\lambda_0/2} + \frac{\sigma^2_{1,0}}{4\lambda_1/2} + \frac{\sigma^2_{0,1}}{4\lambda_0/2}
        \Bigr),
    \end{align*}
    By Popoviciu's inequality, $\sigma^2 \leq M^2 / 4$ and so
    \begin{align*}
        \expect[\ell(Y^1-Y^0, e(X) \tau^1(X) + \bar{e}(X) \tau^0(X))]
        &\leq C^2 \Bigl(
            \expect[w^1(X) \ell_{\bar{e}}(Y, h^1(X)) | T=1]
            + \expect[w^0(X) \ell_{e}(Y, h^0(X)) | T=0]
            \\ &\kern-10em + \expect[w^1(X) \ell_e(Y^1 - h^0(X), \tau^1(X)) | T=1]
            + \expect[w^0(X) \ell_{\bar{e}}(h^1(X) - Y^0, \tau^0(X)) | T=0]
            \\ &\kern-10em + \lambda_1 \Delta_{T=1} + \lambda_0 \Delta_{T=0}
            + \frac{M^2}{4\lambda_1} + \frac{M^2}{4\lambda_0}
        \Bigr),
    \end{align*}
    By Lemma~\ref{thm:rademacher-base} along with Lemma~\ref{thm:outcome-regression-hoeffding-bounds}, with probability of at least $1 - \delta/5$,
    \begin{align*}
        \expect[w^1(X) \ell_{\bar{e}}(Y, h^1(X)) | T=1] &\leq \frac{1}{n_{T=1}} \sum_{T_i=1} w^1(X_i) \ell_{\bar{e}}(Y_i, h^1(X_i)) + 2 \mathfrak{R}(w^1 \cdot \ell \circ \mathcal{H}) + M w_{\max} \sqrt{\frac{\log 5/\delta}{2n_{T=1}}}.
        \\
        \expect[w^0(X) \ell_{e}(Y, h^0(X)) | T=0] &\leq \frac{1}{n_{T=0}} \sum_{T_i=0} w^0(X_i) \ell_{e}(Y_i, h^0(X_i)) + 2 \mathfrak{R}(w^0 \cdot \ell \circ \mathcal{H}) + M w_{\max} \sqrt{\frac{\log 5/\delta}{2n_{T=0}}}.
    \end{align*}
    Also with probability of at least $1 - \delta/5$,
    \begin{align*}
        \expect[w^1(X) \ell_e(Y^1 - h^0(X), \tau^1(X)) | T=1] &\leq \frac{1}{n_{T=1}} \sum_{T_i=1} w^1(X_i) \ell_e(Y^1 - h^0(X), \tau^1(X)) + 2 \mathfrak{R}(w^1 \cdot \ell \circ \mathcal{H}) + M w_{\max} \sqrt{\frac{\log 5/\delta}{2n_{T=1}}}.
        \\
        \expect[w^0(X) \ell_{\bar{e}}(h^1(X) - Y^0, \tau^0(X)) | T=0] &\leq \frac{1}{n_{T=0}} \sum_{T_i=0} w^0(X_i) \ell_{\bar{e}}(h^1(X) - Y^0, \tau^0(X)) + 2 \mathfrak{R}(w^0 \cdot \ell \circ \mathcal{H}) + M w_{\max} \sqrt{\frac{\log 5/\delta}{2n_{T=0}}}.
    \end{align*}
    And, likewise,
    \begin{align*}
        & \sum_{a \in \{0,1\}} \left( 2 \lambda \expect\left[ \left(\frac{w(X)}{\proba[T=a]}\right)^2 \left(\nu(X) - \ind[T=a]\right)^2 \right] + 2 \lambda \expect\left[ \left(w(X) \frac{\ind[T=a]}{\proba[T=a]} - 1\right)^2 \right] \right)
        \\ &\quad = 2 \lambda \expect\left[ \sum_{a \in \{0,1\}} \left( \left(\frac{w(X)}{\proba[T=a]}\right)^2 \left(\nu(X) - \ind[T=a]\right)^2 + \left(w(X) \frac{\ind[T=a]}{\proba[T=a]} - 1\right)^2 \right) \right]
        \\ &\quad \leq \sum_{a \in \{0,1\}} \left( \frac{2 \lambda}{n} \sum_{i=1}^n \left(\frac{w(X)}{\proba[T=a]}\right)^2 \left(\nu(X) - \ind[T=a]\right)^2 + \frac{2 \lambda}{n} \sum_{i=1}^n \left(w(X) \frac{\ind[T=a]}{\proba[T=a]} - 1\right)^2 \right)
        \\ &\qquad \qquad + 2 \mathfrak{R}(\widehat{\Delta} \circ \mathcal{H}_\nu) + \left(\sum_{a \in \{0,1\}} \left( \frac{w_{\max}}{\proba[T=a]} \right)^2 + \sum_{a \in \{0,1\}} \max \left\{ 1, \left( \frac{w_{\max}}{\proba[T=a]} - 1 \right)^2 \right\} \right) \sqrt{\frac{\log 5/\delta}{2n}}.
    \end{align*}
    Therefore, by an union bound, with probability of at least $1 - \delta$,
    \begin{align*}
        &\expect[\ell(Y^a, e(X) \tau^1(X) + \bar{e}(X) \tau^0(X))] \leq
        \frac{1}{n_{T=1}} \sum_{T_i=1} w^1(X_i) \ell_{\bar{e}}(Y_i, h^1(X_i))
        + \frac{1}{n_{T=0}} \sum_{T_i=0} w^0(X_i) \ell_{e}(Y_i, h^0(X_i))
        \\ &\qquad
        + \frac{1}{n_{T=1}} \sum_{T_i=1} w^1(X_i) \ell_e(Y^1 - h^0(X), \tau^1(X))
        + \frac{1}{n_{T=0}} \sum_{T_i=0} w^0(X_i) \ell_{\bar{e}}(h^1(X) - Y^0, \tau^0(X))
        \\ &\qquad
        + \sum_{a \in \{0,1\}} \left( \frac{2 \lambda}{n} \sum_{i=1}^n \left(\frac{w(X)}{\proba[T=a]}\right)^2 \left(\nu(X) - \ind[T=a]\right)^2
        + \frac{2 \lambda}{n} \sum_{i=1}^n \left(w(X) \frac{\ind[T=a]}{\proba[T=a]} - 1\right)^2 \right)
        \\ &\qquad
        + 4 \mathfrak{R}(w^1 \cdot \ell \circ \mathcal{H})
        + 4 \mathfrak{R}(w^0 \cdot \ell \circ \mathcal{H})
        + 2 \mathfrak{R}(\widehat{\Delta} \circ \mathcal{H}_\nu)
        \\ &\qquad
        + 2 M w_{\max} \sqrt{\frac{\log 5/\delta}{2n_{T=1}}}
        + 2 M w_{\max} \sqrt{\frac{\log 5/\delta}{2n_{T=0}}}
        \\ &\qquad
        + \underbrace{\left(\sum_{a \in \{0,1\}} \left( \frac{w_{\max}}{\proba[T=a]} \right)^2
        + \sum_{a \in \{0,1\}} \max \left\{ 1, \left( \frac{w_{\max}}{\proba[T=a]} - 1 \right)^2 \right\} \right)}_{C(w_{\max})} \sqrt{\frac{\log 5/\delta}{2n}}.
    \end{align*}
    And we conclude by rearranging and observing that, assuming without loss of generality that $n_{T=1} = \min \{n_{T=1}, n_{T=0}\}$,
    \begin{align*}
        & 2 M w_{\max} \sqrt{\frac{\log 5/\delta}{2n_{T=1}}} + 2 M w_{\max} \sqrt{\frac{\log 5/\delta}{2n_{T=0}}} + C(w_{\max}) \sqrt{\frac{\log 5/\delta}{2n}}
        \\ &\quad = 2 M w_{\max} \sqrt{\frac{\log 5/\delta}{2n_{T=\min}}} + 2 M w_{\max} \sqrt{\frac{n_{T=\min}}{n_{T=0}}} \sqrt{\frac{\log 5/\delta}{2n_{T=\min}}} + C(w_{\max}) \sqrt{\frac{n_{T=\min}}{n}} \sqrt{\frac{\log 5/\delta}{2n_{T=\min}}}
        \\ &\quad = \left( 2 M w_{\max} \underbrace{\left(1 + \sqrt{\frac{n_{T=\min}}{n_{T=0}}}\right)}_{c} + C(w_{\max}) \sqrt{\frac{n_{T=\min}}{n}} \right) \sqrt{\frac{\log 5/\delta}{2n_{T=\min}}}
    \end{align*}
    and that $0 \leq \sqrt{n_{T=\min}/n_{T=0}} \leq 1$.
\end{proof}

\vfill

\pagebreak
\section{Details About the Experiments}\label{sec:experiment-details}

\subsection{Details About the Simulated Data}\label{sec:more-details-data}

\subsubsection{Learned IHDP}

The IHDP dataset~\cite{causal-bart-1} are the results of a randomized control trial.
In order to be able to simulate the potential outcomes, we train generative models on it and use these generative models as our data generating process.

First, we train a Forest Diffusion~\cite{forest-diffusion} model to generate the covariates $X$.
We then train two Gaussian Processes: one to predict $Y^1$ from $X$ based on samples from $Y | X, T=1$, and another to predict $Y^0$ from $X$ based on samples from $Y | X, T=0$. The use of Gaussian Processes here allows us to sample from the predictive distribution, introducing variability into the potential outcomes, an essential element of real data.
Finally, a calibrated random forest model is trained to predict the treatment assignments from $X$.

Such data is guaranteed to satisfy ignorability, since the treatment assignment is determined independenly from the potential outcomes, given the covariates $X$. Moreover, since the original data is an RCT, it is expected that the data will roughly resemble an RCT (but not exactly, since the treatment assignment is allowed to vary over the $X$).

\subsubsection{ACIC16}

This is the data from the 2016 edition of the Atlantic Causal Inference Competition~\cite{acic16},
widely used in previous works.

It is synthetic, providing even the potential outcomes, but crafted to resemble real data. Nevertheless, it is guaranteed to satisfy the standard ignorability (i.e., no hidden confounding) and positivity assumptions.

\subsubsection{Confounded ACIC16}

This is built on top of the data from ACIC16. Having generated $X, T, Y^1, Y^0$ as in ACIC16, we now modify the potential outcomes in order to violate positivity and ignorability by making it so that when $T=0$, $Y^1$ is offset by $-20$.
This makes it so that $T$ (which is not part of $X$) becomes a hidden confounder, and positivity is violated (since certain outcomes happen only in the counterfactuals).
Finally, this modification makes it so that the true treatment propensities (accounting for the unobserved confounder) are exactly equal to $T$.

\subsection{Learning reweightings and the $\nu$}

\subsubsection{Learning to reweight}

We consider two options to learn the reweighting functions $w(X)$. One is to simply use the constant reweighting $w \equiv 1$, which is a surprisingly strong option.

Another option is to approximate the ``optimal'' weights given by
$$ w^{a \star} (X) = \frac{\proba[T=a]}{\proba[T=a | X]}. $$
Were we to use these precise weights under no hidden confounding and positivity, we would fully eliminate the gap between the observed and complete distributions, since for any $\phi(X)$,
$$ \expect[ w^{a \star}(X) \cdot \phi(X) | T=a] = \expect[ \phi(X) ]. $$
To approximate $w^{a \star}$, we first train a classifier $\widehat{e}(X)$ to estimate $\proba[T=a | X]$ and estimate the probability $\proba[T=a]$ via its sample mean as $\widehat{p}_{T=a}$, and produce the following intermediate unnormalized approximation $\widetilde{w}^a(X)$ for the weights:
$$ \widetilde{w}^a(X) = \widehat{p}_{T=a} / \widehat{e}(X). $$
To ensure that $\expect[\widehat{w}^a(X) | T=a] = 1$ (as required by our bounds), we normalize $\widetilde{w}^a(X)$ by its sample mean $M$:
$$ \widehat{w}^a(X) = \widetilde{w}^a(X) / \widehat{M}, \qquad \qquad \widehat{M} = \widehat{\expect}[\widetilde{w}^a(X)]. $$

\subsubsection{Learning a $\nu$}

The $\nu(X)$ is a probabilistic classification model for the treatment assignment (possibly the very same one used for weight estimation), with $\nu(X)$ being the predicted probability of $T=a$ given $X$ (i.e., $\nu(X) \approx \proba[T=a | X]$).
Since $\nu$ appears in the bound within a classification loss of the model for predicting T (the Brier score), a better classification model (i.e., a better $\nu$) means a better bound.

\subsection{Details about the figures}

In Figure~\ref{fig:figure-1},
all means were estimated via the standard empirical mean with an adequately high number of samples.
The underlying models used were Random Forests as per Scikit-Learn's implementation~\cite{scikit-learn} with the default hyperparameters.
For the prior work bound of \cite{prior-work}, the estimation of the Wasserstein distance (which is highly nontrivial) was done with \verb|GeomLoss|, which implements the method from~\cite{geomloss-wasserstein}.

Figure~\ref{fig:importance-of-tuning-parameter} was computed on the Confounded ACIC16 dataset on the loss of outcome regression of $Y^1$, with the relevant means being estimated via the standard empirical mean from a sufficiently large number of samples.

In Figure~\ref{fig:figure-3}, none of the models use sample reweighting. An alternate version of the figure including such models can be found in Section~\ref{sec:more-figures}. The loss used in the figure is the mean squared loss.

\pagebreak
\section{More figures}\label{sec:more-figures}

In what follows:

\begin{itemize}
    \item Quantile loss refers to the quantile loss with $\alpha = 0.8$;
    \item 0-1 loss refers to the 0-1 loss for predicting whether the target is above its median value.
\end{itemize}

\begin{figure}[ht]
    \centering
    \includegraphics[width=.99\textwidth]{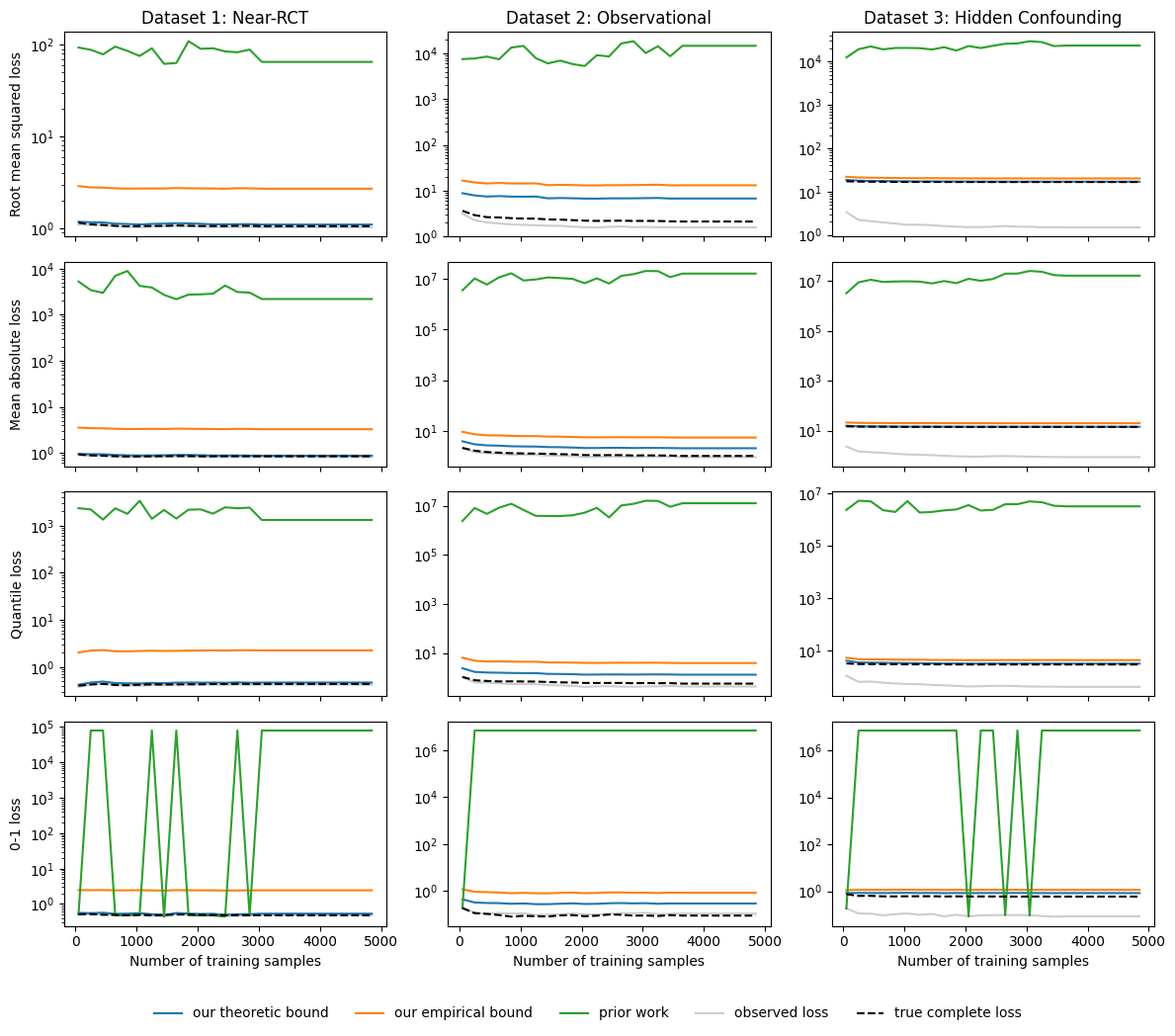}
    \caption{
        \textbf{Alternate version of Figure~\ref{fig:figure-1} for outcome regression of $Y^1$ and including more losses.}
    }
\end{figure}

\begin{figure}[ht]
    \centering
    \includegraphics[width=.99\textwidth]{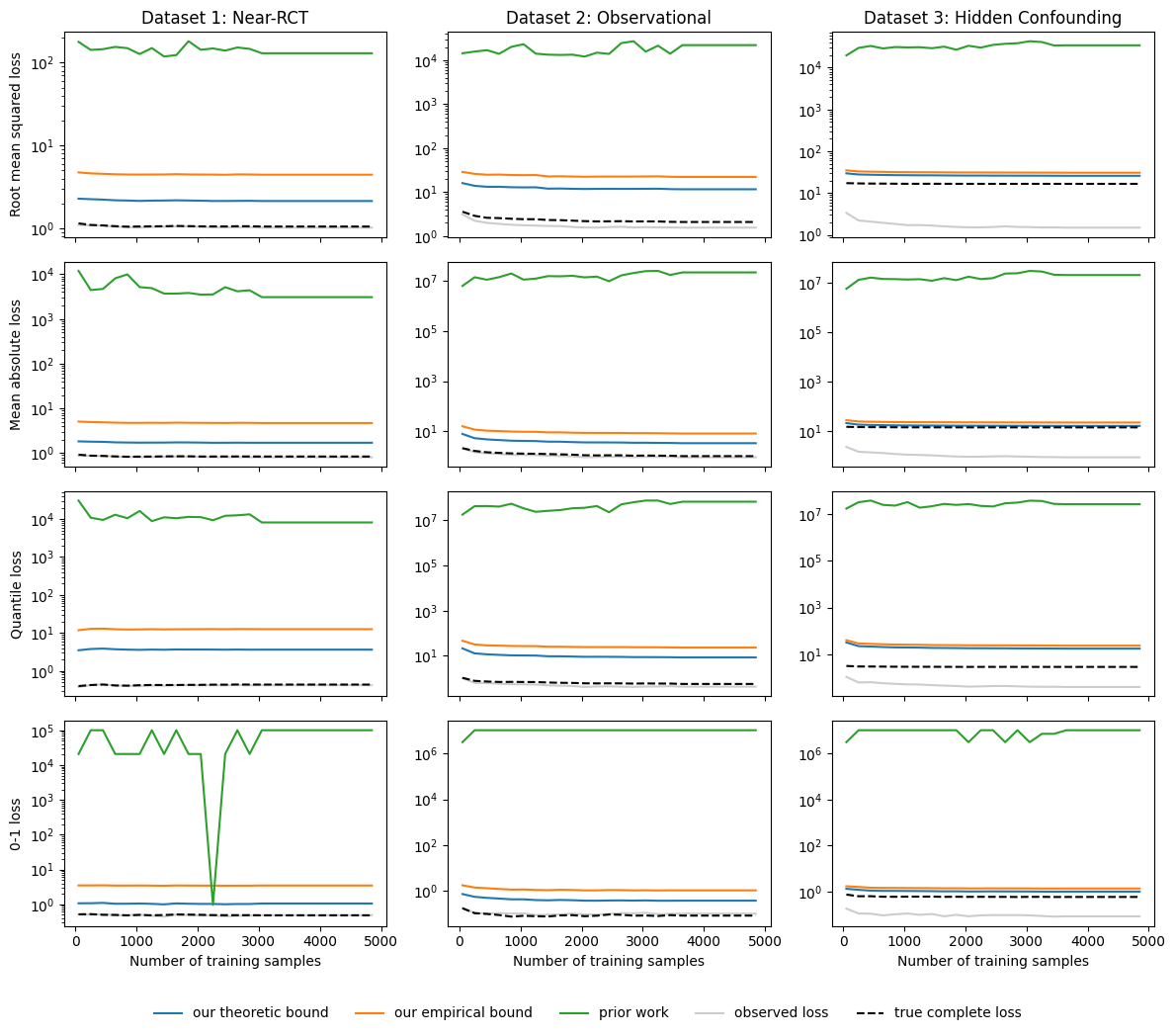}
    \caption{
        \textbf{Alternate version of Figure~\ref{fig:figure-1} for T-learners and including more losses.}
    }
\end{figure}

\begin{figure}[ht]
    \centering
    \subfloat[][Near-RCT dataset.]{\includegraphics[width=.3\textwidth]{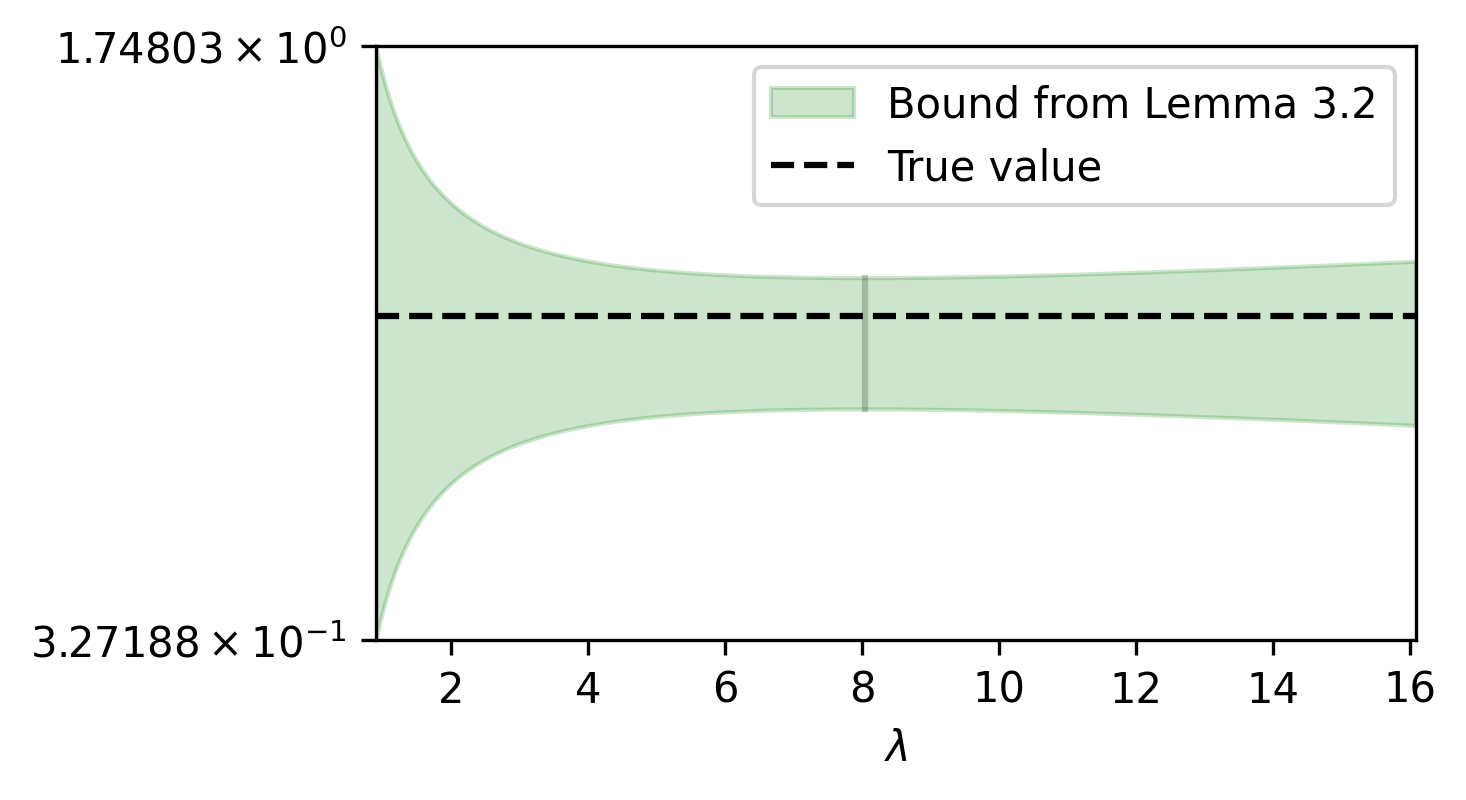}}
    \subfloat[][Observational dataset.]{\includegraphics[width=.3\textwidth]{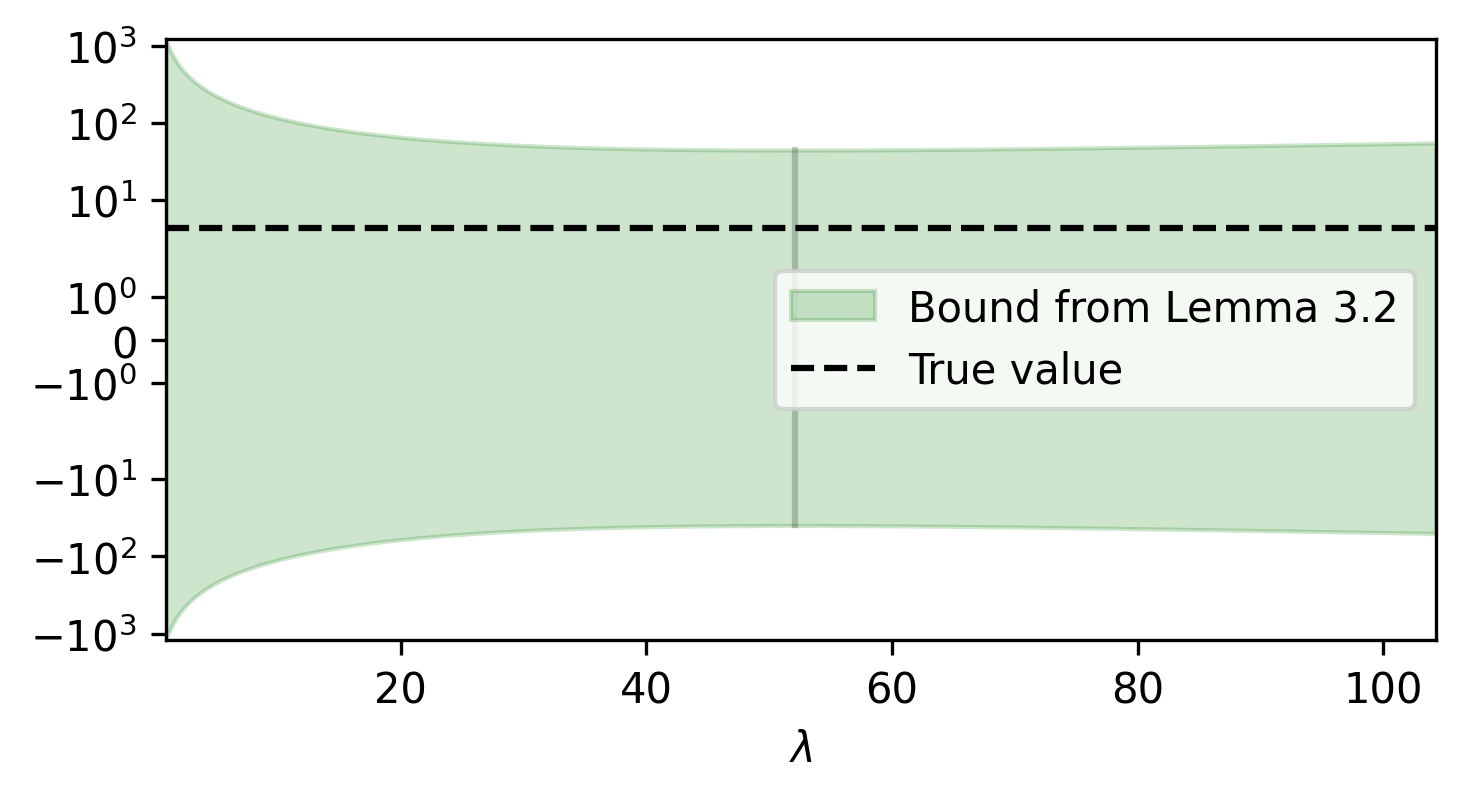}}
    \subfloat[][Hidden Confounding dataset.]{\includegraphics[width=.3\textwidth]{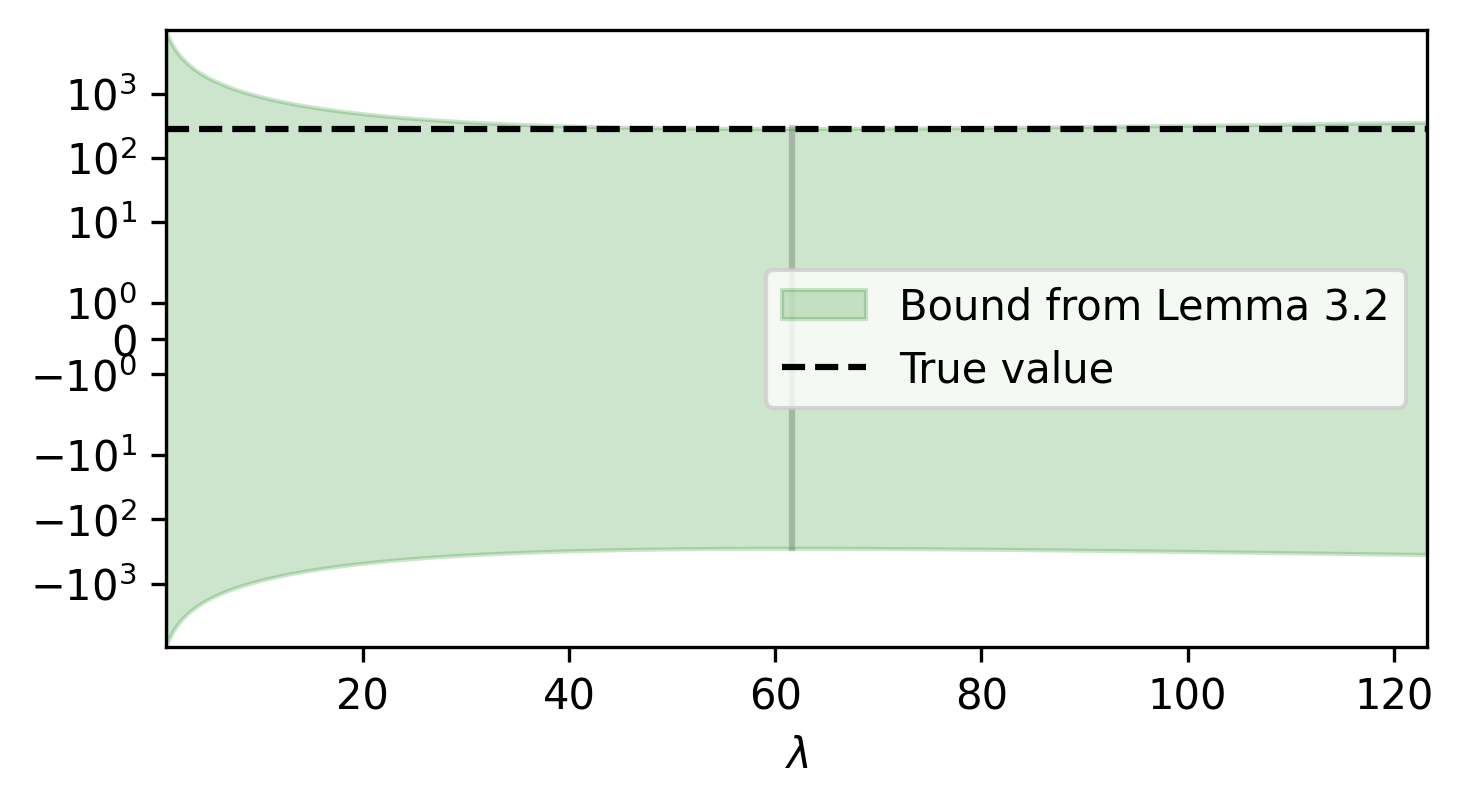}}
    \caption{
        \textbf{Alternate versions of Figure~\ref{fig:importance-of-tuning-parameter}.}
    }
\end{figure}

\begin{figure}[ht]
    \centering
    \includegraphics[width=.99\textwidth]{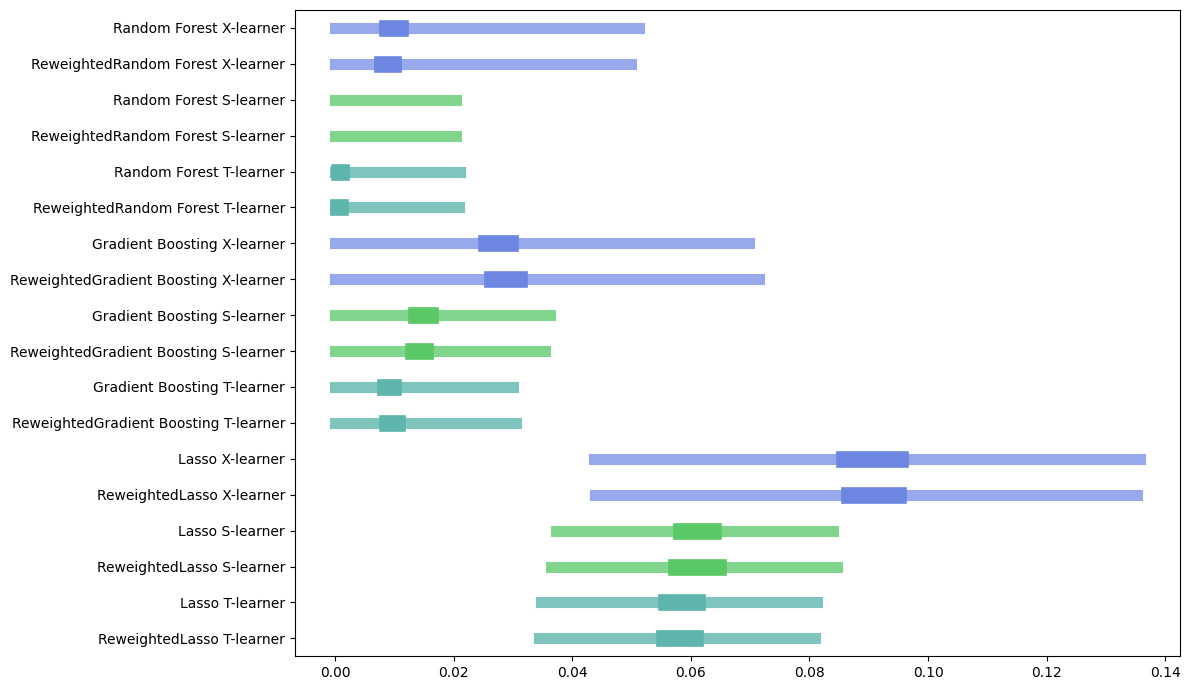}
    \caption{
        \textbf{Alternate version of Figure~\ref{fig:figure-3} which includes models using sample reweighting.}
    }
\end{figure}

\end{document}